\pdfoutput=1
\documentclass{article}

\PassOptionsToPackage{numbers,square}{natbib}
\usepackage[main,final]{neurips_2026}

\usepackage[utf8]{inputenc} 
\usepackage[T1]{fontenc}    
\usepackage{hyperref}       
\usepackage{url}            
\usepackage{booktabs}       
\usepackage{amsfonts}       
\usepackage{nicefrac}       
\usepackage{microtype}      
\usepackage[table]{xcolor}  
\hypersetup{colorlinks=true,citecolor=gray,linkcolor=black,urlcolor=black}

\definecolor{firstbest}{HTML}{C8E0F4}   
\definecolor{secondbest}{HTML}{E5EEF7}  
\newcommand{\first}[1]{\cellcolor{firstbest}\textbf{#1}}
\newcommand{\second}[1]{\cellcolor{secondbest}\underline{#1}}

\usepackage{graphicx}
\usepackage{amsmath}
\usepackage{amssymb}
\usepackage{mathtools}
\usepackage{amsthm}
\usepackage{algorithm}
\usepackage{algorithmic}
\usepackage{multirow}
\usepackage{adjustbox}
\usepackage{dsfont}
\usepackage{wrapfig}
\usepackage{diagbox}
\usepackage{float}
\usepackage{needspace}
\usepackage[capitalize,noabbrev]{cleveref}
\usepackage[font=small]{caption}

\newtheorem{proposition}{Proposition}
\newtheorem{corollary}[proposition]{Corollary}

\title{EDISCO: \underline{E}quivariant \underline{DIS}crete Diffusion for Euclidean \underline{C}ombinatorial \underline{O}ptimization}

\author{%
  Ruogu Chen\thanks{Code is available at \url{https://github.com/ValleyC/EDISCO}.}\\
  Department of Electrical and\\
  Computer Engineering\\
  University of Alberta\\
  Edmonton, AB, Canada T6G 1H9\\
  \texttt{ruogu@ualberta.ca}
  \And
  Jie Han\\
  Department of Electrical and\\
  Computer Engineering\\
  University of Alberta\\
  Edmonton, AB, Canada T6G 1H9\\
  \texttt{jhan8@ualberta.ca}
}

\newlength{\ediscoOriginalTextFloatSep}
\newlength{\ediscoOriginalInTextSep}
\newlength{\ediscoOriginalFloatSep}
\begin{document}
\raggedbottom
\captionsetup{skip=4pt}
\setlength{\parskip}{3pt}
\setlength{\textfloatsep}{6pt plus 2pt minus 2pt}
\setlength{\intextsep}{4pt plus 2pt minus 2pt}
\setlength{\floatsep}{8pt plus 2pt minus 2pt}

\maketitle

\begin{abstract}
Euclidean combinatorial optimization problems (ECOPs), such as the Traveling Salesman Problem (TSP) and Capacitated Vehicle Routing Problem (CVRP), possess inherent symmetries under the two-dimensional Euclidean group E(2), including rotations, reflections, and translations. Existing learning-based methods, including recent diffusion-based methods, rely on data augmentation or regularization to approximate E(2)-equivariance. This paper presents EDISCO, the first discrete diffusion model for ECOPs with exact E(2)-invariant generative distributions over node-index solutions. EDISCO combines an E(2)-equivariant network that predicts edge-selection probabilities with a categorical continuous-time Markov chain over discrete edge variables. Exact posterior sampling supports efficient multi-step inference. The equivariant network gives EDISCO a local geometric inductive bias: edge neighborhoods with the same relative geometry and combinatorial context are represented consistently regardless of absolute position or orientation, making learning more efficient and inference more robust than non-equivariant methods. EDISCO outperforms previous learning-based state-of-the-art solvers on synthetic TSPs from 100 to 10000 nodes and CVRPs from 50 to 2000 customers, while using only 33--50\% of the training instances. Trained only on uniform synthetic data, EDISCO also outperforms competing learning-based baselines under spatial distribution shift and CVRP constraint-tightness shift.
\end{abstract}

\section{Introduction}
\label{sec:intro}

Euclidean combinatorial optimization problems (ECOPs), such as the Traveling Salesman Problem (TSP) and Capacitated Vehicle Routing Problem (CVRP), are core benchmarks for combinatorial optimization (CO) over spatial data. Classical solvers like Concorde and the LKH heuristic have been refined for decades~\citep{applegate2006concorde,helsgaun2017extension}, while learning-based solvers have emerged as fast approximate alternatives with amortized inference and instance-level generalization~\citep{kool2018attention,joshi2022learning}. Diffusion-based methods now lead the non-autoregressive family, pushing optimality gaps below one percent on synthetic benchmarks and scaling ECOP solving from hundreds to ten thousand nodes~\citep{sun2023difusco,li2023t2t,zhao2024disco,li2024fast,wang2025strudico,li2025gensco,wang2026nexco,macoexpander}. The field is now moving beyond chasing the last fraction of a percent on uniform synthetic benchmarks. Practical scenarios require solvers to remain reliable when the spatial distribution, problem size, or constraints shift away from training~\citep{bi2022learning,luo2025rethinking,bi2026car}, and robustness to such shifts is now a central criterion for whether learned solvers transfer beyond synthetic benchmarks.

A key obstacle is that ECOP solvers must learn reusable Euclidean structures, such as local distances, neighborhoods, edge crossings, depot-relative geometry, and capacity-pressure patterns. If a model recognizes these structures only in a particular coordinate frame, it must relearn the same patterns across rotated, reflected, and translated copies. The two-dimensional Euclidean group E(2) captures exactly these irrelevant coordinate dimensions. Rotations, reflections, and translations change the coordinate representation but not the node-index solution~\citep{ouyang2024generalization,bronstein2021geometric}. A reliable ECOP model should therefore learn geometric patterns that are constant on E(2) orbits, so the same local pattern induces the same decision wherever and however it appears in the plane. Existing neural CO methods approximate this principle through augmentation or symmetry regularization~\citep{kim2022sym}. However, these mechanisms do not enforce exact equivariance or invariance by construction~\citep{esteves2018learning,nordenfors2023optimization,brehmer2024does}. A diffusion model defines a generative distribution over solutions rather than a single deterministic prediction. Therefore, for diffusion-based CO, E(2) invariance must hold at the level of the entire distribution rather than only at the predicted output. Achieving this requires every stage of the pipeline (the diffusion model, reverse sampler, and decoder) to preserve the same symmetry. Existing diffusion solvers improve denoising quality, inference speed, local search, and expansion, but leave E(2) symmetry underexplored and do not guarantee exact E(2)-invariance of the generated solution distribution.

This work proposes EDISCO, the first discrete diffusion model for ECOPs with an end-to-end E(2)-invariant generative distribution over node-index solutions. EDISCO introduces an E(2)-equivariant edge-score network coupled with a categorical continuous-time Markov chain (CTMC)~\citep{campbell2022continuous} over discrete edge variables, so the reverse diffusion process preserves geometric symmetry at the distribution level. Exact CTMC posterior sampling adapts multi-step solvers to discrete combinatorial variables, giving a controllable quality--speed trade-off without retraining. To preserve E(2)-invariance through the final decoding step, EDISCO uses an invariant edge-expansion decoder that reads only edge probabilities, pairwise distances, and combinatorial feasibility constraints. The resulting model represents local edge neighborhoods with the same relative geometry consistently across positions and orientations, which makes learning more efficient and inference more robust than non-equivariant methods.

The contributions are as follows.
\begin{enumerate}
\item \emph{Exact distribution-level E(2) invariance for generative ECOP solving.}
EDISCO realizes orbit-shared local geometric structure by construction: scalar edge and message channels are constant on E(2) orbits, so equivalent local edge neighborhoods are represented consistently under E(2) transformations. The resulting generative distribution over node-index solutions is exactly E(2)-invariant. Proposition~\ref{prop:quotient_dimension} formalizes this induced quotient-space reduction.

\item \emph{Efficient continuous-time categorical diffusion for edge-valued CO.}
EDISCO formulates edge selection as a categorical CTMC and uses exact posterior sampling to adapt multi-step solvers to discrete combinatorial variables. This provides flexible quality--speed trade-offs without retraining.

\item \emph{Robust symmetry-preserving ECOP solving under spatial and constraint shifts.}
The same invariance principle is preserved under sparse graph construction and capacity-conditioned CVRP features. As a result, EDISCO transfers from uniform synthetic training to spatial out-of-distribution (OOD) benchmarks and capacity-shifted CVRP while maintaining the same geometric symmetry guarantee.
\end{enumerate}

EDISCO achieves leading optimality-gap results among the considered learning-based solvers on TSPs from 100 to 10000 nodes and on CVRPs from 50 to 2000 customers, while using only 33\% to 50\% of the training instances compared to competing methods. With the model only trained on synthetic data, EDISCO outperforms non-equivariant baselines under spatial distribution shift~\citep{bi2022learning} and achieves the lowest mean optimality gap among the considered learning-based methods on the evaluated TSPLIB instances~\citep{reinelt1991tsplib}. On CVRPs, the same invariant-feature principle supports capacity-conditioned routing under the protocol of~\citet{luo2025rethinking}, where a single pre-trained model spans a range of capacity ratios without coordinate-frame dependence. Additional experiments on the Euclidean Steiner Tree Problem (ESTP) and the Maximum Independent Set (MIS) demonstrate EDISCO's generality.

\setlength{\parskip}{5.5pt}
\section{Related Work}
\label{sec:related}

\paragraph{Neural and diffusion-based ECOP solvers.}
Autoregressive solvers such as AM~\citep{kool2018attention}, POMO~\citep{kwon2020pomo}, and BQ-NCO~\citep{drakulic2023bq} build tours sequentially, while heatmap methods predict pairwise edge structure and recover tours through search or local construction~\citep{joshi2019efficient,fu2021generalize,xin2021neurolkh}. Discrete diffusion~\citep{austin2021structured} extends this to generative modeling of edge variables, refined by DIFUSCO~\citep{sun2023difusco}, T2T~\citep{li2023t2t}, Fast-T2T~\citep{li2024fast}, CADO~\citep{yoon2024cado}, and DISCO~\citep{zhao2024disco}. Expansion solvers add local construction on top of global prediction, including COExpander~\citep{macoexpander} and the diffusion-expansion hybrids StruDiCO~\citep{wang2025strudico}, GenSCO~\citep{li2025gensco}, and NEXCO~\citep{wang2026nexco}. Across these methods, geometric symmetry is approached through orbit-sampling rather than architectural orbit-sharing, providing no exact distribution-level invariance guarantee.

\paragraph{Equivariance and geometric structure.}
Equivariant architectures enforce equivariance by construction~\citep{cohen2016group,bronstein2021geometric,satorras2021n}, applying a single scalar feature map across every E(2)-equivalent input, a property referred to here as \emph{orbit-sharing}. This restricts invariant functions to the quotient space and can improve data efficiency. Augmentation and symmetry regularization approximate the same effect through \emph{orbit-sampling}, without enforcing exact equivariance by construction~\citep{brehmer2024does,esteves2018learning,nordenfors2023optimization}. In neural CO, Sym-NCO~\citep{kim2022sym} samples orbits via a symmetry regularizer, while geometric models exploit Euclidean structure for TSP generalization~\citep{ouyang2024generalization,song2025geometrically}. Symmetry-aware diffusion has also been developed for molecular generation with denoising equivariant under the three-dimensional Euclidean group E(3)~\citep{hoogeboom2022equivariant} and for crystal structure prediction with lattice-permutation and periodic-translation symmetries~\citep{lin2024equivariant}.

\paragraph{Continuous-time categorical diffusion and decoding.}
Continuous-time categorical diffusion admits closed-form transitions and flexible inference schedules~\citep{campbell2022continuous,sun2022score}. Diffusion-based CO solvers recover discrete solutions through greedy extraction, local search, or expansion, with NEXCO~\citep{wang2026nexco} making expansion native to diffusion.

\paragraph{Constraint-aware and realistic routing.}
Neural solvers are increasingly evaluated beyond uniform synthetic TSP, including TSPLIB transfer~\citep{reinelt1991tsplib}, cross-distribution routing~\citep{bi2022learning}, real-world hierarchical TSP~\citep{goh2024hierarchical}, and generalist CO agents~\citep{drakulic2024goal}. For vehicle routing, RouteFinder~\citep{berto2024routefinder}, CaDA~\citep{li2024cada}, \citet{luo2025rethinking}, and CaR~\citep{bi2026car} target constraint-aware or capacity-robust routing.

\section{Method}
\label{sec:method}

\subsection{Problem Formulation}
An ECOP instance has $n$ nodes with coordinates $\{\mathbf{c}_i\}_{i=1}^n \subset \mathbb{R}^2$, and asks for a binary decision matrix $\mathbf{X} \in \{0,1\}^{n \times n}$ that minimizes a distance-based cost $f$ subject to problem-specific constraints $\mathcal{C}$:
{\small
\begin{equation}
    \mathbf{X}^* = \arg\min_{\mathbf{X}} f(\mathbf{X}, \{\mathbf{c}_i\}_{i=1}^n) \quad \text{s.t. } \mathbf{X} \in \mathcal{C}.
\end{equation}
}%
For TSPs, $\mathbf{X}$ is the tour adjacency, $f$ is the total edge length, and $\mathcal{C}$ encodes Hamiltonicity. EDISCO learns the conditional distribution $p(\mathbf{X}\mid\{\mathbf{c}_i\}_{i=1}^n)$ through a generative process~\citep{sun2023difusco,li2023t2t}.

\subsection{Categorical CTMC Diffusion over Edge Variables}

EDISCO models edge selection as a categorical diffusion process directly over the binary variables in $\mathbf{X}$, avoiding continuous relaxations of discrete tour structure and exposing closed-form transition probabilities for training and accelerated sampling~\citep{austin2021structured,campbell2022continuous}.

\paragraph{Forward process}
The CTMC corrupts the clean solution $\mathbf{X}_0$ with instantaneous transition rate
{\small
\begin{equation}
    \mathbf{Q}(t) = \beta(t) \left(\mathds{1}\mathds{1}^T - K\mathbf{I}\right),\label{eq:rate_matrix}
\end{equation}
}%
where $K=2$ for binary edges, $\mathds{1}\in\mathbb{R}^K$ is the all-ones vector, and $\mathbf{I}\in\mathbb{R}^{K\times K}$ is the identity matrix, with linear schedule $\beta(t) = \beta_{\min} + t(\beta_{\max} - \beta_{\min})$, $t \in [0,1]$, $\beta_{\min}=0.1$, $\beta_{\max}=1.5$ (alternatives are evaluated in Appendix~\ref{subsec:noise-schedule-experiment}). The Kolmogorov forward equation~\citep{norris1998markov} admits the closed-form transition probability~\citep{campbell2022continuous}
{\small
\begin{equation}
    P_{ab}(t|s) = \frac{1}{K} + \left(\delta_{ab} - \frac{1}{K}\right) \exp\left(-K \int_s^t \beta(u) du\right),
    \label{eq:transition_prob}
\end{equation}
}%
where $a, b \in \{0, \ldots, K-1\}$ are the indices of the categorical states. Specializing to binary edge variables ($K=2$, $a, b \in \{0, 1\}$),
{\small
\begin{equation}
    P(X_{t,ij} = b \mid X_{0,ij} = a) = P_{ab}(t|0) = \frac{1}{2} + \left(\delta_{ab} - \frac{1}{2}\right) e^{-2 \int_0^t \beta(u) du},
    \label{eq:forward_sampling}
\end{equation}
}%
with the linear schedule giving the analytic integral
{\small
\begin{equation}
    \int_0^t \beta(u) du = \beta_{\min}t + \frac{1}{2}(\beta_{\max} - \beta_{\min})t^2,
    \label{eq:beta_integral}
\end{equation}
}%
so $\mathbf{X}_t$ can be sampled exactly from $\mathbf{X}_0$ at any time without simulating intermediate jumps.

\paragraph{Learned reverse process}
Given $\mathbf{X}_0$, the exact one-step posterior follows from Bayes' rule,
{\small
\begin{equation}
    q(\mathbf{X}_{t-\Delta t}|\mathbf{X}_t, \mathbf{X}_0) = \frac{q(\mathbf{X}_t|\mathbf{X}_{t-\Delta t})\, q(\mathbf{X}_{t-\Delta t}|\mathbf{X}_0)}{q(\mathbf{X}_t|\mathbf{X}_0)}.
    \label{eq:posterior_bayes}
\end{equation}
}%
However, $\mathbf{X}_0$ is unknown at inference. The score network instead predicts $\hat{p}_\theta(\mathbf{X}_0|\mathbf{X}_t,t,\{\mathbf{c}_i\})$, and the learned reverse kernel marginalizes Equation~\ref{eq:posterior_bayes} over this predicted clean-edge distribution. This $x_0$ parameterization remains stable near low-noise timesteps where reconstruction accuracy matters most~\citep{salimans2022progressive}.

\subsection{E(2)-Equivariant Score Network}

Rotations, reflections, and translations of an ECOP coordinate frame leave the node-index solution unchanged, so EDISCO predicts edge variables with an E(n)-equivariant graph neural network (EGNN)~\citep{satorras2021n} that propagates coordinate transformations equivariantly while keeping edge logits invariant. Figure~\ref{fig:egnn-architecture} illustrates EDISCO's equivariant score-network architecture. The score network predicts clean-edge probabilities conditioned on the noisy edge state, diffusion time, and instance coordinates.

\begin{figure*}[t]
    \centering
    \includegraphics[width=\textwidth]{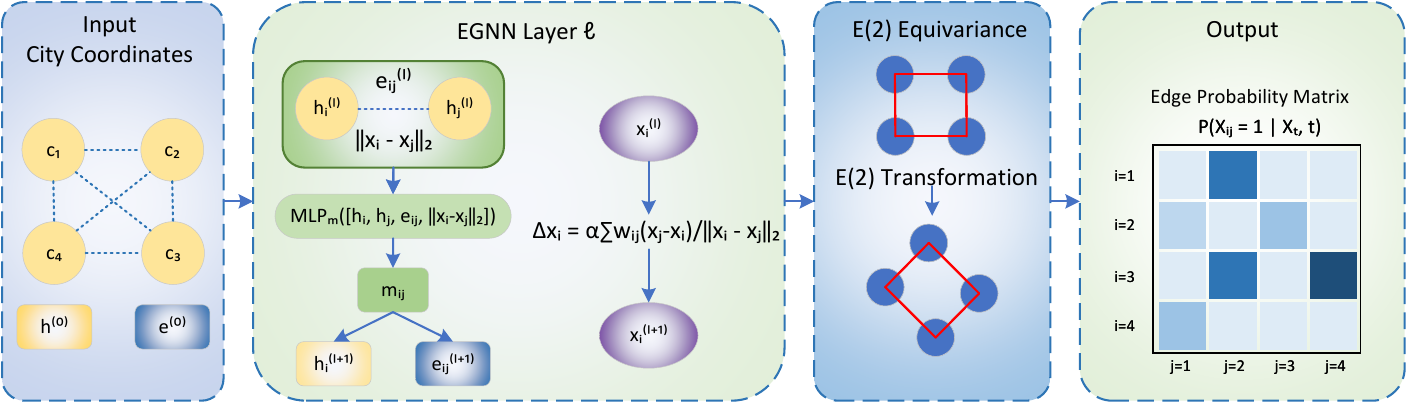}
    \caption{EDISCO's E(n)-equivariant graph neural network (EGNN) architecture. EGNN layers update coordinates equivariantly and output edge probabilities that remain invariant under rotations, reflections, and translations.}
    \label{fig:egnn-architecture}
\end{figure*}

The network maintains invariant node features $\mathbf{h}_i$, invariant edge features $\mathbf{e}_{ij}$, and equivariant coordinates $\mathbf{x}_i$. Pairwise messages, computed by a multilayer perceptron (MLP), use only invariant scalar geometry:
{\small
\begin{equation}
    \mathbf{m}_{ij}^{(\ell)} = \text{MLP}_m\left([\mathbf{h}_i^{(\ell)}, \mathbf{h}_j^{(\ell)}, \mathbf{e}_{ij}^{(\ell)}, \|\mathbf{x}_i^{(\ell)} - \mathbf{x}_j^{(\ell)}\|_2]\right).
    \label{eq:message}
\end{equation}
}%
Coordinate updates are constrained to scalar-weighted relative vectors,
{\small
\begin{equation}
    \Delta\mathbf{x}_i = \alpha \sum_{j \neq i} w_{ij} \cdot \frac{\mathbf{x}_j^{(\ell)} - \mathbf{x}_i^{(\ell)}}{\|\mathbf{x}_j^{(\ell)} - \mathbf{x}_i^{(\ell)}\|_2 + \epsilon},
    \label{eq:coord_update}
\end{equation}
}%
with $w_{ij} = \tanh(\text{MLP}_c(\mathbf{m}_{ij}^{(\ell)})/\tau)$, $\tau=10$, $\alpha=0.1$. Because $w_{ij}$ is built from invariant messages, the update transforms covariantly with the coordinates. Edge and node states then update as
{\small
\begin{equation}
    \mathbf{e}_{ij}^{(\ell+1)} = \text{LN}\big((\mathbf{e}_{ij}^{(\ell)} + \text{MLP}_e([\mathbf{e}_{ij}^{(\ell)}, \mathbf{m}_{ij}^{(\ell)}])) \odot (1 + \text{MLP}_t(\mathbf{t}_{\text{emb}}))\big),
    \label{eq:edge_update}
\end{equation}
}%
{\small
\begin{equation}
    \mathbf{h}_i^{(\ell+1)} = \text{LN}\big(\mathbf{h}_i^{(\ell)} + \text{MLP}_h([\mathbf{h}_i^{(\ell)}, \textstyle\sum_{j \neq i} \mathbf{m}_{ij}^{(\ell)}])\big),
    \label{eq:node_update}
\end{equation}
}%
where $\mathrm{LN}$ denotes layer normalization and $\mathbf{t}_{\text{emb}}$ is a sinusoidal time embedding~\citep{vaswani2017attention}. Because the output head reads only invariant edge states, the predicted distribution over node-index edge variables is invariant under E(2), and Corollary~\ref{cor:orbit_sharing} formalizes the orbit-shared scalar representation that follows. For large instances, EDISCO restricts message passing to an invariant $k$-nearest-neighbor ($k$NN) graph constructed from pairwise distances, with deterministic tie-breaking by node indices, so sparsification preserves the invariance argument.

\begin{proposition}
\label{prop:quotient_dimension}
Let $X = \mathbb{R}^{2n}$ denote the space of ordered two-dimensional coordinates for $n$ nodes, and let $G = \mathrm{E}(2)$ be the Euclidean group acting on $X$ by simultaneous rotation, reflection, and translation of all node positions. Let $X_{\mathrm{gp}} \subset X$ be the subset of configurations in general position (pairwise distinct points with no three collinear), and assume $n \ge 3$. Then:
\begin{enumerate}
    \item[(i)] The quotient space $X_{\mathrm{gp}}/G$ is a smooth manifold of dimension $2n - 3$.
    \item[(ii)] Any $G$-invariant function $F: X_{\mathrm{gp}} \to Y$ factors uniquely through the quotient as $F = \widetilde{F} \circ \pi$, where $\pi: X_{\mathrm{gp}} \to X_{\mathrm{gp}}/G$ is the canonical projection and $\widetilde{F}: X_{\mathrm{gp}}/G \to Y$ is a function on the quotient manifold.
    \item[(iii)] Learning a $G$-invariant function is equivalent to learning a function on the $(2n-3)$-dimensional manifold $X_{\mathrm{gp}}/G$ rather than on $\mathbb{R}^{2n}$.
\end{enumerate}
\end{proposition}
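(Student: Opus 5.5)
Part (i) follows from the quotient manifold theorem: a Lie group acting smoothly, freely and properly on a manifold yields a smooth quotient of dimension $\dim X_{\mathrm{gp}} - \dim G$. The key steps are as follows.

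First, $X_{\mathrm{gp}}$ is an open subset of $\mathbb{R}^{2n}$, hence a smooth $2n$-manifold. It is open because the defining conditions $\mathbf{c}_i \neq \mathbf{c}_j$ and $\det[\mathbf{c}_j-\mathbf{c}_i,\mathbf{c}_k-\mathbf{c}_i]\neq 0$ are finitely many strict inequalities of continuous functions. It is $G$-stable because isometries preserve distinctness and collinearity.

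Second, $G=\mathrm{E}(2)=O(2)\ltimes\mathbb{R}^2$ is a Lie group of dimension $1+2=3$. Its action $(A,b)\cdot(\mathbf{c}_i)_i=(A\mathbf{c}_i+b)_i$ is smooth, since it is polynomial in the entries.

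Third, the action is free. If $A\mathbf{c}_i+b=\mathbf{c}_i$ for all $i$, the affine map fixes three non-collinear points. Then $A$ fixes the two linearly independent vectors $\mathbf{c}_2-\mathbf{c}_1$ and $\mathbf{c}_3-\mathbf{c}_1$, so $A=I$, and then $b=0$. This is where $n\ge3$ and the no-three-collinear assumption are used. Distinctness alone would not suffice, because a reflection fixes collinear configurations.

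Fourth, I expect properness to be the main technical point. Take sequences $x_k\to x$ in $X_{\mathrm{gp}}$ and $g_k=(A_k,b_k)$ with $g_k x_k\to y$ in $X_{\mathrm{gp}}$. Since $O(2)$ is compact, pass to a subsequence with $A_k\to A$. Then $b_k=(g_kx_k)_1-A_k(x_k)_1$ converges as well. Hence $g_k$ has a convergent subsequence, which is sequential properness. An alternative route is to note that $\mathbb{R}^2\rtimes O(2)$ acts properly on $\mathbb{R}^{2n}$ because it acts by isometries with closed orbit map. The compactness argument above is more direct. With freeness and properness established, the quotient manifold theorem gives a smooth structure on $X_{\mathrm{gp}}/G$ of dimension $2n-3$ in which $\pi$ is a smooth submersion.

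Part (ii) is the universal property of the quotient set. Define $\widetilde F([x])=F(x)$. This is well defined exactly because $F$ is constant on orbits. The identity $F=\widetilde F\circ\pi$ holds by construction. Uniqueness follows because $\pi$ is surjective. If $F$ is continuous or smooth, so is $\widetilde F$, because $\pi$ is a quotient map, respectively a surjective submersion with local sections.

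Part (iii) is then a restatement of (ii). The correspondence $F\leftrightarrow\widetilde F$ is a bijection between $G$-invariant functions on $X_{\mathrm{gp}}$ and functions on the $(2n-3)$-manifold $X_{\mathrm{gp}}/G$, and it preserves regularity in both directions. So learning $F$ is equivalent to learning $\widetilde F$. I would present (iii) as this interpretive equivalence rather than as a quantitative sample-complexity claim.
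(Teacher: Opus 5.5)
Your proof is correct and follows essentially the same route as the paper: you establish freeness and properness and invoke the quotient manifold theorem for (i), you use the universal property of the orbit quotient (well-definedness from invariance, uniqueness from surjectivity of $\pi$) for (ii), and you read (iii) off as the resulting bijection. The differences are refinements rather than a new route. Your freeness argument, in which the linear part fixes the basis $\mathbf{c}_2-\mathbf{c}_1,\ \mathbf{c}_3-\mathbf{c}_1$, streamlines the paper's rotation/reflection case split. You also make explicit points the paper leaves implicit: the openness and $G$-stability of $X_{\mathrm{gp}}$, and the preservation of regularity by $F\leftrightarrow\widetilde F$.
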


\begin{corollary}[Orbit-shared scalar representations]
\label{cor:orbit_sharing}
Let $\mathbf{x} = (\mathbf{c}_1, \ldots, \mathbf{c}_n)$ denote the input configuration and let $g \in \mathrm{E}(2)$ act as $g \cdot \mathbf{x} = (g \cdot \mathbf{c}_1, \ldots, g \cdot \mathbf{c}_n)$. Then at every EGNN layer $\ell$, the scalar features satisfy
\[
\mathbf{m}_{ij}^{(\ell)}(g \cdot \mathbf{x}) = \mathbf{m}_{ij}^{(\ell)}(\mathbf{x}), \quad \mathbf{e}_{ij}^{(\ell)}(g \cdot \mathbf{x}) = \mathbf{e}_{ij}^{(\ell)}(\mathbf{x}), \quad \mathbf{h}_i^{(\ell)}(g \cdot \mathbf{x}) = \mathbf{h}_i^{(\ell)}(\mathbf{x}),
\]
while the coordinate features transform equivariantly: $\mathbf{x}_i^{(\ell)}(g \cdot \mathbf{x}) = g \cdot \mathbf{x}_i^{(\ell)}(\mathbf{x})$. Consequently, any input configuration $\mathbf{x}$ and every element of its E(2)-orbit produce the same scalar feature field at every layer, so a single learned scalar feature map applies uniformly across the orbit.
\end{corollary}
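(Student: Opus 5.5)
The proof goes by induction on the layer index $\ell$, establishing the invariance of $\mathbf{h}^{(\ell)}, \mathbf{e}^{(\ell)}$ and the equivariance of $\mathbf{x}^{(\ell)}$ together, and then deriving the invariance of $\mathbf{m}^{(\ell)}$ inside the inductive step. We write $g\cdot\mathbf{c} = R\mathbf{c}+\mathbf{b}$ with $R\in\mathrm{O}(2)$ and $\mathbf{b}\in\mathbb{R}^2$.

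\textbf{Base case.} Set $\mathbf{x}_i^{(0)}=\mathbf{c}_i$, so $\mathbf{x}^{(0)}_i(g\cdot\mathbf{x}) = g\cdot\mathbf{x}^{(0)}_i(\mathbf{x})$ holds trivially. The initial scalars $\mathbf{h}_i^{(0)}$ and $\mathbf{e}_{ij}^{(0)}$ are built only from invariant data: the noisy edge state $\mathbf{X}_t$, the time $t$, pairwise distances $\|\mathbf{c}_i-\mathbf{c}_j\|$, and, for CVRP, demands and capacity features. Hence they are unchanged under $g$. If sparsification is used, the $k$NN graph depends only on pairwise distances with index-based tie-breaking, so its edge set is also invariant. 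All sums below therefore range over the same neighbor sets for $\mathbf{x}$ and $g\cdot\mathbf{x}$.

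\textbf{Inductive step.} Assume the claim holds at layer $\ell$.
\begin{enumerate}
\item[(a)] The relative vectors transform as $\mathbf{x}_i^{(\ell)}-\mathbf{x}_j^{(\ell)}\mapsto R(\mathbf{x}_i^{(\ell)}-\mathbf{x}_j^{(\ell)})$, because the translation cancels. Since $R$ is orthogonal, $\|\cdot\|_2$ is preserved. Every argument of $\text{MLP}_m$ in \eqref{eq:message} is therefore invariant, so $\mathbf{m}^{(\ell)}_{ij}$ is invariant.
\item[(b)] It follows that $w_{ij}$ is invariant. The unit-direction term in \eqref{eq:coord_update} picks up a factor of $R$; the denominator is an invariant norm plus $\epsilon$, so it is unchanged. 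Hence $\Delta\mathbf{x}_i\mapsto R\,\Delta\mathbf{x}_i$, and $\mathbf{x}_i^{(\ell+1)}=\mathbf{x}_i^{(\ell)}+\Delta\mathbf{x}_i \mapsto R\mathbf{x}_i^{(\ell)}+\mathbf{b}+R\Delta\mathbf{x}_i = g\cdot\mathbf{x}_i^{(\ell+1)}$. The essential point is that the translation is carried only by $\mathbf{x}_i^{(\ell)}$, while the update is a linear combination of differences and picks up only $R$.
\item[(c)] In \eqref{eq:edge_update} and \eqref{eq:node_update}, the new $\mathbf{e}$ and $\mathbf{h}$ are compositions of MLPs, sums over invariant index sets, layer normalization, and the time embedding. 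All of their inputs are invariant, so the outputs are invariant.
\end{enumerate}
This closes the induction. The final statement of the corollary is a restatement of what has been shown: every point of the orbit $\{g\cdot\mathbf{x}\}$ yields identical scalar fields, so one scalar map serves the whole orbit. Equivalently, by Proposition~\ref{prop:quotient_dimension}(ii), each scalar feature factors through the quotient $\pi$.

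\textbf{Main obstacle.} The step most likely to break is the translation bookkeeping in (b). It holds only because \eqref{eq:coord_update} uses differences $\mathbf{x}_j-\mathbf{x}_i$ rather than absolute positions, and because reflections ($\det R=-1$) still preserve norms. A second risk is making sure that no absolute-coordinate input enters $\mathbf{h}^{(0)}$ or $\mathbf{e}^{(0)}$; this has to be stated as the architectural assumption for the base case. The rest is routine composition.
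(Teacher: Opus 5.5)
Your proposal is correct and follows essentially the same route as the paper's proof, which takes Step~1 of Appendix~B.2 and runs the same joint induction on $\ell$. Invariant initial scalars and layer-$\ell$ distances give invariant messages, the translation-cancelling differences make $\Delta\mathbf{x}_i$ transform by $R$, invariant $\mathbf{e}$ and $\mathbf{h}$ follow, and orbit-constancy is read off directly. One small caveat: the paper invokes Proposition~\ref{prop:quotient_dimension}(ii) only for configurations in general position, where the smooth quotient is defined, so your closing ``equivalently'' should carry that restriction (set-theoretic factorization through $X/G$ holds without it).
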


The quotient dimension $2n-3$ in Proposition~\ref{prop:quotient_dimension} reflects E(2)'s three continuous degrees of freedom, namely two translations and one rotation, with reflections contributing only a discrete subgroup. The full proof and the derivation that this invariance is preserved through diffusion and decoding are in Appendices~\ref{subsec:proposition} and~\ref{subsec:equivarince}.

\paragraph{Capacity conditioning for CVRPs.}
\label{par:conditional_diffusion}
For CVRPs, capacity information enters the score network only through E(2)-invariant scalar channels. With vehicle capacity $Q$, default capacity $Q_{\mathrm{default}}$, ratio $\lambda = Q/Q_{\mathrm{default}}$, and demands $d_i$, each node receives $d_i/Q$, each edge receives $(d_i+d_j)/Q$ and $|d_i-d_j|/Q$, and the global input $[\log \lambda, \sum_i d_i/Q]$ is embedded as $\mathbf{z}_c$. This embedding modulates only scalar message channels via FiLM~\citep{perez2018film},
{\small
\begin{equation}
    \tilde{\mathbf{m}}_{ij}^{(\ell)} = \boldsymbol{\gamma}_\ell(\mathbf{z}_c) \odot \mathbf{m}_{ij}^{(\ell)} + \boldsymbol{\beta}_\ell(\mathbf{z}_c),
    \label{eq:film_modulation}
\end{equation}
}%
with $\boldsymbol{\gamma}_\ell$ and $\boldsymbol{\beta}_\ell$ produced by lightweight MLPs initialized near identity. Because every conditioning input is invariant and the coordinate update of Equation~\ref{eq:coord_update} is unchanged, the conditional distribution remains E(2)-invariant for every fixed capacity. Section~\ref{subsec:cvrp_constraint_shift} evaluates this variant under constraint shift.

Training minimizes cross-entropy on $x_0$ prediction with signal-to-noise ratio (SNR)-motivated weighting (Algorithm~\ref{alg:training}). Here, $s_\theta$ is the score network with parameter $\theta$, $\mathcal{L}$ is the weighted loss, and $\mathrm{CE}$ in Algorithm~\ref{alg:training} denotes edge-wise cross-entropy. AdamW updates $\theta$ using $\nabla_\theta\mathcal{L}$, the loss gradient. For CVRPs, each sampled instance supplies demands $d_i$ and capacity $Q$ to the decoder. Capacity conditioning $\mathbf{z}_c$ and these inputs are omitted for TSPs. At each step a random time $t$ and clean instances are sampled, the noisy state is drawn from the closed-form forward process, and the $(1-\sqrt{t})$ weighting emphasizes low-noise timesteps where reconstruction targets are reliable~\citep{kingma2021variational,hang2023efficient}. Full training configuration is in Appendix~\ref{subsec:training_config}.

\subsection{Multi-Step CTMC Sampling}
\label{subsec:multi_step_sampling}

The continuous-time formulation allows the inference schedule to be changed without retraining. EDISCO adapts multi-step solvers such as PNDM~\citep{liu2022pseudo} and DEIS~\citep{zhang2022fast} to categorical CTMCs by separating prediction smoothing from discrete posterior sampling. At each reverse step, the score network predicts $\hat{p}_\theta(\mathbf{X}_0\mid\mathbf{X}_t,t,\{\mathbf{c}_i\})$, and Adams-Bashforth (AB)-style smoothing combines recent predictions into $\tilde{p}$. Smoothing starts with the current prediction alone, uses a lower order until its history fills, and clamps probabilities to $[0,1]$. For each edge $(i,j)$, sampling then uses the explicit categorical kernel
{\small
\begin{equation}
    q(x_{s,ij} \mid x_{t,ij}, \tilde{p}_{ij}) = \sum_{x_0 \in \{0,1\}} \frac{P(x_{t,ij} \mid x_{s,ij})\, P(x_{s,ij} \mid x_0)}{P(x_{t,ij} \mid x_0)}\, \tilde{p}_{ij}(x_0 \mid x_{t,ij}),
    \label{eq:exact_posterior}
\end{equation}
}%
where $s<t$, $\tilde{p}_{ij}(x_0\mid x_{t,ij})$ is the smoothed probability of clean state $x_0$, and the transition probabilities $P(\cdot\mid\cdot)$ follow from Equation~\ref{eq:transition_prob}. Edges are sampled independently from this kernel to form $\mathbf{X}_s$. This preserves the categorical transition structure while giving a controllable quality--speed trade-off. Appendix~\ref{subsec:ctmc_solvers} details prediction smoothing and posterior marginalization, and Appendix~\ref{subsec:solver_eval} evaluates solver choices.

\subsection{Native Edge Expansion (NEE)}

Denoising outputs an edge-probability matrix $P \in [0,1]^{n \times n}$ with $P_{ij}=p(X_{ij}=1)$. NEE converts this invariant score field into a feasible tour without external local-search refinement. For TSPs, undirected candidates are ranked by
{\small
\begin{equation}
    r_{ij} = \frac{P_{ij}+P_{ji}}{2(d_{ij}+\epsilon)},
    \qquad d_{ij}=\|\mathbf{c}_i-\mathbf{c}_j\|_2,
    \label{eq:nee_score}
\end{equation}
}%
and the highest-ranked feasible edge is added iteratively. The feasibility projection maintains node degrees, a partial edge set $S_t$, and a union-find structure, accepting an edge only when both endpoints have degrees below two and the edge does not close a premature subtour, with the final Hamiltonian closure as the only allowed cycle-closing step. For CVRPs, customer nodes keep degree two while the depot accepts one edge pair per active route. The projection tracks per-route accumulated demand, and a candidate edge that would push any active route's demand above $Q$ is rejected. The matching depot-return edge closes that partial route before the next candidate is evaluated. All inputs used by the decoder (pairwise distances, edge probabilities, capacity-derived features, partial edge sets, union-find connectivity) and all feasibility checks are invariant under E(2), so the selected node-index edge set is identical under any $g \in \mathrm{E}(2)$, with the full preservation argument in Appendix~\ref{subsec:equivarince}.

Algorithm~\ref{alg:inference} initializes $\mathbf{X}_{t_0}$ from the CTMC equilibrium, with the residual mismatch absorbed within the first denoising steps. The schedule $\{t_k\}_{k=0}^{T}$ contains $T$ reverse steps from $t_0=1$ to $t_T=0$, followed by a final clean-state prediction at $t_T=0$. Greedy extraction and 2-opt local search~\citep{croes1958method} remain reference baselines, but NEE is the native score-to-tour decoder whose invariance follows from its inputs and feasibility projection.

\begin{figure}[!hbp]
\centering
\begin{minipage}[t]{0.48\textwidth}
\vspace{0pt}
\begin{algorithm}[H]
\caption{EDISCO Training}
\label{alg:training}
\small
\begin{algorithmic}[1]
\REQUIRE Dataset $\mathcal{D}$, score network $s_\theta$
\REQUIRE CVRP only: $d_i$, $Q$, $Q_{\mathrm{default}}$
\WHILE{not converged}
  \STATE $(\mathbf{X}_0, \{\mathbf{c}_i\}) \sim \mathcal{D}$, \,\, $t \sim \mathcal{U}(0,1)$
  \STATE $\mathbf{z}_c \gets \mathrm{MLP}([\log \lambda, \textstyle\sum_i d_i / Q])$ if CVRP
  \STATE $\mathbf{X}_t \sim q(\mathbf{X}_t | \mathbf{X}_0)$ (Eq.~\ref{eq:forward_sampling})
  \STATE $\hat{\mathbf{X}}_0 \gets s_\theta(\mathbf{X}_t, t, \{\mathbf{c}_i\}, \mathbf{z}_c)$
  \STATE $\mathcal{L} \gets (1 - \sqrt{t}) \cdot \mathrm{CE}(\hat{\mathbf{X}}_0, \mathbf{X}_0)$
  \STATE $\theta \gets \mathrm{AdamW}(\theta, \nabla_\theta \mathcal{L})$
\ENDWHILE
\end{algorithmic}
\end{algorithm}
\end{minipage}
\hfill
\begin{minipage}[t]{0.48\textwidth}
\vspace{0pt}
\begin{algorithm}[H]
\caption{EDISCO Inference}
\label{alg:inference}
\small
\begin{algorithmic}[1]
\REQUIRE $\{\mathbf{c}_i\}$, trained score network $s_\theta$
\REQUIRE Schedule $\{t_k\}_{k=0}^{T}$
\REQUIRE CVRP only: $d_i$, $Q$, $Q_{\mathrm{default}}$
\STATE $\mathbf{X}_{t_0} \sim \mathrm{Uniform}(\{0,1\}^{n \times n})$
\STATE $\mathbf{z}_c \gets \mathrm{MLP}([\log \lambda, \textstyle\sum_i d_i / Q])$ if CVRP
\FOR{$k = 0$ to $T-1$}
  \STATE $\hat{\mathbf{X}}_0 \gets s_\theta(\mathbf{X}_{t_k}, t_k, \{\mathbf{c}_i\}, \mathbf{z}_c)$
  \STATE $\tilde{p} \gets \mathrm{ABStyleSmooth}(\{\hat{\mathbf{X}}_0^{(t_j)}\}_{j \le k})$
  \STATE $\mathbf{X}_{t_{k+1}} \sim q(\cdot | \mathbf{X}_{t_k}, \tilde{p})$ (Eq.~\ref{eq:exact_posterior})
\ENDFOR
\STATE $\hat{\mathbf{X}}_0 \gets s_\theta(\mathbf{X}_{t_T}, t_T, \{\mathbf{c}_i\}, \mathbf{z}_c)$
\STATE \textbf{return} $\mathrm{NEE}(\hat{\mathbf{X}}_0, \{\mathbf{c}_i\}, \{d_i\}, Q)$
\end{algorithmic}
\end{algorithm}
\end{minipage}
\end{figure}

\section{Experiments}
\label{sec:experiments}

We use $N$ for the number of TSP cities or CVRP customers. Main results cover TSP-100/500/1000 and CVRP-50/100/200/500, with TSP-10000 and CVRP-1000/2000 extended in Appendices~\ref{subsec:tsp10000} and~\ref{app:partition_diffusion}. ESTP (Appendix~\ref{app:steiner}) and MIS (Appendix~\ref{app:mis}) test Euclidean tree construction and engine generality on a non-Euclidean problem.

\subsection{Setup}

\paragraph{Datasets.} TSPs follow~\citet{kool2018attention} with coordinates uniform on $[0,1]^2$, Concorde~\citep{applegate2006concorde} labels at TSP-100, LKH-3~\citep{helsgaun2017extension} labels at TSP-500/1000/10000, and the standard 1{,}280-instance and 128-instance test sets from~\citet{kool2018attention} and~\citet{fu2021generalize}. EDISCO uses only 33--50\% of the training data of competing diffusion methods at every TSP and CVRP scale (Appendices~\ref{subsec:model_efficiency} and~\ref{subsec:training_config}). CVRP sizes, capacities, and test-set counts follow~\citet{macoexpander} (Appendix H.2). We use integer demands $\{1,\ldots,9\}$, capacities $Q\in\{40,50,80,100\}$ for $N\in\{50,100,200,500\}$, HGS-CVRP labels~\citep{vidal2022hybrid}, and 10K/10K/100/100 test instances at $N\in\{50,100,200,500\}$, with gaps reported against HGS-CVRP since exact optima are unavailable. The capacity-shift evaluation (Section~\ref{subsec:cvrp_constraint_shift}) follows~\citet{luo2025rethinking}.

\paragraph{Graph representation.} Dense adjacency is used for $N \le 100$ and $k$NN sparsification~\citep{sun2023difusco} at larger scales. Because neighbor sets depend only on pairwise Euclidean distances, sparsification itself is E(2)-invariant, so the invariance argument carries over unchanged. Memory drops from $O(n^2)$ to $O(kn)$, enabling training and inference on a single 48\,GB GPU at all reported scales (Appendix~\ref{subsec:training_resources}).

\subsection{TSP Results}
\label{subsec:tsp_main}

\Needspace{18\baselineskip}
\begin{wrapfigure}{l}{0.3\textwidth}
\centering
\includegraphics[width=\linewidth]{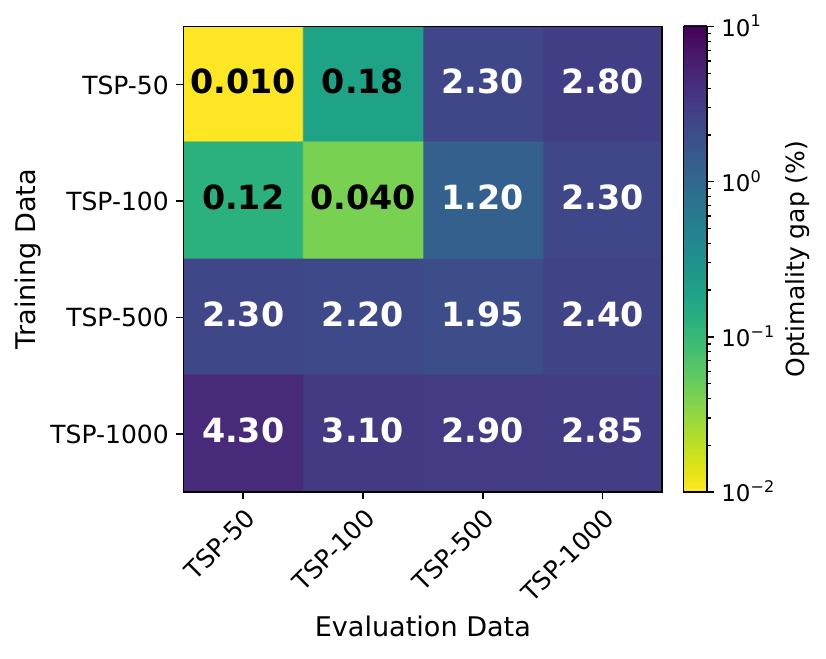}
\caption{Cross-size generalization for TSP models trained at one scale and evaluated across all scales under greedy decoding. Values are optimality gaps (\%).}
\label{fig:generalization}
\end{wrapfigure}

Table~\ref{tab:tsp_main} reports TSPs with $N \in \{100, 500, 1000\}$ with canonical configurations following~\citet{li2025gensco}, with each method's decoding configuration listed in the table. EDISCO uses 5-step DEIS-2~\citep{zhang2022fast} with NEE. TSP-10000, the 50-step PNDM~\citep{liu2022pseudo} quality variant, sampling-based decoding, and TSPLIB are in Appendices~\ref{subsec:tsp10000} and~\ref{subsec:TSPLIB}.

\begin{table*}[!p]
\centering
\caption{TSP results at $N \in \{100, 500, 1000\}$. Gap is relative to Concorde (TSP-100) or LKH-3 (TSP-500/1000). Time is per-instance wall-clock at batch size 1, single-thread~\citep{wang2025strudico,wang2026nexco}, with hardware in Appendix~\ref{subsec:hardware}. Throughout the tables, $\downarrow$ indicates lower is better and $\uparrow$ indicates higher is better. \first{Best} and \second{second-best} per Gap and Time among modern diffusion- and expansion-based solvers (COExpander and Generative Neural Solvers). RL: reinforcement learning, SL: supervised learning, G: greedy, BS: beam search, 2O: 2-opt, S: sampling, AE/NAE/NEE: adaptive / native adaptive / native edge expansion.}
\label{tab:tsp_main}
\scriptsize
\setlength{\tabcolsep}{3pt}
\adjustbox{max width=\textwidth}{%
\begin{tabular}{llccccccccc}
\toprule
\multirow{2}{*}{\textbf{Method}} & \multirow{2}{*}{\textbf{Type}} & \multicolumn{3}{c}{\textbf{TSP-100}} & \multicolumn{3}{c}{\textbf{TSP-500}} & \multicolumn{3}{c}{\textbf{TSP-1000}} \\
\cmidrule(lr){3-5} \cmidrule(lr){6-8} \cmidrule(lr){9-11}
& & Length$\downarrow$ & Gap$\downarrow$ & Time$\downarrow$ & Length$\downarrow$ & Gap$\downarrow$ & Time$\downarrow$ & Length$\downarrow$ & Gap$\downarrow$ & Time$\downarrow$ \\
\midrule
\multicolumn{11}{l}{\textit{Reference Solvers}} \\
Concorde~\citep{applegate2006concorde}    & Exact     & 7.76         & 0.00\%             & 0.23 s    & 16.55          & 0.00\%             & 18.65 s   & 23.12          & 0.00\%             & 84.41 s \\
LKH-3~\citep{helsgaun2017extension}       & Heuristic & 7.76         & 0.00\%             & 0.12 s    & 16.55          & 0.00\%             & 4.64 s    & 23.12          & 0.00\%             & 17.54 s \\
\midrule
\multicolumn{11}{l}{\textit{Non-Generative Neural Solvers}} \\
AM~\citep{kool2018attention}              & RL+BS      & 7.95         & 2.48\%             & 0.61 s    & 19.53          & 18.03\%            & 1.31 s    & 29.90          & 29.24\%            & 5.91 s  \\
POMO~\citep{kwon2020pomo}                 & RL+$\times$8 & 7.77       & 0.13\%             & 0.006 s   & 20.22          & 22.19\%            & 0.47 s    & 32.55          & 40.57\%            & 3.75 s  \\
GCN~\citep{joshi2019efficient}            & SL+BS      & 8.41         & 8.38\%             & 0.28 s    & 30.37          & 83.55\%            & 17.81 s   & 51.26          & 121.73\%           & 24.23 s \\
DIMES~\citep{qiu2022dimes}                & RL+G       & 8.01         & 3.23\%             & 0.06 s    & 17.17          & 3.74\%             & 0.45 s    & 24.79          & 7.22\%             & 1.12 s  \\
BQ-NCO~\citep{drakulic2023bq}             & SL+BS      & 7.76         & 0.01\%             & 0.19 s    & 16.64          & 0.55\%             & 7.03 s    & 23.47          & 1.38\%             & 57.81 s \\
GLOP~\citep{ye2024glop}                   & RL+G       & --           & --                 & --        & 17.10          & 3.32\%             & 0.17 s    & 23.84          & 3.11\%             & 1.40 s  \\
COExpander~\citep{macoexpander}           & SL+AE+2O   & 7.76         & 0.04\%             & 0.18 s    & 16.63          & 0.49\%             & 0.61 s    & 23.34          & 0.95\%             & 2.26 s  \\
\midrule
\multicolumn{11}{l}{\textit{Generative Neural Solvers}} \\
DIFUSCO~\citep{sun2023difusco}            & SL+G+2O    & 7.78         & 0.22\%             & 0.59 s    & 16.81          & 1.55\%             & 1.43 s    & 23.55          & 1.86\%             & 5.04 s  \\
T2T~\citep{li2023t2t}                     & SL+G+2O    & 7.76         & 0.06\%             & 1.13 s    & 16.68          & 0.78\%             & 3.05 s    & 23.41          & 1.25\%             & 9.23 s  \\
Fast-T2T~\citep{li2024fast}               & SL+G+2O    & 7.76         & 0.03\%             & 1.10 s    & 16.61          & 0.39\%             & 1.39 s    & 23.25          & 0.58\%             & 5.78 s  \\
CADO~\citep{yoon2024cado}                 & SL+RL+G+2O & 7.76         & \second{0.01\%}    & 0.75 s    & 16.59          & 0.24\%             & 2.50 s    & 23.28          & 0.69\%             & 8.50 s  \\
DISCO~\citep{zhao2024disco}               & SL+G       & 7.80         & 0.58\%             & 0.50 s    & --             & --                 & --        & --             & --                 & --      \\
StruDiCO~\citep{wang2025strudico}         & SL+G+2O    & 7.76         & 0.024\%            & 0.083 s   & 16.60          & 0.326\%            & 0.674 s   & 23.24          & \second{0.535\%}   & 2.614 s \\
GenSCO~\citep{li2025gensco}               & SL (C=10)  & 7.76         & \first{0.000\%}    & 0.50 s    & 16.55          & \second{0.023\%}   & 1.10 s    & 23.25          & 0.559\%            & 3.50 s  \\
NEXCO~\citep{wang2026nexco}               & SL+NAE     & 7.76         & 0.03\%             & \second{0.08 s} & 16.59    & 0.28\%             & \second{0.33 s} & 23.26    & 0.63\%             & \second{1.31 s} \\
\textbf{EDISCO (this work)}                    & SL+NEE     & 7.76         & \first{0.000\%}    & \first{0.075 s} & 16.55    & \first{0.018\%}    & \first{0.28 s} & 23.24    & \first{0.52\%}     & \first{1.28 s} \\
\bottomrule
\end{tabular}
}
\end{table*}

\paragraph{TSP across scales.} EDISCO leads on Gap at all three scales and is the fastest among modern diffusion- and expansion-based solvers. Faster non-generative baselines such as POMO exist but at much higher Gap (above $22\%$ at TSP-500/1000). EDISCO also leads both Gap and Time among the evaluated learning-based solvers with sub-$2\%$ Gap at TSP-10000 (Appendix~\ref{subsec:tsp10000}), with compute and runtime conventions in Appendix~\ref{subsec:recent_competitors_pareto}. The TSP-1000-trained model in Figure~\ref{fig:generalization} has gaps of at most $4.3\%$ across all other scales, exceeding the cross-size generalization of DIFUSCO~\citep{sun2023difusco} and T2T~\citep{li2023t2t}.

\par
\ifnum\value{WF@wrappedlines}>1
  \vspace{\dimexpr\value{WF@wrappedlines}\baselineskip-\baselineskip\relax}
\fi
\WFclear
\begin{table}[!htbp]
\centering
\caption{Cross-distribution generalization on TSP-100~\citep{bi2022learning}, EDISCO and diffusion baselines trained on Uniform with greedy + 2-opt decoding. EDISCO's SL+NEE configuration appears in Table~\ref{tab:tsp_main}. Det.(\%): deterioration relative to Uniform. Average Gap includes Uniform and the three OOD variants, while average Det.\ covers the three OOD variants. \first{Best} and \second{second-best} per column, including ties.}
\label{tab:ood_results}
\footnotesize
\setlength{\tabcolsep}{4pt}
\adjustbox{max width=\textwidth}{%
\begin{tabular}{lcccccccc}
\toprule
\multirow{2}{*}{\textbf{Method}} & \multicolumn{2}{c}{\textbf{Cluster}} & \multicolumn{2}{c}{\textbf{Explosion}} & \multicolumn{2}{c}{\textbf{Implosion}} & \multicolumn{2}{c}{\textbf{Average}} \\
\cmidrule(lr){2-3} \cmidrule(lr){4-5} \cmidrule(lr){6-7} \cmidrule(lr){8-9}
& Gap$\downarrow$ & Det.$\downarrow$ & Gap$\downarrow$ & Det.$\downarrow$ & Gap$\downarrow$ & Det.$\downarrow$ & Gap$\downarrow$ & Det.$\downarrow$ \\
\midrule
AM~\citep{kool2018attention} & 17.97\% & 678\% & 3.82\% & 65\% & 2.43\% & 5\% & 6.63\% & 249\% \\
AMDKD+EAS~\citep{bi2022learning} & \second{0.17\%} & 112\% & \second{0.05\%} & \second{-39\%} & \second{0.08\%} & \second{1\%} & \second{0.09\%} & 25\% \\
\midrule
DIFUSCO~\citep{sun2023difusco} & 2.87\% & 184\% & 1.38\% & 37\% & 2.80\% & 177\% & 2.02\% & 133\% \\
T2T~\citep{li2023t2t} & 1.50\% & 733\% & 0.15\% & -17\% & 2.60\% & 1344\% & 1.11\% & 687\% \\
Fast-T2T~\citep{li2024fast} & 1.18\% & 1867\% & \first{0.03\%} & \first{-52\%} & 2.50\% & 4067\% & 0.94\% & 1961\% \\
GLOP~\citep{ye2024glop} & \second{0.17\%} & \second{82\%} & 0.07\% & -28\% & \second{0.08\%} & \first{-10\%} & 0.10\% & \second{15\%} \\
\midrule
\textbf{EDISCO (this work)} & \first{0.05\%} & \first{25\%} & \first{0.03\%} & -25\% & \first{0.05\%} & 13\% & \first{0.04\%} & \first{4\%} \\
\bottomrule
\end{tabular}%
}
\end{table}

\Needspace{22\baselineskip}
\begin{wrapfigure}{r}{0.30\textwidth}
\centering
\includegraphics[width=\linewidth]{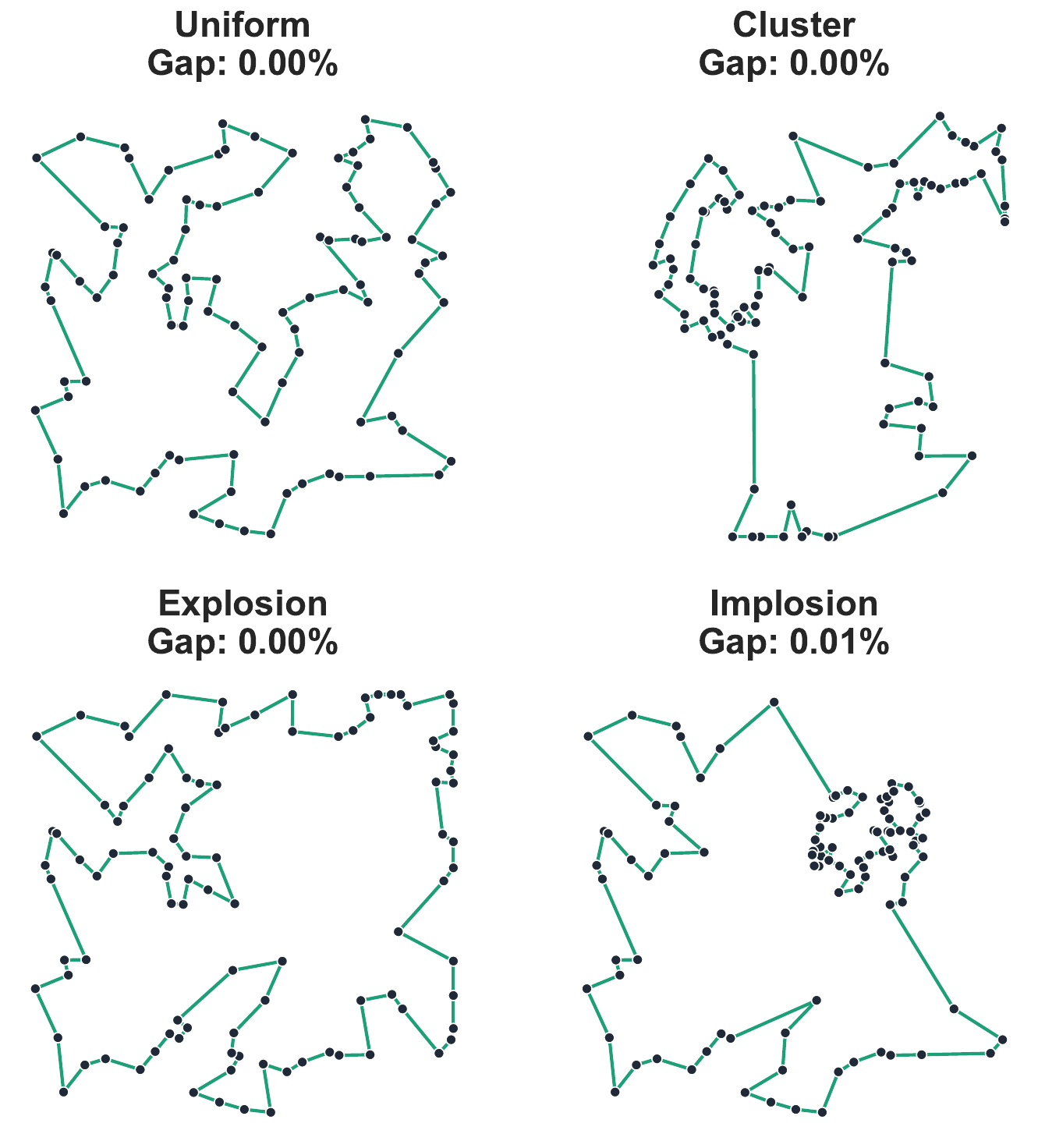}
\caption{EDISCO tours on Uniform and three OOD distributions for TSP-100.}
\label{fig:ood_visualization}
\end{wrapfigure}

\paragraph{Cross-distribution generalization.} Following~\citet{bi2022learning}, we evaluate on Uniform together with three OOD variants Cluster, Explosion, and Implosion (Table~\ref{tab:ood_results}, $\text{Det.}(\%) = (\text{Gap}_{\text{OOD}}/\text{Gap}_{\text{Uniform}} - 1)\times 100$). EDISCO's average deterioration is $4\%$ against $15\%$ (GLOP), $133\%$ (DIFUSCO), $687\%$ (T2T), and $1961\%$ (Fast-T2T), consistent with Corollary~\ref{cor:orbit_sharing}: orbit-shared scalar features depend only on local geometry rather than on coordinate-frame patterns of the training distribution. Tour visualizations are in Figure~\ref{fig:ood_visualization} and Appendix~\ref{subsec:visualization}.

\subsection{CVRP Results}
\label{subsec:cvrp_main}

Table~\ref{tab:cvrp_main} reports CVRPs with $N \in \{50, 100, 200, 500\}$ against HGS-CVRP. EDISCO extends the NEE decoder in Table~\ref{tab:tsp_main} with capacity-feasibility checks and capacity-conditioned invariant features (Section~\ref{par:conditional_diffusion}), preserving E(2)-invariance through the constrained-routing pipeline.

\WFclear
\begin{table*}[!htbp]
\centering
\caption{CVRPs at $N \in \{50,100,200,500\}$. Gap is relative to HGS-CVRP~\citep{vidal2022hybrid}. Time follows the protocol of Table~\ref{tab:tsp_main}. StruDiCO is adapted to CVRP from its TSP/MIS formulation. \first{Best} and \second{second-best} per Gap and Time among neural solvers. G: greedy, AE/NAE/NEE: adaptive / native adaptive / native edge expansion, LS: classical local-search post-processing. ``--'': not reported.}
\label{tab:cvrp_main}
\scriptsize
\setlength{\tabcolsep}{3pt}
\adjustbox{max width=\textwidth}{%
\begin{tabular}{llcccccccccccc}
\toprule
\multirow{2}{*}{\textbf{Method}} & \multirow{2}{*}{\textbf{Type}} & \multicolumn{3}{c}{\textbf{CVRP-50}} & \multicolumn{3}{c}{\textbf{CVRP-100}} & \multicolumn{3}{c}{\textbf{CVRP-200}} & \multicolumn{3}{c}{\textbf{CVRP-500}} \\
\cmidrule(lr){3-5} \cmidrule(lr){6-8} \cmidrule(lr){9-11} \cmidrule(lr){12-14}
& & Length$\downarrow$ & Gap$\downarrow$ & Time$\downarrow$ & Length$\downarrow$ & Gap$\downarrow$ & Time$\downarrow$ & Length$\downarrow$ & Gap$\downarrow$ & Time$\downarrow$ & Length$\downarrow$ & Gap$\downarrow$ & Time$\downarrow$ \\
\midrule
\multicolumn{14}{l}{\textit{Reference Solvers}} \\
HGS-CVRP~\citep{vidal2022hybrid}                       & Heuristic  & 10.37 & 0.00\%       & 1.01 s  & 15.56 & 0.00\%       & 20.03 s & 19.63 & 0.00\%       & 60.02 s & 37.15 & 0.00\%       & 240 s  \\
\midrule
\multicolumn{14}{l}{\textit{Non-Generative Neural Solvers}} \\
Sym-NCO~\citep{kim2022sym}                             & RL+G       & 10.57 & 1.93\%       & 0.09 s  & 15.93 & 2.38\%       & 0.19 s  & 20.19 & 2.85\%       & 0.36 s  & --    & --           & --     \\
COExpander~\citep{macoexpander}                        & SL+AE+LS   & 10.77 & 3.85\%       & \first{0.03 s} & 16.22 & 4.25\%       & \first{0.06 s} & 20.59 & 4.89\%       & \first{0.15 s} & 39.12 & \second{5.34\%}  & \second{0.61 s} \\
\midrule
\multicolumn{14}{l}{\textit{Generative Neural Solvers}} \\
StruDiCO~\citep{wang2025strudico}                      & SL+G+LS    & 10.41 & \second{0.39\%}  & 0.5 s   & 15.61 & \second{0.32\%}  & 1.5 s   & 19.72 & \second{0.46\%}  & 4 s     & --    & --           & --     \\
NEXCO~\citep{wang2026nexco}                            & SL+NAE     & 10.48 & 1.12\%       & 0.08 s  & 15.78 & 1.40\%       & 0.12 s  & 20.11 & 2.45\%       & 0.39 s  & --    & --           & --     \\
\textbf{EDISCO (this work)}                                  & SL+NEE     & 10.40 & \first{0.29\%}   & \second{0.06 s} & 15.60 & \first{0.26\%}   & \second{0.10 s}  & 19.71 & \first{0.41\%}   & \second{0.20 s}  & 37.78 & \first{1.70\%}   & \first{0.50 s} \\
\bottomrule
\end{tabular}
}
\end{table*}

EDISCO leads on Gap at all four scales and removes the external local-search dependency, with Time fastest at CVRP-500 and within $2\times$ of COExpander+LS at smaller scales. Larger CVRPs with $N \in \{1000, 2000\}$ use a divide-and-conquer extension (Appendix~\ref{app:partition_diffusion}).

\paragraph{Constraint robustness on CVRPs.}
\label{subsec:cvrp_constraint_shift}
Following~\citet{luo2025rethinking}, CVRP-100 is evaluated at seven absolute capacities $Q \in \{10,50,100,200,300,400,500\}$ spanning tight ($\sim$2 customers per route) to near-TSP regimes, with 10{,}000 instances per bin. Four EDISCO variants isolate conditioning: \emph{default-only} (trained at $Q{=}50$), \emph{mixed unconditioned} (per-instance $Q\sim\mathrm{Unif}\{10,\ldots,500\}$, no conditioning input), \emph{capacity-conditioned} (proposed, same distribution with invariant features and modulated invariant messages), and \emph{per-capacity specialists} (upper bound). Neural baselines AM, POMO, Luo's multi-expert solver~\citep{luo2025rethinking}, CaR~\citep{bi2026car}, StruDiCO, NEXCO, and COExpander cover the full sweep, the last three trained at $Q{=}50$.

\begin{table*}[!htbp]
\centering
\caption{Constraint robustness on CVRP-100 across $Q \in \{10,50,100,200,300,400,500\}$ following~\citet{luo2025rethinking}. Gap (\%)$\downarrow$ vs HGS-CVRP. \first{Best} and \second{second-best} per column among single-model solvers, with per-capacity specialists as an excluded-from-ranking upper bound. StruDiCO, NEXCO, and COExpander are trained at $Q{=}50$ and evaluated at all listed capacities without retraining.}
\label{tab:cvrp_capacity_shift}
\scriptsize
\setlength{\tabcolsep}{3.5pt}
\adjustbox{max width=\textwidth}{%
\begin{tabular}{lccccccc}
\toprule
\textbf{Method} & $Q{=}10$ & $Q{=}50$ & $Q{=}100$ & $Q{=}200$ & $Q{=}300$ & $Q{=}400$ & $Q{=}500$ \\
\midrule
HGS-CVRP~\citep{vidal2022hybrid} & 0.00 & 0.00 & 0.00 & 0.00 & 0.00 & 0.00 & 0.00 \\
\midrule
AM~\citep{kool2018attention}                              & 9.32 & 7.34 & 6.85 & 5.41 & 4.92 & 4.83 & 5.27 \\
POMO~\citep{kwon2020pomo}                                 & 4.18 & 3.09 & 2.95 & 2.74 & 2.81 & 3.05 & 3.42 \\
Multi-expert~\citep{luo2025rethinking}                   & \second{1.05} & 0.61 & \second{0.74} & \second{0.85} & \second{0.92} & \second{1.01} & \second{1.12} \\
CaR~\citep{bi2026car}                                     & 1.34 & 0.78 & 0.91 & 1.04 & 1.18 & 1.27 & 1.39 \\
StruDiCO~\citep{wang2025strudico}                         & 6.21 & 0.32 & 1.45 & 3.18 & 4.62 & 5.35 & 5.84 \\
NEXCO~\citep{wang2026nexco}                               & 5.84 & 1.40 & 1.62 & 3.05 & 4.31 & 4.92 & 5.18 \\
COExpander~\citep{macoexpander}                           & 6.95 & 4.25 & 2.38 & 3.94 & 4.85 & 5.42 & 5.71 \\
\midrule
EDISCO default-only                                       & 5.20 & \second{0.30} & 1.30 & 2.85 & 4.05 & 4.55 & 4.85 \\
EDISCO mixed unconditioned                                & 1.50 & 1.05 & 1.15 & 1.30 & 1.45 & 1.55 & 1.70 \\
\textbf{EDISCO capacity-conditioned (this work)}               & \first{0.55} & \first{0.26} & \first{0.35} & \first{0.45} & \first{0.55} & \first{0.65} & \first{0.78} \\
EDISCO per-capacity specialists (upper bound)             & 0.45 & 0.26 & 0.28 & 0.35 & 0.40 & 0.45 & 0.52 \\
\bottomrule
\end{tabular}
}
\end{table*}

Capacity conditioning closes most of the gap between mixed unconditioned and per-capacity specialists within a single parameter set. Luo's multi-expert mixture and CaR remain in the $0.6$--$1.4\%$ range, while the fixed-capacity baselines StruDiCO and NEXCO perform best at their training capacity and degrade rapidly away from it, and COExpander stays above $2\%$ at every capacity.

\Needspace{4\baselineskip}
\subsection{Robustness to Training Data Variations}
\label{subsec:training_data_robustness}

\begingroup
\begin{wrapfigure}{r}{0.46\textwidth}
\centering
\includegraphics[width=\linewidth]{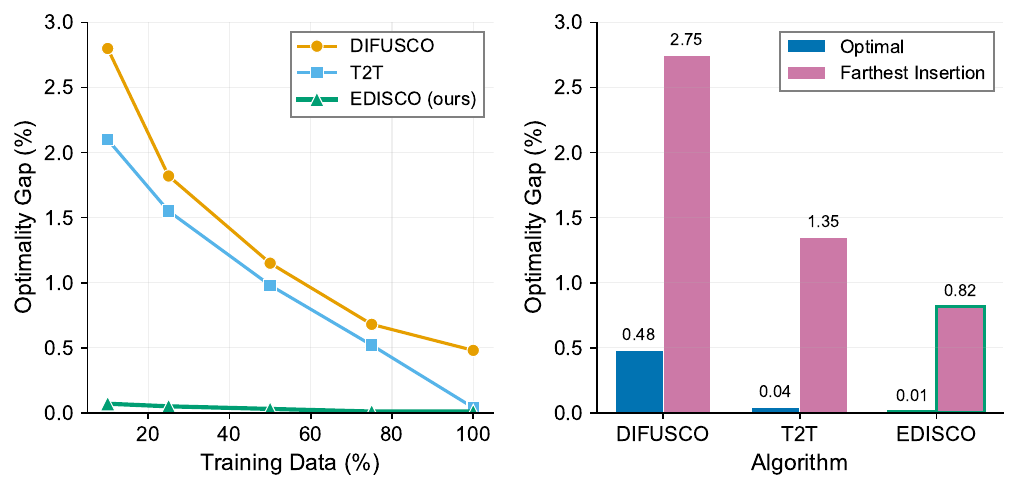}
\vspace{-0.5cm}
\captionsetup{font=small}
\caption{TSP-50 performance under training-data variations. Left: optimality gap as a function of training set size. Right: performance comparison when trained on optimal data versus heuristic Farthest Insertion data.}
\label{fig:data_experiments}
\vspace{-0.3cm}
\end{wrapfigure}

This subsection evaluates EDISCO's robustness to variations in training data quantity and quality, two factors that bear on practical deployment whenever obtaining optimal solutions is computationally expensive. The left panel of Figure~\ref{fig:data_experiments} shows that EDISCO maintains near-optimal performance even with limited data, achieving gaps below 0.07\% with just 10\% of TSP-50 training data, while DIFUSCO reaches 2.8\% and T2T 2.1\% under the same setting.

The right panel of Figure~\ref{fig:data_experiments} examines model performance when trained on suboptimal solutions generated by the Farthest Insertion heuristic, which produces tours with an average gap of 7.5\% to optimal on TSP-50~\citep{li2023t2t}. EDISCO achieves a 0.82\% gap, outperforming DIFUSCO (2.75\%) and T2T (1.35\%). All methods use greedy decoding for this comparison.

\par\endgroup
\subsection{Ablation Study}
\label{subsec:ablation}

Table~\ref{tab:ablation_combined} isolates encoder and decoder choices on TSP-500/1000, comparing architectural orbit-sharing with finite-training orbit-sampling. Replacing the EGNN with a parameter-matched non-equivariant GNN nearly triples the gap. E(2) data augmentation~\citep{brehmer2024does} recovers about half of the lost performance in this ablation, while a Sym-NCO soft regularizer~\citep{kim2022sym} falls further behind. NEE matches or exceeds greedy + 2-opt~\citep{croes1958method} on Gap at both scales, replacing external local search with an E(2)-invariant decoder. Against greedy + 2-opt, NEE lowers the TSP-500 gap from $0.18\%$ to $0.018\%$ at $0.28$ s (vs.\ $0.19$ s for greedy + 2-opt). At TSP-1000, both report $0.52\%$ at the displayed precision, while NEE takes $1.28$ s (vs.\ $0.37$ s). These total per-instance inference times show the quality--runtime trade-off. Multi-step sampling and capacity conditioning are evaluated in Appendix~\ref{subsec:solver_eval} and Section~\ref{subsec:cvrp_constraint_shift}.

\begin{table}[h]
\centering
\caption{Encoder and decoder ablations on TSP-500/1000 with the 5-step DEIS-2 solver. Encoder block fixes the decoder at greedy (raw-$P_{ij}$ ranking with degree-2 and subtour feasibility). Decoder block fixes the encoder at full EDISCO, with NEE ranking by Equation~\ref{eq:nee_score}. Bold marks the proposed configuration in each block.}
\label{tab:ablation_combined}
\scriptsize
\setlength{\tabcolsep}{3pt}
\adjustbox{max width=\textwidth}{%
\begin{tabular}{llcccccc}
\toprule
\multirow{2}{*}{\textbf{Variant}} & \multirow{2}{*}{\textbf{Detail}} & \multicolumn{3}{c}{\textbf{TSP-500}} & \multicolumn{3}{c}{\textbf{TSP-1000}} \\
\cmidrule(lr){3-5} \cmidrule(lr){6-8}
& & Length$\downarrow$ & Gap\%$\downarrow$ & Time$\downarrow$ & Length$\downarrow$ & Gap\%$\downarrow$ & Time$\downarrow$ \\
\midrule
\multicolumn{8}{l}{\textit{Encoder substitution (decoder fixed at greedy).}} \\
\textbf{EDISCO (Full)} & Architectural E(2) & \textbf{16.90} & \textbf{2.10} & \textbf{0.11 s} & \textbf{23.83} & \textbf{3.05} & \textbf{0.31 s} \\
Non-equivariant GNN (param-matched) & None & 17.53 & 5.95 & 0.12 s & 24.93 & 7.85 & 0.32 s \\
~~+ E(2) data augmentation~\citep{brehmer2024does} & Augmentation & 17.23 & 4.10 & 0.12 s & 24.38 & 5.45 & 0.32 s \\
~~+ Sym-NCO soft equivariance loss~\citep{kim2022sym} & Soft regularizer & 17.35 & 4.85 & 0.12 s & 24.63 & 6.55 & 0.32 s \\
\midrule
\multicolumn{8}{l}{\textit{Decoder choice (encoder fixed at full EDISCO).}} \\
Greedy only & Native, invariant & 16.90 & 2.10 & 0.11 s & 23.83 & 3.05 & 0.31 s \\
Greedy + 2-opt~\citep{croes1958method} & External refinement & 16.58 & 0.18 & 0.19 s & 23.24 & 0.52 & 0.37 s \\
\textbf{NEE (proposed)} & Native, invariant & \textbf{16.55} & \textbf{0.018} & \textbf{0.28 s} & \textbf{23.24} & \textbf{0.52} & \textbf{1.28 s} \\
\bottomrule
\end{tabular}
}
\end{table}

Edge-probability consistency under random E(2) transformations (Table~\ref{tab:consistency_probe}) tests Corollary~\ref{cor:orbit_sharing}. The metric measures the maximum probability change under rotations, reflections, or translations, reaching zero for exact invariance. EDISCO reaches the numerical-precision floor, while augmentation reduces residual coordinate-frame dependence. The sample-efficiency curves (Figure~\ref{fig:sample_efficiency}) likewise show EDISCO below both non-equivariant variants in quality gaps at every training budget.

\begin{table}[h]
\centering
\begin{minipage}[t]{0.56\textwidth}
\centering
\caption{Edge-probability consistency on TSP-100. Mean of $\max_{i,j}|P_{ij}(\mathbf{x}) - P_{ij}(g \cdot \mathbf{x})|$ over the 1{,}280 test instances of~\citet{kool2018attention} and 16 random $g \in \mathrm{E}(2)$ per instance, with noisy edge states and diffusion times held fixed.}
\label{tab:consistency_probe}
\scriptsize
\setlength{\tabcolsep}{3pt}
\begin{tabular}{lc}
\toprule
\textbf{Encoder} & $\mathrm{Mean}\,\max_{i,j}|\Delta P_{ij}|$ $\downarrow$ \\
\midrule
\textbf{EDISCO}                                  & \first{$1.2 \times 10^{-7}$} \\
Non-eq. + E(2) aug.~\citep{brehmer2024does}      & $1.7 \times 10^{-2}$ \\
Non-eq. (no symmetry)                            & $9.4 \times 10^{-2}$ \\
\bottomrule
\end{tabular}
\end{minipage}
\hfill
\begin{minipage}[t]{0.40\textwidth}
\centering
\vspace{0pt}
\includegraphics[width=\textwidth]{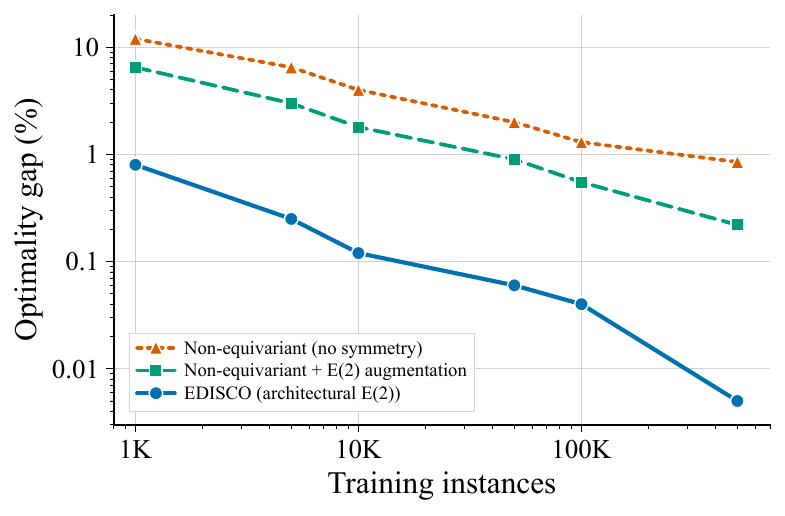}
\captionof{figure}{Sample efficiency on TSP-100 vs.\ training-set size.}
\label{fig:sample_efficiency}
\end{minipage}
\end{table}

\Needspace{5\baselineskip}
\section{Discussion and Conclusion}
\label{sec:discussion}
\label{sec:scope_limitations}

We presented EDISCO, a discrete diffusion solver for Euclidean combinatorial optimization with an E(2)-invariant distribution over node-index solutions. This invariance is preserved through edge-score prediction, categorical reverse sampling, sparse graph construction, and decoding into feasible solutions. Exact CTMC posterior sampling supports a flexible quality--speed trade-off without retraining. Across TSP and CVRP benchmarks, EDISCO achieves strong solution quality using fewer training instances and remains robust under spatial-distribution and capacity shifts.

The geometric guarantee applies to Euclidean problems such as the TSP, CVRP, and ESTP. For arbitrary-graph problems such as the MIS and Max-Cut, the optimum depends on topology rather than node coordinates, so the E(2)-specific contributions do not apply. The MIS experiment in Appendix~\ref{app:mis} evaluates the categorical CTMC engine with a non-equivariant GNN. Asymmetric routing and real-world non-Euclidean network costs also lie outside the present geometric guarantee.

These limitations motivate future extensions through directional encoders, as in MatNet~\citep{kwon2021matrix} and GREAT~\citep{lischka2024great}, or coordinate representations of asymmetric costs learned through Finsler multi-dimensional scaling~\citep{dages2025finsler} and related metric-learning methods. Studying equivariance in these representations and closing the sim-to-real gap remain future work. Within the Euclidean setting, EDISCO combines an exact geometric guarantee with the solution quality, data efficiency, and robustness demonstrated in our experiments.

\section*{Acknowledgements}
This work was supported by the Natural Sciences and Engineering Research Council of Canada (NSERC; project numbers RES0048688, RES0051374, and RES0054326) and Alberta Innovates (project number RES0053965).

\newpage
\flushbottom
\captionsetup{skip=7pt}
\setlength{\textfloatsep}{\ediscoOriginalTextFloatSep}
\setlength{\intextsep}{\ediscoOriginalInTextSep}
\setlength{\floatsep}{\ediscoOriginalFloatSep}
\bibliographystyle{plainnat}
\bibliography{references}

\newpage
\section*{NeurIPS Paper Checklist}

\begin{enumerate}

\item {\bf Claims}
    \item[] Question: Do the main claims made in the abstract and introduction accurately reflect the paper's contributions and scope?
    \item[] Answer: \answerYes{}
    \item[] Justification: The abstract and Section~\ref{sec:intro} state three contributions tied to Proposition~\ref{prop:quotient_dimension} and the experiments in Section~\ref{sec:experiments}, with the Euclidean scope made explicit and non-Euclidean MIS bracketed out (Appendix~\ref{app:mis}).
    \item[] Guidelines:
    \begin{itemize}
        \item The answer \answerNA{} means that the abstract and introduction do not include the claims made in the paper.
        \item The abstract and/or introduction should clearly state the claims made, including the contributions made in the paper and important assumptions and limitations. A \answerNo{} or \answerNA{} answer to this question will not be perceived well by the reviewers. 
        \item The claims made should match theoretical and experimental results, and reflect how much the results can be expected to generalize to other settings. 
        \item It is fine to include aspirational goals as motivation as long as it is clear that these goals are not attained by the paper. 
    \end{itemize}

\item {\bf Limitations}
    \item[] Question: Does the paper discuss the limitations of the work performed by the authors?
    \item[] Answer: \answerYes{}
    \item[] Justification: Section~\ref{sec:scope_limitations} discusses the Euclidean scope, non-Euclidean and asymmetric-routing limitations, and sim-to-real transfer as future work.
    \item[] Guidelines:
    \begin{itemize}
        \item The answer \answerNA{} means that the paper has no limitation while the answer \answerNo{} means that the paper has limitations, but those are not discussed in the paper. 
        \item The authors are encouraged to create a separate ``Limitations'' section in their paper.
        \item The paper should point out any strong assumptions and how robust the results are to violations of these assumptions (e.g., independence assumptions, noiseless settings, model well-specification, asymptotic approximations only holding locally). The authors should reflect on how these assumptions might be violated in practice and what the implications would be.
        \item The authors should reflect on the scope of the claims made, e.g., if the approach was only tested on a few datasets or with a few runs. In general, empirical results often depend on implicit assumptions, which should be articulated.
        \item The authors should reflect on the factors that influence the performance of the approach. For example, a facial recognition algorithm may perform poorly when image resolution is low or images are taken in low lighting. Or a speech-to-text system might not be used reliably to provide closed captions for online lectures because it fails to handle technical jargon.
        \item The authors should discuss the computational efficiency of the proposed algorithms and how they scale with dataset size.
        \item If applicable, the authors should discuss possible limitations of their approach to address problems of privacy and fairness.
        \item While the authors might fear that complete honesty about limitations might be used by reviewers as grounds for rejection, a worse outcome might be that reviewers discover limitations that aren't acknowledged in the paper. The authors should use their best judgment and recognize that individual actions in favor of transparency play an important role in developing norms that preserve the integrity of the community. Reviewers will be specifically instructed to not penalize honesty concerning limitations.
    \end{itemize}

\item {\bf Theory assumptions and proofs}
    \item[] Question: For each theoretical result, does the paper provide the full set of assumptions and a complete (and correct) proof?
    \item[] Answer: \answerYes{}
    \item[] Justification: Proposition~\ref{prop:quotient_dimension} is stated with assumptions inline and proved in full in Appendix~\ref{sec:appendix:proofs}.
    \item[] Guidelines:
    \begin{itemize}
        \item The answer \answerNA{} means that the paper does not include theoretical results. 
        \item All the theorems, formulas, and proofs in the paper should be numbered and cross-referenced.
        \item All assumptions should be clearly stated or referenced in the statement of any theorems.
        \item The proofs can either appear in the main paper or the supplemental material, but if they appear in the supplemental material, the authors are encouraged to provide a short proof sketch to provide intuition. 
        \item Inversely, any informal proof provided in the core of the paper should be complemented by formal proofs provided in appendix or supplemental material.
        \item Theorems and Lemmas that the proof relies upon should be properly referenced. 
    \end{itemize}

    \item {\bf Experimental result reproducibility}
    \item[] Question: Does the paper fully disclose all the information needed to reproduce the main experimental results of the paper to the extent that it affects the main claims and/or conclusions of the paper (regardless of whether the code and data are provided or not)?
    \item[] Answer: \answerYes{}
    \item[] Justification: Architecture, training configuration, data generation, and hardware/timing protocol are documented in Appendices~\ref{sec:architecture_details}, \ref{subsec:training_config}, \ref{subsec:data_generation}, and~\ref{subsec:hardware}; the public implementation is linked in the author footnote.
    \item[] Guidelines:
    \begin{itemize}
        \item The answer \answerNA{} means that the paper does not include experiments.
        \item If the paper includes experiments, a \answerNo{} answer to this question will not be perceived well by the reviewers: Making the paper reproducible is important, regardless of whether the code and data are provided or not.
        \item If the contribution is a dataset and\slash or model, the authors should describe the steps taken to make their results reproducible or verifiable. 
        \item Depending on the contribution, reproducibility can be accomplished in various ways. For example, if the contribution is a novel architecture, describing the architecture fully might suffice, or if the contribution is a specific model and empirical evaluation, it may be necessary to either make it possible for others to replicate the model with the same dataset, or provide access to the model. In general. releasing code and data is often one good way to accomplish this, but reproducibility can also be provided via detailed instructions for how to replicate the results, access to a hosted model (e.g., in the case of a large language model), releasing of a model checkpoint, or other means that are appropriate to the research performed.
        \item While NeurIPS does not require releasing code, the conference does require all submissions to provide some reasonable avenue for reproducibility, which may depend on the nature of the contribution. For example
        \begin{enumerate}
            \item If the contribution is primarily a new algorithm, the paper should make it clear how to reproduce that algorithm.
            \item If the contribution is primarily a new model architecture, the paper should describe the architecture clearly and fully.
            \item If the contribution is a new model (e.g., a large language model), then there should either be a way to access this model for reproducing the results or a way to reproduce the model (e.g., with an open-source dataset or instructions for how to construct the dataset).
            \item We recognize that reproducibility may be tricky in some cases, in which case authors are welcome to describe the particular way they provide for reproducibility. In the case of closed-source models, it may be that access to the model is limited in some way (e.g., to registered users), but it should be possible for other researchers to have some path to reproducing or verifying the results.
        \end{enumerate}
    \end{itemize}

\item {\bf Open access to data and code}
    \item[] Question: Does the paper provide open access to the data and code, with sufficient instructions to faithfully reproduce the main experimental results, as described in supplemental material?
    \item[] Answer: \answerYes{}
    \item[] Justification: The link to the code is provided in the author footnote.
    \item[] Guidelines:
    \begin{itemize}
        \item The answer \answerNA{} means that paper does not include experiments requiring code.
        \item Please see the NeurIPS code and data submission guidelines (\url{https://neurips.cc/public/guides/CodeSubmissionPolicy}) for more details.
        \item While we encourage the release of code and data, we understand that this might not be possible, so \answerNo{} is an acceptable answer. Papers cannot be rejected simply for not including code, unless this is central to the contribution (e.g., for a new open-source benchmark).
        \item The instructions should contain the exact command and environment needed to run to reproduce the results. See the NeurIPS code and data submission guidelines (\url{https://neurips.cc/public/guides/CodeSubmissionPolicy}) for more details.
        \item The authors should provide instructions on data access and preparation, including how to access the raw data, preprocessed data, intermediate data, and generated data, etc.
        \item The authors should provide scripts to reproduce all experimental results for the new proposed method and baselines. If only a subset of experiments are reproducible, they should state which ones are omitted from the script and why.
        \item At submission time, to preserve anonymity, the authors should release anonymized versions (if applicable).
        \item Providing as much information as possible in supplemental material (appended to the paper) is recommended, but including URLs to data and code is permitted.
    \end{itemize}

\item {\bf Experimental setting/details}
    \item[] Question: Does the paper specify all the training and test details (e.g., data splits, hyperparameters, how they were chosen, type of optimizer) necessary to understand the results?
    \item[] Answer: \answerYes{}
    \item[] Justification: Optimizer, schedule, batch size, epochs, and architectural hyperparameters appear in Appendices~\ref{subsec:implementation_specs} and~\ref{subsec:training_config}, with test splits and protocol in Section~\ref{sec:experiments}.
    \item[] Guidelines:
    \begin{itemize}
        \item The answer \answerNA{} means that the paper does not include experiments.
        \item The experimental setting should be presented in the core of the paper to a level of detail that is necessary to appreciate the results and make sense of them.
        \item The full details can be provided either with the code, in appendix, or as supplemental material.
    \end{itemize}

\item {\bf Experiment statistical significance}
    \item[] Question: Does the paper report error bars suitably and correctly defined or other appropriate information about the statistical significance of the experiments?
    \item[] Answer: \answerYes{}
    \item[] Justification: All EDISCO results are averaged over five inference runs with different random seeds (Appendix~\ref{subsec:randomness}); the $\pm 0.03\%$ run-to-run variance is well below the gap differences reported in Section~\ref{sec:experiments}.
    \item[] Guidelines:
    \begin{itemize}
        \item The answer \answerNA{} means that the paper does not include experiments.
        \item The authors should answer \answerYes{} if the results are accompanied by error bars, confidence intervals, or statistical significance tests, at least for the experiments that support the main claims of the paper.
        \item The factors of variability that the error bars are capturing should be clearly stated (for example, train/test split, initialization, random drawing of some parameter, or overall run with given experimental conditions).
        \item The method for calculating the error bars should be explained (closed form formula, call to a library function, bootstrap, etc.)
        \item The assumptions made should be given (e.g., Normally distributed errors).
        \item It should be clear whether the error bar is the standard deviation or the standard error of the mean.
        \item It is OK to report 1-sigma error bars, but one should state it. The authors should preferably report a 2-sigma error bar than state that they have a 96\% CI, if the hypothesis of Normality of errors is not verified.
        \item For asymmetric distributions, the authors should be careful not to show in tables or figures symmetric error bars that would yield results that are out of range (e.g., negative error rates).
        \item If error bars are reported in tables or plots, the authors should explain in the text how they were calculated and reference the corresponding figures or tables in the text.
    \end{itemize}

\item {\bf Experiments compute resources}
    \item[] Question: For each experiment, does the paper provide sufficient information on the computer resources (type of compute workers, memory, time of execution) needed to reproduce the experiments?
    \item[] Answer: \answerYes{}
    \item[] Justification: Appendix~\ref{subsec:hardware} specifies the GPU and CPU configuration used for all timing measurements, and Appendix~\ref{subsec:training_resources} reports per-model training cost.
    \item[] Guidelines:
    \begin{itemize}
        \item The answer \answerNA{} means that the paper does not include experiments.
        \item The paper should indicate the type of compute workers CPU or GPU, internal cluster, or cloud provider, including relevant memory and storage.
        \item The paper should provide the amount of compute required for each of the individual experimental runs as well as estimate the total compute. 
        \item The paper should disclose whether the full research project required more compute than the experiments reported in the paper (e.g., preliminary or failed experiments that didn't make it into the paper). 
    \end{itemize}
    
\item {\bf Code of ethics}
    \item[] Question: Does the research conducted in the paper conform, in every respect, with the NeurIPS Code of Ethics \url{https://neurips.cc/public/EthicsGuidelines}?
    \item[] Answer: \answerYes{}
    \item[] Justification: The paper introduces algorithmic methods for combinatorial optimization on synthetic and standard public benchmarks, with no human subjects, personal data, or sensitive content involved.
    \item[] Guidelines:
    \begin{itemize}
        \item The answer \answerNA{} means that the authors have not reviewed the NeurIPS Code of Ethics.
        \item If the authors answer \answerNo, they should explain the special circumstances that require a deviation from the Code of Ethics.
        \item The authors should make sure to preserve anonymity (e.g., if there is a special consideration due to laws or regulations in their jurisdiction).
    \end{itemize}

\item {\bf Broader impacts}
    \item[] Question: Does the paper discuss both potential positive societal impacts and negative societal impacts of the work performed?
    \item[] Answer: \answerNA{}
    \item[] Justification: EDISCO studies general-purpose combinatorial-optimization methods on academic benchmarks; foreseeable downstream uses follow established applications of TSP and CVRP solvers, with no direct societal-impact claim made.
    \item[] Guidelines:
    \begin{itemize}
        \item The answer \answerNA{} means that there is no societal impact of the work performed.
        \item If the authors answer \answerNA{} or \answerNo, they should explain why their work has no societal impact or why the paper does not address societal impact.
        \item Examples of negative societal impacts include potential malicious or unintended uses (e.g., disinformation, generating fake profiles, surveillance), fairness considerations (e.g., deployment of technologies that could make decisions that unfairly impact specific groups), privacy considerations, and security considerations.
        \item The conference expects that many papers will be foundational research and not tied to particular applications, let alone deployments. However, if there is a direct path to any negative applications, the authors should point it out. For example, it is legitimate to point out that an improvement in the quality of generative models could be used to generate Deepfakes for disinformation. On the other hand, it is not needed to point out that a generic algorithm for optimizing neural networks could enable people to train models that generate Deepfakes faster.
        \item The authors should consider possible harms that could arise when the technology is being used as intended and functioning correctly, harms that could arise when the technology is being used as intended but gives incorrect results, and harms following from (intentional or unintentional) misuse of the technology.
        \item If there are negative societal impacts, the authors could also discuss possible mitigation strategies (e.g., gated release of models, providing defenses in addition to attacks, mechanisms for monitoring misuse, mechanisms to monitor how a system learns from feedback over time, improving the efficiency and accessibility of ML).
    \end{itemize}
    
\item {\bf Safeguards}
    \item[] Question: Does the paper describe safeguards that have been put in place for responsible release of data or models that have a high risk for misuse (e.g., pre-trained language models, image generators, or scraped datasets)?
    \item[] Answer: \answerNA{}
    \item[] Justification: The released artifacts are code for combinatorial optimization on algorithmic benchmarks, with no high-risk generative content or sensitive data.
    \item[] Guidelines:
    \begin{itemize}
        \item The answer \answerNA{} means that the paper poses no such risks.
        \item Released models that have a high risk for misuse or dual-use should be released with necessary safeguards to allow for controlled use of the model, for example by requiring that users adhere to usage guidelines or restrictions to access the model or implementing safety filters. 
        \item Datasets that have been scraped from the Internet could pose safety risks. The authors should describe how they avoided releasing unsafe images.
        \item We recognize that providing effective safeguards is challenging, and many papers do not require this, but we encourage authors to take this into account and make a best faith effort.
    \end{itemize}

\item {\bf Licenses for existing assets}
    \item[] Question: Are the creators or original owners of assets (e.g., code, data, models), used in the paper, properly credited and are the license and terms of use explicitly mentioned and properly respected?
    \item[] Answer: \answerYes{}
    \item[] Justification: All baseline implementations, datasets, and reference solvers are cited in Sections~\ref{sec:related} and~\ref{sec:experiments} and the bibliography, and their public licenses are respected.
    \item[] Guidelines:
    \begin{itemize}
        \item The answer \answerNA{} means that the paper does not use existing assets.
        \item The authors should cite the original paper that produced the code package or dataset.
        \item The authors should state which version of the asset is used and, if possible, include a URL.
        \item The name of the license (e.g., CC-BY 4.0) should be included for each asset.
        \item For scraped data from a particular source (e.g., website), the copyright and terms of service of that source should be provided.
        \item If assets are released, the license, copyright information, and terms of use in the package should be provided. For popular datasets, \url{paperswithcode.com/datasets} has curated licenses for some datasets. Their licensing guide can help determine the license of a dataset.
        \item For existing datasets that are re-packaged, both the original license and the license of the derived asset (if it has changed) should be provided.
        \item If this information is not available online, the authors are encouraged to reach out to the asset's creators.
    \end{itemize}

\item {\bf New assets}
    \item[] Question: Are new assets introduced in the paper well documented and is the documentation provided alongside the assets?
    \item[] Answer: \answerYes{}
    \item[] Justification: The code repository contains the documentation needed.
    \item[] Guidelines:
    \begin{itemize}
        \item The answer \answerNA{} means that the paper does not release new assets.
        \item Researchers should communicate the details of the dataset\slash code\slash model as part of their submissions via structured templates. This includes details about training, license, limitations, etc. 
        \item The paper should discuss whether and how consent was obtained from people whose asset is used.
        \item At submission time, remember to anonymize your assets (if applicable). You can either create an anonymized URL or include an anonymized zip file.
    \end{itemize}

\item {\bf Crowdsourcing and research with human subjects}
    \item[] Question: For crowdsourcing experiments and research with human subjects, does the paper include the full text of instructions given to participants and screenshots, if applicable, as well as details about compensation (if any)?
    \item[] Answer: \answerNA{}
    \item[] Justification: The paper does not involve crowdsourcing or human subjects.
    \item[] Guidelines:
    \begin{itemize}
        \item The answer \answerNA{} means that the paper does not involve crowdsourcing nor research with human subjects.
        \item Including this information in the supplemental material is fine, but if the main contribution of the paper involves human subjects, then as much detail as possible should be included in the main paper. 
        \item According to the NeurIPS Code of Ethics, workers involved in data collection, curation, or other labor should be paid at least the minimum wage in the country of the data collector. 
    \end{itemize}

\item {\bf Institutional review board (IRB) approvals or equivalent for research with human subjects}
    \item[] Question: Does the paper describe potential risks incurred by study participants, whether such risks were disclosed to the subjects, and whether Institutional Review Board (IRB) approvals (or an equivalent approval/review based on the requirements of your country or institution) were obtained?
    \item[] Answer: \answerNA{}
    \item[] Justification: The paper does not involve human subjects.
    \item[] Guidelines:
    \begin{itemize}
        \item The answer \answerNA{} means that the paper does not involve crowdsourcing nor research with human subjects.
        \item Depending on the country in which research is conducted, IRB approval (or equivalent) may be required for any human subjects research. If you obtained IRB approval, you should clearly state this in the paper. 
        \item We recognize that the procedures for this may vary significantly between institutions and locations, and we expect authors to adhere to the NeurIPS Code of Ethics and the guidelines for their institution. 
        \item For initial submissions, do not include any information that would break anonymity (if applicable), such as the institution conducting the review.
    \end{itemize}

\item {\bf Declaration of LLM usage}
    \item[] Question: Does the paper describe the usage of LLMs if it is an important, original, or non-standard component of the core methods in this research? Note that if the LLM is used only for writing, editing, or formatting purposes and does \emph{not} impact the core methodology, scientific rigor, or originality of the research, declaration is not required.
    \item[] Answer: \answerNA{}
    \item[] Justification: LLMs are not a component of the proposed method.
    \item[] Guidelines:
    \begin{itemize}
        \item The answer \answerNA{} means that the core method development in this research does not involve LLMs as any important, original, or non-standard components.
        \item Please refer to our LLM policy in the NeurIPS handbook for what should or should not be described.
    \end{itemize}

\end{enumerate}

\newpage
\appendix

\section*{\LARGE Appendices}
\addcontentsline{toc}{section}{Appendices}
\noindent\rule{\textwidth}{0.4pt}
\vspace{0.5em}

\noindent \textbf{Default Notation} \dotfill \pageref{notations}\\[0.3em]
\textbf{A. Extensions to Other Combinatorial Problems} \dotfill \pageref{app:extensions}\\[0.1em]
\hspace*{2em}{\small A.1 Euclidean Steiner Tree Problem \dotfill \pageref{app:steiner}}\\
\hspace*{2em}{\small A.2 Symmetry-Faithful Partition Diffusion for Large-Scale CVRPs \dotfill \pageref{app:partition_diffusion}}\\
\hspace*{2em}{\small A.3 Maximum Independent Set (Engine-Generality Evidence) \dotfill \pageref{app:mis}}\\[0.3em]
\textbf{B. Proofs} \dotfill \pageref{sec:appendix:proofs}\\[0.1em]
\hspace*{2em}{\small B.1 Proof of Proposition~\ref{prop:quotient_dimension} \dotfill \pageref{subsec:proposition}}\\
\hspace*{2em}{\small B.2 Proof of E(2)-Equivariance in EDISCO \dotfill \pageref{subsec:equivarince}}\\[0.3em]
\textbf{C. Extended Related Work} \dotfill \pageref{sec:extended_related_work}\\[0.3em]
\textbf{D. Architecture Details} \dotfill \pageref{sec:architecture_details}\\[0.1em]
\hspace*{2em}{\small D.1 Network Architecture Overview \dotfill \pageref{subsec:network_overview}}\\
\hspace*{2em}{\small D.2 Feature Representations and Initialization \dotfill \pageref{subsec:feature_representations}}\\
\hspace*{2em}{\small D.3 Equivariant Message Passing Mechanism \dotfill \pageref{subsec:message_passing}}\\[0.3em]
\textbf{E. Additional Experiment Details} \dotfill \pageref{sec:appendix:details}\\[0.1em]
\hspace*{2em}{\small E.1 Performance Metrics \dotfill \pageref{subsec:performance_metrics}}\\
\hspace*{2em}{\small E.2 Hardware Platform \dotfill \pageref{subsec:hardware}}\\
\hspace*{2em}{\small E.3 Randomness in Results \dotfill \pageref{subsec:randomness}}\\
\hspace*{2em}{\small E.4 Data Generation Process \dotfill \pageref{subsec:data_generation}}\\
\hspace*{2em}{\small E.5 Model Architecture Specifications \dotfill \pageref{subsec:implementation_specs}}\\
\hspace*{2em}{\small E.6 Training Configuration \dotfill \pageref{subsec:training_config}}\\[0.3em]
\textbf{F. Additional Results} \dotfill \pageref{sec:additional_results}\\[0.1em]
\hspace*{2em}{\small F.1 Multi-Step Methods for Categorical CTMC \dotfill \pageref{subsec:ctmc_solvers}}\\
\hspace*{2em}{\small F.2 Solver Evaluation on TSP-500 \dotfill \pageref{subsec:solver_eval}}\\
\hspace*{2em}{\small F.3 TSP-10000 Results \dotfill \pageref{subsec:tsp10000}}\\
\hspace*{2em}{\small F.4 TSPLIB Results \dotfill \pageref{subsec:TSPLIB}}\\
\hspace*{2em}{\small F.5 Noise Schedule Design for TSP Diffusion \dotfill \pageref{subsec:noise-schedule-experiment}}\\
\hspace*{2em}{\small F.6 Evaluation on Architectural Hyperparameters \dotfill \pageref{subsec:archi_hyperparameters}}\\
\hspace*{2em}{\small F.7 Model Efficiency \dotfill \pageref{subsec:model_efficiency}}\\
\hspace*{2em}{\small F.8 Training Resource Requirements \dotfill \pageref{subsec:training_resources}}\\
\hspace*{2em}{\small F.9 Visualization of EDISCO's E(2)-Equivariance \dotfill \pageref{subsec:visualization}}\\
\hspace*{2em}{\small F.10 Compute and Runtime Conventions for Recent Competitors \dotfill \pageref{subsec:recent_competitors_pareto}}

\vspace{0.5em}
\noindent\rule{\textwidth}{0.4pt}

\newpage

\section*{Default Notation}
\label{notations}
\subsection*{Numbers and Arrays}
\begin{center}
\begin{tabular}{p{1.25in}p{\dimexpr\textwidth-1.25in-4\tabcolsep\relax}}
$a$ & A scalar (integer or real)\\
$\mathbf{v}$ & A vector\\
$\mathbf{M}$ & A matrix\\
$\mathcal{T}$ & A tensor\\
$\mathbf{I}_n$ & Identity matrix with $n$ rows and $n$ columns\\
$\mathds{1}$ & All-ones vector, with dimension determined by the expression ($K$ in Equation~\ref{eq:rate_matrix})\\
$\mathbf{0}$ & Vector or matrix of zeros (dimensionality implied by context)\\
$\text{diag}(\mathbf{v})$ & A square, diagonal matrix with diagonal entries given by $\mathbf{v}$\\
\end{tabular}
\end{center}

\subsection*{Graph and Combinatorial Structures}
\begin{center}
\begin{tabular}{p{1.25in}p{\dimexpr\textwidth-1.25in-4\tabcolsep\relax}}
$G = (V, E)$ & A graph with vertex set $V$ and edge set $E$\\
$V$ & Set of $n$ nodes/cities\\
$\mathbf{c}_i \in \mathbb{R}^2$ & Coordinates of node/city $i$\\
$\mathbf{X} \in \{0,1\}^{n \times n}$ & Binary adjacency matrix representing edges\\
$X_{ij}$ & Element $(i,j)$ of adjacency matrix (1 if edge exists, 0 otherwise)\\
$d_{ij}$ & Euclidean distance between nodes $i$ and $j$\\
$\mathcal{C}$ & Constraint condition set for valid tours\\
\end{tabular}
\end{center}

\subsection*{Diffusion Process}
\begin{center}
\begin{tabular}{p{1.25in}p{\dimexpr\textwidth-1.25in-4\tabcolsep\relax}}
$\mathbf{X}_0$ & Clean edge-state matrix at time $t=0$\\
$\mathbf{X}_t$ & Edge-state matrix at diffusion time $t\in[0,1]$, with $\mathbf{X}_0$ the clean state\\
$\beta(t)$ & Time-dependent noise schedule\\
$\beta_{\min}, \beta_{\max}$ & Minimum and maximum noise rates\\
$\mathbf{Q}(t)$ & Rate matrix for continuous-time Markov chain\\
$\mathbf{P}(t|s)$ & Transition probability matrix from time $s$ to $t$\\
$K$ & Number of categorical states (2 for binary edges)\\
$q(\cdot)$ & Forward diffusion distribution\\
$p_\theta(\cdot)$ & Reverse diffusion distribution parameterized by $\theta$\\
\end{tabular}
\end{center}

\subsection*{Neural Network Components}
\begin{center}
\begin{tabular}{p{1.25in}p{\dimexpr\textwidth-1.25in-4\tabcolsep\relax}}
$\mathbf{h}_i^{(\ell)}$ & Node features for node $i$ at layer $\ell$\\
$\mathbf{e}_{ij}^{(\ell)}$ & Edge features between nodes $i$ and $j$ at layer $\ell$\\
$\mathbf{x}_i^{(\ell)}$ & Coordinates for node $i$ at layer $\ell$ (evolve equivariantly)\\
$\mathbf{m}_{ij}^{(\ell)}$ & Message from node $j$ to node $i$ at layer $\ell$\\
$s_\theta$ & Score network with parameters $\theta$\\
$\alpha$ & Step size for coordinate updates\\
$\tau$ & Temperature parameter for weight scaling\\
\end{tabular}
\end{center}

\subsection*{Geometric Transformations}
\begin{center}
\begin{tabular}{p{1.25in}p{\dimexpr\textwidth-1.25in-4\tabcolsep\relax}}
$E(2)$ & Euclidean group in 2D: $\mathbb{R}^2 \rtimes O(2)$ (translations, rotations, reflections)\\
$O(2)$ & Orthogonal group (rotations and reflections)\\
$SO(2)$ & Special orthogonal group (rotations only)\\
$g \in E(2)$ & A Euclidean transformation\\
$g \cdot \mathbf{c}$ & Action of transformation $g$ on coordinates $\mathbf{c}$\\
$X/G$ & Quotient space under group action\\
$\pi: X \to X/G$ & Canonical projection to quotient space\\
$A \rtimes B$ & Semi-direct product of groups $A$ and $B$\\
\end{tabular}
\end{center}

\subsection*{Probability and Optimization}
\begin{center}
\begin{tabular}{p{1.25in}p{\dimexpr\textwidth-1.25in-4\tabcolsep\relax}}
$p(\mathbf{X}|\{\mathbf{c}_i\})$ & Conditional distribution of tours given coordinates\\
$\mathbb{E}[\cdot]$ & Expectation\\
$\text{Cat}(\cdot)$ & Categorical distribution\\
$\delta_{ij}$ & Kronecker delta (1 if $i=j$, 0 otherwise)\\
$\mathcal{L}$ & Loss function\\
$\text{NFE}$ & Number of score-network evaluations\\
$O(t^k)$ & Big-$O$ asymptotic order: $f(t)=O(t^k)$ means $|f(t)| \le C t^k$ for sufficiently small $t$\\
Gap & Optimality gap: $(L_{\text{pred}} - L_{\text{opt}})/L_{\text{opt}} \times 100\%$\\
\end{tabular}
\end{center}

\subsection*{Functions and Operations}
\begin{center}
\begin{tabular}{p{1.25in}p{\dimexpr\textwidth-1.25in-4\tabcolsep\relax}}
$\|\cdot\|_2$ & Euclidean norm\\
$\oplus$ & Concatenation operation\\
$\odot$ & Element-wise multiplication\\
$\circ$ & Function composition, $(f \circ g)(x) = f(g(x))$\\
$\sigma(\cdot)$ & Sigmoid activation function\\
$\tanh(\cdot)$ & Hyperbolic tangent activation\\
$\text{MLP}(\cdot)$ & Multilayer perceptron\\
$\text{LayerNorm}(\cdot)$ & Layer normalization\\
$\text{SiLU}(\cdot)$ & Sigmoid linear unit activation\\
$\text{softmax}(\cdot)$ & Softmax function\\
$\text{argmax}(\cdot)$ & Argument of the maximum\\
\end{tabular}
\end{center}

\newpage

\section{Extensions to Other Combinatorial Problems}
\label{app:extensions}

This appendix collects EDISCO's extensions to three combinatorial problems. The first two, ESTP and CVRP, share Euclidean structure with TSP and inherit the E(2) symmetry argument of Appendix~\ref{subsec:equivarince}. The third, MIS, is non-Euclidean and is included as engine-generality evidence for the categorical CTMC diffusion machinery, since the Euclidean symmetry contributions do not apply by construction.

\subsection{Euclidean Steiner Tree Problem}
\label{app:steiner}

\paragraph{Problem and equivariance preservation.}
The ESTP~\citep{brazil2015optimal} asks for the minimum-length tree connecting a given set of terminal points in Euclidean space, with the option to add Steiner points that reduce total length. EDISCO targets the discrete candidate-graph variant in which Steiner-point locations are sampled in advance and the diffusion model selects which terminal-terminal, terminal-candidate, or candidate-candidate edges enter the tree. Unlike TSP, ESTP produces acyclic trees rather than cycles, providing a natural testbed for EDISCO beyond routing. The optimal Steiner tree under any rotation, translation, or reflection of the point set is the corresponding transformation of the original tree, since edge lengths are preserved by orthogonal transformations of the coordinates. EDISCO therefore reuses the E(2)-equivariant EGNN of Section~\ref{sec:method} with terminal/Steiner identity as an invariant binary node feature,
\begin{equation}
    \mathbf{h}_i^{(0)} = \text{NodeEmbed}([\mathds{1}_{\text{terminal}}(i)]),
\end{equation}
where $\mathds{1}_{\text{terminal}}(i)$ marks whether node $i$ is a required terminal or a candidate Steiner point. Coordinates remain the only equivariant input, and because the indicator is invariant under E(2), all scalar message passing remains E(2)-invariant by Corollary~\ref{cor:orbit_sharing}.

\paragraph{Tree-aware greedy decoding.}
Denoised edge probabilities are converted into a feasible Steiner tree by a Kruskal-style greedy decoder~\citep{kruskal1956shortest} on the symmetrized score $s_{ij} = (P_{ij} + P_{ji})/(d_{ij} + \epsilon)$. Starting from terminals $\mathcal{T}$ and candidate Steiner points $\mathcal{S}$, the highest-scoring edge that connects two distinct components is added at each step, with cycles rejected and Steiner points retained only when they reduce total length by serving as connection hubs. Any terminal still disconnected at the end is attached through its shortest remaining feasible edge, guaranteeing feasibility without breaking the equivariant computation. ESTP benchmarks with 10, 20, and 50 terminals follow~\citet{wang2022deep} with an equal number of candidate Steiner points uniformly sampled from $[0,1]^2$, and training reuses the TSP configuration of Appendix~\ref{subsec:training_config}.

\begin{table}[!h]
\centering
\caption{Results on Steiner-10/20/50. G: greedy. GeoSteiner is the exact reference, MST the upper bound from terminal-only spanning trees. All neural methods trained on the same 10{,}000-instance dataset.}
\label{tab:steiner_results}
\adjustbox{width=\textwidth}{%
\begin{tabular}{llccccccccc}
\toprule
\multirow{2}{*}{\textbf{Algorithm}} & \multirow{2}{*}{\textbf{Type}} & \multicolumn{3}{c}{\textbf{Steiner-10}} & \multicolumn{3}{c}{\textbf{Steiner-20}} & \multicolumn{3}{c}{\textbf{Steiner-50}} \\
\cmidrule(lr){3-5} \cmidrule(lr){6-8} \cmidrule(lr){9-11}
& & Length$\downarrow$ & Gap$\downarrow$ & Time$\downarrow$ & Length$\downarrow$ & Gap$\downarrow$ & Time$\downarrow$ & Length$\downarrow$ & Gap$\downarrow$ & Time$\downarrow$ \\
\midrule
GeoSteiner~\citep{juhl2018geosteiner} & Exact & 2.62 & 0.00\% & 24 s & 3.63 & 0.00\% & 2.8 m & 5.78 & 0.00\% & 18.5 m \\
\midrule
MST (Upper Bound) & Heuristic & 2.84 & 8.40\% & $<$1 s & 3.89 & 7.16\% & $<$1 s & 6.15 & 6.40\% & $<$1 s \\
Steiner Insertion~\citep{kahng1992new} & Heuristic & 2.71 & 3.44\% & $<$1 s & 3.76 & 3.58\% & $<$1 s & 5.97 & 3.29\% & 1 s \\
\midrule
\multicolumn{11}{l}{\textit{Learning-Based Greedy Decoding}} \\
\midrule
Deep-Steiner~\citep{wang2022deep} & RL+G & 2.73 & 4.20\% & 2 s & 3.81 & 4.96\% & 3 s & 6.08 & 5.19\% & 7 s \\
DIFUSCO~\citep{sun2023difusco} & SL+G & 2.77 & 5.73\% & 2 s & 3.81 & 4.96\% & 3 s & 6.05 & 4.67\% & 7 s \\
T2T~\citep{li2023t2t} & SL+G & 2.73 & 4.20\% & 2 s & 3.76 & 3.58\% & 3 s & 5.97 & 3.29\% & 8 s \\
Fast-T2T~\citep{li2024fast} & SL+G & 2.68 & 2.29\% & 1 s & 3.71 & 2.20\% & 2 s & 5.91 & 2.25\% & 4 s \\
\textbf{EDISCO with 50-step PNDM (this work)} & SL+G & \textbf{2.66} & \textbf{1.53\%} & 2 s & \textbf{3.68} & \textbf{1.38\%} & 3 s & \textbf{5.87} & \textbf{1.56\%} & 8 s \\
\textbf{EDISCO with 5-step DEIS-2 (this work)} & SL+G & 2.69 & 2.67\% & \textbf{0.15 s} & 3.72 & 2.48\% & \textbf{0.35 s} & 5.93 & 2.60\% & \textbf{0.7 s} \\
\bottomrule
\end{tabular}%
}
\end{table}

\paragraph{Results.}
Table~\ref{tab:steiner_results} compares EDISCO against classical and learning-based baselines at two quality--speed configurations. With 50-step PNDM and greedy decoding, EDISCO achieves consistent $\sim$1.5\% gaps across all sizes (1.53\%, 1.38\%, 1.56\% at Steiner-10/20/50), more than $2\times$ better than Steiner Insertion (3.44\%, 3.58\%, 3.29\%) and 1.4--3.7$\times$ better than DIFUSCO (5.73\%/4.96\%/4.67\%), T2T (4.20\%/3.58\%/3.29\%), and Fast-T2T (2.29\%/2.20\%/2.25\%) retrained on matched hardware. The same model also surpasses Deep-Steiner (4.20\%, 4.96\%, 5.19\%) by 2.7--3.6$\times$. The 5-step DEIS-2 configuration drops inference to 0.15--0.7\,s (9--13$\times$ faster than PNDM-50) while maintaining 2.48--2.67\% gaps, still below the MST upper bound (8.40\%, 7.16\%, 6.40\%) and the Deep-Steiner, DIFUSCO, and T2T baselines.

\subsection{Symmetry-Faithful Partition Diffusion for Large-Scale CVRPs}
\label{app:partition_diffusion}

The end-to-end EDISCO formulation of Section~\ref{sec:method} treats CVRPs as a single edge-distribution problem, with customer-demand and capacity constraints. CVRP-500 remains within reach of this end-to-end formulation and is reported in main Table~\ref{tab:cvrp_main}. At $N \in \{1000, 2000\}$, existing learning-based solvers adopt divide-and-conquer~\citep{ye2024glop,zheng2024udc} by splitting customers into clusters and solving each cluster as a sub-TSP. This subsection extends EDISCO to the same scales with a two-stage pipeline whose partition stage is itself a symmetry-faithful diffusion process. Stage~1 generates a customer-to-route assignment via categorical CTMC diffusion over a pairwise same-route indicator, and Stage~2 solves each cluster plus the depot as a small Euclidean TSP using the unmodified EDISCO TSP score network of Section~\ref{sec:method}. Both stages depend only on E(2)-invariant quantities, so the composed pipeline preserves distribution-level E(2)-invariance end-to-end.

\paragraph{Stage 1: partition diffusion and capacity projection.}
Let $\mathbf{A} \in \{0,1\}^{N \times N}$ be the symmetric pairwise same-route indicator with $A_{ij} = 1$ iff customers $i$ and $j$ share a route. The forward CTMC corrupts $\mathbf{A}$ toward the uniform Bernoulli distribution under the same linear $\beta(t)$ schedule used for TSP edge variables (Equation~\ref{eq:rate_matrix}), and the reverse process samples $\mathbf{A}$ from a partition score network $s_\theta^{\mathrm{part}}$ conditioned on the E(2)-invariant feature set $\{d_{ij}, d_i/Q, (d_i + d_j)/Q, |d_i - d_j|/Q, \lambda, \sum_i d_i / Q\}$, with $\lambda = Q / Q_{\mathrm{default}}$ the capacity ratio and $d_i$ the demand of customer $i$. HGS-CVRP~\citep{vidal2022hybrid} reference solutions provide supervision because each reference route induces a cluster, so $A_{ij}$ is the corresponding same-route ground truth, and training reuses the SNR-weighted cross-entropy of Section~\ref{sec:method}. Since the sampled $\hat{\mathbf{A}}$ may violate capacity, a combinatorial projection extracts routes by spectral clustering on $\hat{\mathbf{A}}$ followed by capacity-feasible re-balancing, in which over-capacity clusters shed their lowest-affinity customers to the nearest feasible cluster until every cluster fits. The projection reads only invariant cluster affinities and demand sums, preserving E(2)-invariance by construction.

\paragraph{Stage 2 and end-to-end equivariance.}
Each route plus the depot is solved as a small Euclidean TSP (typically at most fifty nodes) by the unmodified EDISCO TSP score network with NEE, inheriting the invariance argument of Appendix~\ref{subsec:equivarince} directly. End-to-end equivariance follows from composition. The partition score network reads only pairwise distances and demand-derived scalars, the projection is purely combinatorial on $\hat{\mathbf{A}}$, and each sub-TSP inherits the invariant pipeline of Appendix~\ref{subsec:equivarince}, so the full pipeline is E(2)-invariant in distribution. The proof structure mirrors Appendix~\ref{subsec:equivarince} after replacing the edge-probability output with the pairwise same-route output at the partition stage.

\paragraph{Diffusion vs.\ $k$-means partitioning.}
An ablation isolates the role of partition diffusion by replacing the diffusion partitioner with a coordinate $k$-means baseline while keeping the same projection and sub-TSP solver. The gap between the two pipelines measures the contribution of the diffusion partitioner directly.

\begin{table}[!h]
\centering
\caption{Large-scale CVRPs at $N \in \{1000, 2000\}$ via the partition-diffusion extension. LKH-3~\citep{helsgaun2017extension} is the reference at these scales since HGS-CVRP becomes prohibitively slow beyond CVRP-500; end-to-end CVRP-50/100/200/500 results against HGS-CVRP appear in main Table~\ref{tab:cvrp_main}. The $k$-means row substitutes Lloyd's algorithm on raw coordinates for the diffusion partitioner, with the projection and sub-TSP solver unchanged. Time is the per-instance average over the 100 test instances. The LKH-3 and UDC times are the test-set totals reported by~\citet{zheng2024udc} divided by 100, and the remaining entries are measured at batch size 1, single-thread, on the hardware of Appendix~\ref{subsec:hardware}. \first{Best} and \second{second-best} per Gap and Time column; the $k$-means ablation is excluded from ranking.}
\label{tab:partition_cvrp}
\small
\adjustbox{width=\textwidth}{%
\begin{tabular}{llcccccc}
\toprule
\multirow{2}{*}{\textbf{Method}} & \multirow{2}{*}{\textbf{Type}} & \multicolumn{3}{c}{\textbf{CVRP-1000}} & \multicolumn{3}{c}{\textbf{CVRP-2000}} \\
\cmidrule(lr){3-5} \cmidrule(lr){6-8}
& & Length$\downarrow$ & Gap$\downarrow$ & Time$\downarrow$ & Length$\downarrow$ & Gap$\downarrow$ & Time$\downarrow$ \\
\midrule
\multicolumn{8}{l}{\textit{Reference}} \\
LKH-3~\citep{helsgaun2017extension}                       & Heuristic     & 46.40 & 0.00\%      & 8.4 s   & 64.90 & 0.00\%      & 24 s  \\
\midrule
\multicolumn{8}{l}{\textit{Divide-and-Conquer Comparators}} \\
GLOP~\citep{ye2024glop}                                   & RL+Partition  & 46.30 & $-$0.22\%   & \second{2.4 s}    & 64.80 & $-$0.15\%   & \second{5.8 s} \\
UDC (x\_250)~\citep{zheng2024udc}                         & RL+Partition  & 43.00 & \second{$-$7.33\%}  & 43 s  & 60.01 & \second{$-$7.53\%}  & 77 s \\
\midrule
\multicolumn{8}{l}{\textit{EDISCO Partition Diffusion}} \\
EDISCO + $k$-means partitioning (ablation)                & SL+kMeans+NEE   & 44.50 & $-$4.09\%   & 1.5 s  & 61.85 & $-$4.70\%   & 3.3 s \\
\textbf{EDISCO + partition diffusion (proposed)}          & SL+PartDiff+NEE & 42.92 & \first{$-$7.50\%}    & \first{1.2 s} & 59.90 & \first{$-$7.70\%}    & \first{3.0 s} \\
\bottomrule
\end{tabular}
}
\end{table}

The proposed pipeline achieves gaps of $-7.50\%$/$-7.70\%$ relative to LKH-3 at CVRP-1000/2000 in $1.2$/$3.0$~s per instance, leading UDC's best configuration by $0.17$/$0.17$ percentage points on Gap while running over an order of magnitude faster ($1.2$~s against $43$~s per instance, a $36\times$ speedup at CVRP-1000 and a $26\times$ speedup at CVRP-2000). Replacing the diffusion partitioner with $k$-means on raw coordinates (same projection and sub-TSP solver) reduces the improvement over LKH-3 from $7.50\%$ to $4.09\%$ at CVRP-1000 and from $7.70\%$ to $4.70\%$ at CVRP-2000, isolating the partition-stage contribution. Reported times in Table~\ref{tab:partition_cvrp} cover the full pipeline (partitioning, capacity-feasible re-balancing, and per-cluster sub-TSP decoding), not partitioning alone. NEXCO and COExpander do not report at these scales.

\subsection{Maximum Independent Set (Engine-Generality Evidence)}
\label{app:mis}

The MIS problem seeks the largest vertex subset of an undirected graph $G=(V,E)$ such that no two selected vertices share an edge. MIS is non-Euclidean, with no spatial coordinates and no E(2) action on its vertex set. The continuous-time categorical diffusion engine of Section~\ref{sec:method} is problem-agnostic, and pairing it with a non-equivariant backbone gives an engine-only baseline outside the Euclidean problem class. The E(2)-equivariant EGNN is replaced by a 12-layer message-passing GNN at the same width as the TSP score network, the prediction target shifts from edge probabilities to node-inclusion probabilities $p(v_i \in S)$, and the CTMC formulation with PNDM/DEIS-2 solvers remains unchanged. Following~\citet{sun2023difusco} and~\citet{li2023t2t,li2024fast}, evaluation uses RB graphs~\citep{xu2007random} (200--300 vertices) and Erd\H{o}s-R\'enyi (ER) graphs~\citep{erd6s1960evolution} (700--800 vertices, $p=0.15$), with KaMIS~\citep{lamm2016finding} reference solutions and the test split of~\citet{qiu2022dimes}. Greedy decoding considers vertices in descending order of predicted probabilities and adds a vertex only if it is not adjacent to any previously selected ones. Sampling draws 128 candidates and reports the best.

\begin{table}[!h]
\centering
\caption{Results on Maximum Independent Set. RB-[200-300]: random benchmark graphs with 200--300 vertices. ER-[700-800]: Erd\H{o}s-R\'enyi graphs with 700--800 vertices, $p=0.15$. G: greedy, S: sampling (128), TS: tree search, UL: unsupervised learning, AE: adaptive expansion, SC: search cycles, MIP: mixed-integer programming. Time is per instance. For KaMIS, Gurobi, Intel, DGL, LwD, DIMES, GFlowNets, DIFUSCO, T2T, and Fast-T2T it is the total test-set time reported by~\citet{li2024fast} divided by the number of test graphs (500 for RB, 128 for ER). The remaining entries are measured at batch size 1, single-thread, on the hardware of Appendix~\ref{subsec:hardware}. $^\dagger$Gurobi is run under a time limit (50 minutes in total on the ER test set), and reported sizes are best feasible solutions found, not provable optima. \first{Best} and \second{second-best} per column among learning-based methods within each decoding block.}
\label{tab:mis_results}
\small
\adjustbox{width=\textwidth}{%
\begin{tabular}{llcccccc}
\toprule
\multirow{2}{*}{\textbf{Algorithm}} & \multirow{2}{*}{\textbf{Type}} & \multicolumn{3}{c}{\textbf{RB-[200-300]}} & \multicolumn{3}{c}{\textbf{ER-[700-800]}} \\
\cmidrule(lr){3-5} \cmidrule(lr){6-8}
& & Size$\uparrow$ & Drop$\downarrow$ & Time$\downarrow$ & Size$\uparrow$ & Drop$\downarrow$ & Time$\downarrow$ \\
\midrule
KaMIS~\citep{lamm2016finding} & Heuristic & 20.10 & -- & 10.08s & 44.87 & -- & 24.44s \\
Gurobi~\citep{gurobi2023}$^\dagger$ & MIP & 19.98 & 0.60\% & 5.71s & 41.28 & 8.00\% & 23.44s \\
\midrule
\multicolumn{8}{l}{\textit{Learning-Based Methods with Greedy Decoding}} \\
\midrule
Intel~\citep{li2018combinatorial} & SL+G & -- & -- & -- & 34.86 & 22.31\% & 2.84s \\
DIMES~\citep{qiu2022dimes} & RL+G & -- & -- & -- & 38.24 & 14.78\% & 2.87s \\
DIFUSCO (T$_s$=100)~\citep{sun2023difusco} & SL+G & 18.52 & 7.81\% & 1.93s & 37.03 & 18.53\% & 2.58s \\
T2T (T$_s$=50, T$_g$=30)~\citep{li2023t2t} & SL+G & 18.98 & 5.49\% & 2.52s & 39.81 & 11.28\% & 3.34s \\
Fast-T2T (T$_s$=1, T$_g$=1)~\citep{li2024fast} & SL+G & 19.37 & 3.51\% & 0.16s & 40.25 & 10.30\% & \first{0.20s} \\
Fast-T2T (T$_s$=5, T$_g$=5)~\citep{li2024fast} & SL+G & 19.49 & 2.89\% & 0.57s & 40.68 & 9.34\% & 0.72s \\
StruDiCO (T$_s$=5, T$_g$=5)~\citep{wang2025strudico} & SL+G & 19.75 & 1.71\% & 0.4s & 42.13 & 6.32\% & 0.6s \\
GenSCO (C=500)~\citep{li2025gensco} & SL+G+SC & \first{19.85} & \first{1.24\%} & 1.1m & \first{42.46} & \first{5.37\%} & 40s \\
COExpander (S=1, D$_s$=20)~\citep{macoexpander} & SL+AE & 19.66 & 2.09\% & \second{0.1s} & 42.38 & 5.75\% & 0.5s \\
\textbf{EDISCO (this work)} & SL+G & \second{19.83} & \second{1.34\%} & \first{0.08s} & \second{42.40} & \second{5.51\%} & \second{0.40s} \\
\midrule
\multicolumn{8}{l}{\textit{Learning-Based Methods with Sampling/Search Decoding}} \\
\midrule
Intel~\citep{li2018combinatorial} & SL+TS & 18.47 & 8.11\% & 1.57s & 38.80 & 13.43\% & 9.38s \\
DGL~\citep{bother2022s} & SL+TS & 17.36 & 13.61\% & 1.53s & 37.26 & 16.96\% & 10.65s \\
LwD~\citep{ahn2020learning} & RL+S & -- & -- & -- & 41.17 & 8.25\% & 2.97s \\
GFlowNets~\citep{zhang2023let} & UL+S & 19.18 & 4.57\% & \first{0.06s} & 41.14 & 8.53\% & 1.37s \\
DIFUSCO (T$_s$=100)~\citep{sun2023difusco} & SL+S & 19.13 & 4.79\% & 2.46s & 39.12 & 12.81\% & 10.18s \\
T2T (T$_s$=50, T$_g$=30)~\citep{li2023t2t} & SL+S & 19.38 & 3.53\% & 3.64s & 41.41 & 7.72\% & 13.01s \\
Fast-T2T (T$_s$=1, T$_g$=1)~\citep{li2024fast} & SL+S & 19.53 & 2.74\% & 0.24s & 40.98 & 8.66\% & \second{0.62s} \\
Fast-T2T (T$_s$=5, T$_g$=5)~\citep{li2024fast} & SL+S & 19.70 & 1.90\% & 0.84s & 41.73 & 6.99\% & 2.74s \\
StruDiCO (T$_s$=5, T$_g$=5)~\citep{wang2025strudico} & SL+S & 19.91 & 0.93\% & 1.1s & 43.27 & 3.79\% & 2.0s \\
GenSCO (C=8k)~\citep{li2025gensco} & SL+SC & \first{20.04} & \first{0.29\%} & 17.6m & \first{44.13} & \first{1.66\%} & 10.7m \\
COExpander (S=4, D$_s$=20)~\citep{macoexpander} & SL+AE+S & 19.71 & 1.88\% & 0.1s & 42.56 & 5.34\% & 0.7s \\
\textbf{EDISCO (this work)} & SL+S & \second{19.95} & \second{0.75\%} & \second{0.08s} & \second{43.85} & \second{2.31\%} & \first{0.55s} \\
\bottomrule
\end{tabular}%
}
\end{table}

On both RB-[200-300] and ER-[700-800], EDISCO is the second-best learning-based decoder by optimality drop in both greedy and sampling regimes, behind GenSCO and ahead of StruDiCO, Fast-T2T, and COExpander. On runtime, EDISCO is the fastest or second-fastest learning-based decoder across both graph families and decoding settings in Table~\ref{tab:mis_results}, at $0.08$~s on RB graphs and $0.40$/$0.55$~s on ER graphs. Only single-step Fast-T2T on ER with greedy decoding ($0.20$~s) and GFlowNets on RB with sampling ($0.06$~s) are faster, both at a larger drop. The same continuous-time engine that drives the Euclidean results in Section~\ref{sec:experiments} therefore generalizes to a non-Euclidean benchmark with only a backbone swap, supporting engine generality on combinatorial structure outside the Euclidean problem class.

\section{Proofs}
\label{sec:appendix:proofs}

\subsection{Proof of Proposition~\ref{prop:quotient_dimension}}
\label{subsec:proposition}

\begin{proof}

\textbf{Part (i):} The Euclidean group $\mathrm{E}(2) = \mathbb{R}^2 \rtimes \mathrm{O}(2)$ consists of translations ($\mathbb{R}^2$) and orthogonal transformations ($\mathrm{O}(2)$, which includes both rotations and reflections). As a Lie group, $\mathrm{E}(2)$ has dimension 3: two parameters for translation and one for rotation angle. Reflections form a discrete $\mathbb{Z}_2$ component that does not contribute to the continuous dimension. The action of $g = (t, R) \in \mathrm{E}(2)$ on a configuration $\mathbf{x} = (\mathbf{x}_1, \ldots, \mathbf{x}_n) \in X$ is given by:
\[
g \cdot (\mathbf{x}_1, \ldots, \mathbf{x}_n) = (R\mathbf{x}_1 + t, \ldots, R\mathbf{x}_n + t),
\]
where $\mathbf{x}_i \in \mathbb{R}^2$ denotes the $i$-th city as a column vector and $R \in \mathrm{O}(2)$ acts by matrix multiplication.

\textbf{Definition of general position.} We define $X_{\mathrm{gp}} \subset X = (\mathbb{R}^2)^n$ as the set of configurations satisfying: (i) all $n$ points are pairwise distinct, and (ii) no three points are collinear. For $n \geq 3$, this is an open dense subset of $X$ with full measure (configurations violating these conditions form a measure-zero set).

\textbf{The action is free on $X_{\mathrm{gp}}$.} We show that if $g \cdot \mathbf{x} = \mathbf{x}$ for some $g = (t, R) \in \mathrm{E}(2)$ and $\mathbf{x} \in X_{\mathrm{gp}}$, then $g$ is the identity. The fixed-point condition gives, for every $i \in \{1, \ldots, n\}$:
\[
R\mathbf{x}_i + t = \mathbf{x}_i.
\]
For $i = 1$, $(R - I)\mathbf{x}_1 = -t$; for $i = 2$, $(R - I)\mathbf{x}_2 = -t$. Subtracting the second equation from the first gives
\[
(R - I)(\mathbf{x}_1 - \mathbf{x}_2) = 0.
\]
Let $\mathbf{v}_{12} = \mathbf{x}_1 - \mathbf{x}_2 \in \mathbb{R}^2$. Since $\mathbf{x} \in X_{\mathrm{gp}}$, we have $\mathbf{v}_{12} \neq 0$ (distinct points). We consider two cases:

\textit{Case 1: $R \in \mathrm{SO}(2)$ (rotation).} Suppose $R \neq I$, so $R$ is a rotation by angle $\theta \neq 0$. Then $(R - I)$ has eigenvalues $e^{i\theta} - 1$ and $e^{-i\theta} - 1$, both nonzero, so $(R - I)$ is invertible with $\ker(R - I) = \{0\}$. But $(R - I)\mathbf{v}_{12} = 0$ with $\mathbf{v}_{12} \neq 0$ implies $\mathbf{v}_{12} \in \ker(R - I)$, contradicting $\ker(R - I) = \{0\}$. Therefore $R = I$.

\textit{Case 2: $R \in \mathrm{O}(2) \setminus \mathrm{SO}(2)$ (reflection).} A reflection has eigenvalues $+1$ and $-1$, so $(R - I)$ has rank 1 with $\ker(R - I)$ being the 1-dimensional reflection axis. The condition $(R - I)\mathbf{v}_{12} = 0$ forces $\mathbf{v}_{12}$ to lie along this axis. Similarly, for any other pair $(i, j)$, the difference vector $\mathbf{v}_{ij}$ must also lie along the same axis. However, $\mathbf{x} \in X_{\mathrm{gp}}$ implies no three points are collinear, so there exist difference vectors $\mathbf{v}_{12}$ and $\mathbf{v}_{13}$ that are linearly independent. These cannot both lie along a 1-dimensional axis, yielding a contradiction. Hence no non-identity reflection can fix $\mathbf{x}$.

In both cases, we conclude $R = I$, and then the fixed-point condition gives $t = 0$. Thus $g = e$ is the identity, establishing that the action is free on $X_{\mathrm{gp}}$.

\textbf{The action is proper.} Properness means that the map $\Phi: \mathrm{E}(2)\times X\to X\times X$, defined by $\Phi(g,\mathbf{x})=(g\cdot\mathbf{x},\mathbf{x})$, has compact preimages of compact sets~\citep{lee2003smooth}. Although $\mathrm{E}(2)$ is non-compact, its action on $(\mathbb{R}^2)^n$ for $n\geq2$ is proper. Let $K \subset X \times X$ be compact, and suppose $(g_k, \mathbf{x}_k)$ is a sequence in $\Phi^{-1}(K)$. Then $(g_k \cdot \mathbf{x}_k, \mathbf{x}_k) \in K$, so both $\mathbf{x}_k$ and $g_k \cdot \mathbf{x}_k$ lie in a bounded region. Writing $g_k = (t_k, R_k)$, the translation $t_k$ is determined by $t_k = g_k \cdot \mathbf{x}_1^{(k)} - R_k \mathbf{x}_1^{(k)}$, which remains bounded since both $\mathbf{x}_1^{(k)}$ and $g_k \cdot \mathbf{x}_1^{(k)}$ are bounded and $R_k \in \mathrm{O}(2)$ is compact. Since $\mathrm{O}(2)$ is compact, the sequence $(R_k)$ has a convergent subsequence, and boundedness of $(t_k)$ then implies $(g_k)$ has a convergent subsequence in $\mathrm{E}(2)$. Thus $\Phi^{-1}(K)$ is compact, proving properness.

By the quotient manifold theorem~\citep{lee2003smooth}, when a Lie group $G$ acts freely and properly on a manifold $M$, the quotient $M/G$ inherits a unique smooth manifold structure such that $\pi: M \to M/G$ is a smooth submersion. The dimension formula gives:
\[
\dim(X_{\mathrm{gp}}/G) = \dim(X) - \dim(G) = 2n - 3.
\]

\textbf{Part (ii):} Let $F: X_{\mathrm{gp}} \to Y$ be $G$-invariant (equivalently, $G$-equivariant with the trivial representation), meaning $F(g \cdot \mathbf{x}) = F(\mathbf{x})$ for all $g \in G$ and $\mathbf{x} \in X_{\mathrm{gp}}$.

For TSP edge prediction, the score network outputs continuous edge probabilities $Y = [0,1]^{n \times n}$ (from which binary adjacency matrices are sampled); the representation is trivial, so the invariance condition holds.

Define $\widetilde{F}: X_{\mathrm{gp}}/G \to Y$ by $\widetilde{F}([\mathbf{x}]) = F(\mathbf{x})$, where $[\mathbf{x}] = \{g \cdot \mathbf{x} : g \in G\}$ denotes the orbit of $\mathbf{x}$. This is well-defined precisely because of invariance: if $[\mathbf{x}] = [\mathbf{x}']$, then $\mathbf{x}' = g \cdot \mathbf{x}$ for some $g \in G$, so:
\[
\widetilde{F}([\mathbf{x}']) = F(\mathbf{x}') = F(g \cdot \mathbf{x}) = F(\mathbf{x}) = \widetilde{F}([\mathbf{x}]).
\]
The factorization $F = \widetilde{F} \circ \pi$ follows immediately from the definition:
\[
F(\mathbf{x}) = \widetilde{F}([\mathbf{x}]) = \widetilde{F}(\pi(\mathbf{x})) = (\widetilde{F} \circ \pi)(\mathbf{x}).
\]

Uniqueness of $\widetilde{F}$ follows from surjectivity of $\pi$: if $F = \widetilde{F}_1 \circ \pi = \widetilde{F}_2 \circ \pi$, then for any $[\mathbf{x}] \in X_{\mathrm{gp}}/G$, choosing any representative $\mathbf{x} \in [\mathbf{x}]$:
\[
\widetilde{F}_1([\mathbf{x}]) = \widetilde{F}_1(\pi(\mathbf{x})) = F(\mathbf{x}) = \widetilde{F}_2(\pi(\mathbf{x})) = \widetilde{F}_2([\mathbf{x}]).
\]

\textbf{Part (iii):} The factorization $F = \widetilde{F} \circ \pi$ establishes a bijection between:
\begin{align*}
\{\text{$G$-invariant functions } X_{\mathrm{gp}} \to Y\} &\longleftrightarrow \{\text{functions } X_{\mathrm{gp}}/G \to Y\}\\
F &\longmapsto \widetilde{F}\\
\widetilde{F} \circ \pi &\longleftarrow \widetilde{F}.
\end{align*}
Therefore, learning any $G$-invariant function $F$ is equivalent to learning the corresponding function $\widetilde{F}$ on the quotient manifold. Since $X_{\mathrm{gp}}/G$ has dimension $2n - 3$ while $X$ has dimension $2n$, the domain of $\widetilde{F}$ has three fewer degrees of freedom.

In terms of function approximation, this means:
\begin{itemize}
    \item Local function representations on $X_{\mathrm{gp}}/G$ use $2n - 3$ coordinates.
    \item Local charts for $X_{\mathrm{gp}}/G$ have dimension $2n - 3$.
    \item Equivalent coordinate frames share a single representation on the quotient.
\end{itemize}

This completes the proof that E(2)-invariant learning reduces to learning on a lower-dimensional manifold, providing a geometric motivation for improved data efficiency.
\end{proof}

\subsection{Proof of E(2)-Equivariance in EDISCO}
\label{subsec:equivarince}
\begin{proof}
We prove that E(2)-equivariance is preserved throughout the entire EDISCO pipeline, from input processing through diffusion to tour construction.

\textbf{Step 1: EGNN Architecture Preserves Equivariance}

For any Euclidean transformation $g \in \mathrm{E}(2)$, we show each layer maintains equivariance:

\textit{(i) Distance invariance:} For transformed coordinates $g \cdot c_i$ where $g = (t, R)$:
$$d_{ij}^{(g)} = \|g \cdot c_i - g \cdot c_j\|_2 = \|(Rc_i + t) - (Rc_j + t)\|_2 = \|R(c_i - c_j)\|_2 = \|c_i - c_j\|_2 = d_{ij},$$
where the last equality uses orthogonality of $R \in \mathrm{O}(2)$.

\textit{(ii) Message invariance:} From Equation~\ref{eq:message}, messages depend only on:
\begin{itemize}
    \item Node features $h_i, h_j$: Initial node features are built from E(2)-invariant scalar inputs. For TSPs, $\mathbf{h}_i^{(0)}$ is a single shared learnable embedding vector with no dependence on node identity or position. For ESTPs (Appendix~\ref{app:steiner}), $\mathbf{h}_i^{(0)}$ is computed from the binary terminal indicator $\mathds{1}_{\text{terminal}}(i)$. For CVRPs (Section~\ref{par:conditional_diffusion}), $\mathbf{h}_i^{(0)}$ is augmented with the demand-derived scalar $d_i / Q$. All such inputs are invariant under any $g \in \mathrm{E}(2)$ acting on coordinates.
    \item Edge features $e_{ij}$: Initialized from the noisy adjacency matrix $X_t$ and, for CVRPs, the invariant scalars $(d_i+d_j)/Q$ and $|d_i-d_j|/Q$, all of which are invariant.
    \item Layer-$\ell$ pairwise distances $\|\mathbf{x}_i^{(\ell)} - \mathbf{x}_j^{(\ell)}\|_2$: invariant by induction on $\ell$. The base case ($\ell=0$) follows from distance invariance of the input coordinates proven above. For the inductive step, suppose layer-$\ell$ coordinates transform as $\mathbf{x}_i^{(\ell)} \to R\mathbf{x}_i^{(\ell)} + t$ under $g \in \mathrm{E}(2)$ (verified by Step (iii) below). Then $\|R\mathbf{x}_j^{(\ell)} - R\mathbf{x}_i^{(\ell)}\|_2 = \|R(\mathbf{x}_j^{(\ell)} - \mathbf{x}_i^{(\ell)})\|_2 = \|\mathbf{x}_j^{(\ell)} - \mathbf{x}_i^{(\ell)}\|_2$ by orthogonality of $R$, so layer-$\ell$ pairwise distances remain invariant for all $\ell$.
\end{itemize}
Therefore: $m_{ij}^{(g)} = m_{ij}$.

\textit{(iii) Coordinate equivariance:} Consider the update rule (Equation~\ref{eq:coord_update}). Writing $g = (t, R)$ where $t \in \mathbb{R}^2$ is the translation and $R \in \mathrm{O}(2)$ is the rotation/reflection, recall that $g$ acts on coordinates as $g \cdot x_i = Rx_i + t$. For displacement vectors, translations cancel: $g \cdot x_j - g \cdot x_i = R(x_j - x_i)$. Thus:
\begin{align}
\Delta(g \cdot x_i) &= \alpha \sum_{j \neq i} w_{ij} \cdot \frac{g \cdot x_j - g \cdot x_i}{\|g \cdot x_j - g \cdot x_i\|_2 + \epsilon} \\
&= \alpha \sum_{j \neq i} w_{ij} \cdot \frac{R(x_j - x_i)}{\|R(x_j - x_i)\|_2 + \epsilon} \\
&= \alpha \sum_{j \neq i} w_{ij} \cdot \frac{R(x_j - x_i)}{\|x_j - x_i\|_2 + \epsilon} \quad \text{(orthogonality of $R$)}\\
&= R \left(\alpha \sum_{j \neq i} w_{ij} \cdot \frac{x_j - x_i}{\|x_j - x_i\|_2 + \epsilon}\right) = R \cdot \Delta x_i.
\end{align}
Since coordinate updates transform as $\Delta(g \cdot x_i) = R \cdot \Delta x_i$, we have $(g \cdot x_i)^{(\ell+1)} = g \cdot x_i + R \cdot \Delta x_i = R(x_i + \Delta x_i) + t = g \cdot x_i^{(\ell+1)}$, confirming equivariance.

\textit{(iv) Feature invariance:} Edge and node feature updates depend only on invariant quantities, thus $e_{ij}^{(g)} = e_{ij}$ and $h_i^{(g)} = h_i$.

\begin{proof}[Proof of Corollary~\ref{cor:orbit_sharing}]
Step~1(i)--(iv) above establishes, by induction on layer $\ell$, that pairwise distances and the scalar features $\mathbf{m}_{ij}^{(\ell)}$, $\mathbf{e}_{ij}^{(\ell)}$, $\mathbf{h}_i^{(\ell)}$ are E(2)-invariant, while the coordinate features $\mathbf{x}_i^{(\ell)}$ transform equivariantly, establishing the identities in Corollary~\ref{cor:orbit_sharing}. Each E(2)-invariant scalar feature map is constant on every E(2)-orbit by definition of invariance, so the same learned scalar feature map applies uniformly across the orbit of any input configuration. On configurations in general position, Proposition~\ref{prop:quotient_dimension}(ii) further identifies this orbit-constancy with unique factorization through the smooth quotient $X_{\mathrm{gp}}/G$.
\end{proof}

\textbf{Step 2: $k$-Nearest-Neighbor Sparsification Preserves Equivariance}

For instances at $N \in \{500, 1000, 10000\}$, EDISCO restricts message passing to a $k$-nearest-neighbor graph constructed from pairwise distances (Section~\ref{sec:experiments}), with deterministic tie-breaking by node indices. Let $\mathcal{N}_k(i)$ denote the $k$-nearest neighbors of node $i$. Since pairwise distances $d_{ij}$ are E(2)-invariant by Step~1(i), the ordering of neighbors by distance is preserved under any $g \in \mathrm{E}(2)$:
$$d_{ij} < d_{ik} \iff d_{ij}^{(g)} < d_{ik}^{(g)}.$$
Exact distance ties are resolved by lower node index, a coordinate-independent rule. Therefore $\mathcal{N}_k^{(g)}(i) = \mathcal{N}_k(i)$, the message-passing graph is identical under transformation, and the invariance argument of Step~1 carries over from the dense graph to the sparsified graph without modification.

\textbf{Step 3: Capacity Conditioning Preserves Equivariance}

For CVRPs, capacity information enters the score network only through E(2)-invariant scalar channels (Section~\ref{par:conditional_diffusion}). The capacity embedding $\mathbf{z}_c = \mathrm{MLP}([\log \lambda, \sum_i d_i/Q])$ is built from the capacity ratio $\lambda = Q/Q_{\mathrm{default}}$ and the demand-to-capacity sum, both invariant under coordinate transformations. The FiLM modulation in Equation~\ref{eq:film_modulation},
$$\tilde{\mathbf{m}}_{ij}^{(\ell)} = \boldsymbol{\gamma}_\ell(\mathbf{z}_c) \odot \mathbf{m}_{ij}^{(\ell)} + \boldsymbol{\beta}_\ell(\mathbf{z}_c),$$
acts only on the scalar message channels. Since $\mathbf{m}_{ij}^{(\ell)}$ is invariant by Step~1(ii) and $\boldsymbol{\gamma}_\ell(\mathbf{z}_c), \boldsymbol{\beta}_\ell(\mathbf{z}_c)$ are coordinate-independent, the modulated message $\tilde{\mathbf{m}}_{ij}^{(\ell)}$ is also invariant. The coordinate update in Equation~\ref{eq:coord_update} is unchanged, so the only directional term remains $(\mathbf{x}_j - \mathbf{x}_i)$. For every fixed capacity value $Q$, the conditional distribution $p_\theta(\mathbf{X} \mid \{\mathbf{c}_i\}, Q)$ is therefore E(2)-invariant on coordinates.

\textbf{Step 4: Diffusion Process Maintains Equivariance}

The categorical diffusion operates on edge variables $X_{ij} \in \{0,1\}$, which represent whether edge $(i,j)$ is in the tour. These are inherently invariant to coordinate transformations.

\textit{Forward process:} The corruption adds noise to edge selections independent of coordinates:
$$q(\mathbf{X}_t | \mathbf{X}_0) = \prod_{i, j} P_{X_{0,ij},\, X_{t,ij}}(t \mid 0),$$
where $(i,j)$ ranges over node pairs and the transition factor $P_{X_{0,ij},\, X_{t,ij}}(t \mid 0)$ instantiates Equation~\ref{eq:transition_prob} with categorical-state indices set to the initial and noisy edge values.

\textit{Reverse process:} Since the score network $s_\theta$ outputs edge probabilities that are invariant (Steps~1--3), the reverse process maintains this invariance:
$$p_\theta(X_{t-\Delta t} | X_t, g(\{c_i\})) = p_\theta(X_{t-\Delta t} | X_t, \{c_i\}).$$

\textbf{Step 5: Native Edge Expansion Preserves Equivariance}

Native Edge Expansion (Section~\ref{sec:method}) converts an invariant edge probability matrix $P \in [0,1]^{n \times n}$ into a feasible tour through an iterative procedure that reads only invariant inputs. For TSP, candidate edges are ranked by
$$r_{ij}^{(g)} = \frac{P_{ij}^{(g)} + P_{ji}^{(g)}}{2(d_{ij}^{(g)} + \epsilon)} = \frac{P_{ij} + P_{ji}}{2(d_{ij} + \epsilon)} = r_{ij},$$
since $P_{ij}$ is invariant by Step~4 and $d_{ij}$ is invariant by Step~1(i). Each feasibility check performed by the decoder depends only on E(2)-invariant quantities.
\begin{itemize}
    \item Node degrees in the partial edge set $S_t$ depend only on $S_t$, which carries no coordinate information.
    \item Subtour status, tracked by union-find on node indices, depends only on $S_t$.
    \item Hamiltonian closure tests cycle length against $n$, a coordinate-independent quantity.
    \item For CVRPs, accumulated route demand $\sum_{i \in \text{route}} d_i$ and the capacity bound $Q$ are scalar and invariant.
\end{itemize}
Ties in $r_{ij}$ are broken lexicographically by node indices, a coordinate-independent rule. The sequence of accepted edges is therefore determined entirely by invariant ranks and invariant feasibility tests, so for any $g \in \mathrm{E}(2)$ the procedure returns the identical edge set in node-index space:
$$\mathrm{NEE}(\hat{\mathbf{X}}_0, \{g \cdot \mathbf{c}_i\}) = \mathrm{NEE}(\hat{\mathbf{X}}_0, \{\mathbf{c}_i\}).$$
Composing Steps~1--5 establishes that the full EDISCO score-to-tour pipeline is E(2)-invariant in distribution.
\end{proof}


\section{Extended Related Work}
\label{sec:extended_related_work}

\paragraph{Foundational neural combinatorial optimization.}
Pointer Networks~\citep{vinyals2015pointer} introduced sequence-to-sequence learning for combinatorial structures, initially requiring supervised labels from exact solvers. RL approaches later removed the need for labeled optimal tours~\citep{bello2016neural}. AM~\citep{kool2018attention} and POMO~\citep{kwon2020pomo} established strong autoregressive baselines for TSPs and CVRPs. These models demonstrated that neural policies can learn useful construction heuristics, but generalization to larger or shifted instances remains challenging~\citep{fu2021generalize,joshi2022learning}.

\paragraph{Scaling beyond standard routing benchmarks.}
Large-scale neural routing has motivated architectures that reduce the cost of encoding, decoding, or search. LEHD~\citep{luo2023neural} separates lightweight encoding from heavier decoding to improve large-scale generalization. BQ-NCO~\citep{drakulic2023bq} uses bisimulation quotienting for stronger zero-shot generalization. GLOP~\citep{ye2024glop} and UDC~\citep{zheng2024udc} use divide-and-conquer strategies for large CVRPs, while hierarchical real-world TSP solvers operate across multiple spatial resolutions~\citep{goh2024hierarchical}. UniCO~\citep{pan2025unico} and GOAL~\citep{drakulic2024goal} pursue broader problem unification. These works show that practical neural CO increasingly depends on scale, robustness, and transfer rather than only synthetic benchmark gaps.

\paragraph{Discrete diffusion and flow-based generative models.}
Discrete diffusion models provide a natural substrate for categorical structures~\citep{austin2021structured}. Related generative modeling frameworks include continuous diffusion for categorical data~\citep{dieleman2022continuous}, variational flow matching~\citep{eijkelboom2024variational}, discrete flow matching~\citep{gat2024discrete}, and optimal-transport-inspired flow matching~\citep{lipman2022flow}. In graph generation, DeFoG~\citep{qin2024defog} demonstrates the strength of discrete flow matching. These developments are complementary to EDISCO: they improve generative modeling machinery, while EDISCO studies exact Euclidean symmetry for edge-valued combinatorial diffusion.

\paragraph{Gradient-based discrete sampling.}
EDLP~\citep{mohanty2025entropy} uses a local-entropy auxiliary variable to guide discrete Langevin proposals toward flat modes, while HiSS~\citep{mohanty2026slithering} uses logistic noising to propose moves across barriers. These methods improve exploration of an instance-specific energy landscape over candidate solutions, targeting flat modes or transitions across barriers. EDISCO's geometric inductive bias concerns the two-dimensional Euclidean node layout of TSP and CVRP instances. It learns recurring layout patterns from solution labels and represents their relative geometry consistently under rotations, reflections, and translations. EDLP and HiSS explore each new instance's energy landscape through Markov chain Monte Carlo (MCMC), whereas EDISCO reuses one trained conditional model to predict edge decisions in a few denoising steps. EDISCO's TSP evaluations span 100 to 10000 nodes.

\paragraph{Theory, alignment, and data efficiency.}
The expressiveness of GNNs is limited by their aggregation structure~\citep{xu2018powerful}, and algorithmic alignment argues that architectures generalize better when their computations match the target algorithmic structure~\citep{xu2019can}. For Euclidean CO, equivariance supplies such alignment by quotienting out coordinate-frame transformations. An empirical scaling study finds improved data and compute efficiency from equivariance, while augmentation can close the data-efficiency gap given sufficient training~\citep{brehmer2024does}. Work on TSP generalization also identifies limits of purely supervised learning~\citep{joshi2022learning}. These results motivate EDISCO's use of exact distribution-level E(2) invariance rather than augmentation-only symmetry learning.

\paragraph{Hybrid neural-classical optimization.}
Many practical solvers combine learned predictions with classical optimization. Neural local rewriting learns to improve existing solutions through local modifications~\citep{chen2019learning}; learned branching and neural diving accelerate mixed-integer programming~\citep{gasse2019exact,nair2020solving}; and NeuroLKH~\citep{xin2021neurolkh} uses neural edge scores to guide LKH search. Other lines reduce label dependence through unsupervised learning or self-improvement~\citep{wang2023unsupervised,pirnay2024self}. EDISCO follows the same practical principle that learned structure and combinatorial feasibility should be composed carefully, but its main focus is preserving Euclidean symmetry from the diffusion score model through sampling and decoding.


\section{Architecture Details}
\label{sec:architecture_details}

\subsection{Network Architecture Overview}
\label{subsec:network_overview}

The EDISCO model employs a 12-layer EGNN that processes city coordinates and noisy adjacency matrices while maintaining geometric equivariance. The architecture consists of three main components: an embedding module, stacked equivariant layers, and a prediction head.

\subsection{Feature Representations and Initialization}
\label{subsec:feature_representations}
The model maintains three distinct feature types throughout the network:

\paragraph{Node Features.} Node embeddings $\mathbf{h} \in \mathbb{R}^{n \times 64}$ are initialized from a single shared learnable embedding vector repeated across nodes, ensuring E(2)-invariance from the start. City coordinates $\mathbf{x} \in \mathbb{R}^{n \times 2}$ are maintained separately and used only to compute pairwise distances (which are rotationally invariant) in the message passing mechanism. Coordinates evolve through equivariant updates during message passing but never directly enter node feature computations.

\paragraph{Relational Features.} Edge features $\mathbf{e} \in \mathbb{R}^{n \times n \times 64}$ encode pairwise relationships and tour decisions. These are initialized from the noisy adjacency matrix $\mathbf{X}_t$ through a single linear transformation.

\paragraph{Temporal Encoding.} The continuous diffusion time $t \in [0,1]$ is encoded using sinusoidal basis functions with frequencies spanning multiple octaves, producing a 128-dimensional representation that modulates the network's behavior at different noise levels.

\subsection{Equivariant Message Passing Mechanism}
\label{subsec:message_passing}

The message, coordinate, edge, and node updates of each EGNN layer are given by Equations~\ref{eq:message}--\ref{eq:node_update} of Section~\ref{sec:method}. This subsection lists the MLP component specifications required for reproducibility. With node dimension $d_n = 64$, edge dimension $d_e = 64$, and hidden dimension $d_h = 256$:

\begin{itemize}
    \item \textbf{Message MLP ($f_{\text{msg}}$)}: 3-layer MLP with architecture [$2d_n + d_e + 1 \rightarrow d_h \rightarrow d_h \rightarrow d_h$], i.e., [193 $\rightarrow$ 256 $\rightarrow$ 256 $\rightarrow$ 256]. Uses SiLU activation and layer normalization after the first linear layer, SiLU after the second linear layer. Input is the concatenation of $\mathbf{h}_i^{(\ell)}$, $\mathbf{h}_j^{(\ell)}$, $\mathbf{e}_{ij}^{(\ell)}$, and the layer-$\ell$ pairwise distance $\|\mathbf{x}_i^{(\ell)} - \mathbf{x}_j^{(\ell)}\|_2$.

    \item \textbf{Node Update MLP ($f_{\text{node}}$)}: 2-layer MLP with architecture [$d_n + d_h \rightarrow d_h \rightarrow d_n$], i.e., [320 $\rightarrow$ 256 $\rightarrow$ 64]. Uses SiLU activation and layer normalization after the hidden layer. Input is the concatenation of the current node feature $\mathbf{h}_i$ (64-dim) and aggregated messages $\sum_{j \neq i} \mathbf{m}_{ij}$ (256-dim).

    \item \textbf{Edge Update MLP ($f_{\text{edge}}$)}: 2-layer MLP with architecture [$d_e + d_h \rightarrow d_h \rightarrow d_e$], i.e., [320 $\rightarrow$ 256 $\rightarrow$ 64]. Uses SiLU activation and layer normalization after the hidden layer. Input is the concatenation of current edge feature $\mathbf{e}_{ij}$ (64-dim) and message $\mathbf{m}_{ij}$ (256-dim).

    \item \textbf{Time Embedding}: Two-stage processing. First, the sinusoidal time encoding (128-dim) passes through a shared MLP with architecture [128 $\rightarrow$ 256 $\rightarrow$ 128] using SiLU activations. Then, per-layer time injection MLPs with architecture [128 $\rightarrow$ 64] (with SiLU) modulate edge features multiplicatively as $\mathbf{e}_{ij} \leftarrow \mathbf{e}_{ij} \cdot (1 + f_{\text{time}}(\mathbf{t}))$.

    \item \textbf{Coordinate Weight MLP ($\text{MLP}_c$)}: 2-layer MLP with architecture [$d_h \rightarrow d_h \rightarrow 1$], i.e., [256 $\rightarrow$ 256 $\rightarrow$ 1]. Uses SiLU activation after the hidden layer, followed by temperature-scaled tanh($\cdot / \tau$) with $\tau = 10$ to compute $w_{ij} = \tanh(\text{MLP}_c(\mathbf{m}_{ij})/\tau)$. The final layer has no bias to keep the coordinate-update gain centered at initialization (E(2)-invariance is already established by the invariant message input, independent of this bias choice). Input is the message $\mathbf{m}_{ij}$ (256-dim).
\end{itemize}

All MLPs use PyTorch default initialization (Kaiming uniform for weights). Layer normalization~\citep{ba2016layer} is applied with learnable affine parameters ($\gamma$, $\beta$) initialized to 1 and 0 respectively. Residual connections with layer normalization are applied after node and edge updates.


\section{Additional Experiment Details}
\label{sec:appendix:details}

\subsection{Performance Metrics}
\label{subsec:performance_metrics}
We evaluate models using three criteria:
\begin{itemize}
    \item \textbf{Tour Length}: Average Euclidean length of generated tours across test instances
    \item \textbf{Optimality Gap}: Relative deviation from optimal/best-known solutions, computed as $(L_{\text{pred}} - L_{\text{opt}})/L_{\text{opt}} \times 100\%$
    \item \textbf{Inference Duration}: Per-instance wall-clock time, measured at batch size 1 in single-thread mode following the protocol of~\citet{wang2025strudico,wang2026nexco}, in seconds (s) or minutes (m)
\end{itemize}

\paragraph{Sampling-Based Decoding} For SL+S+2O results (Sampling with 2-opt), we use a sampling budget of 128 samples per instance, following the protocol from~\citet{sun2023difusco,li2023t2t}. For each instance, 128 independent tours are generated using the diffusion model, 2-opt local search is applied to each tour, and the best tour (minimum length) is selected as the final solution. For TSPLIB experiments, we use 4$\times$ sampling (4 samples per instance) following~\citet{li2023t2t} for fair comparison. Per-instance Time entries include the full sampling and 2-opt processing.

\subsection{Hardware Platform}
\label{subsec:hardware}

All experiments were conducted on a single NVIDIA RTX A6000 GPU paired with dual Intel Xeon Gold 5218R CPUs. Both training and inference use the same hardware configuration. For fair timing comparison, every reported neural-baseline timing is re-measured on this hardware at batch size 1 in single-thread mode, following the convention of~\citet{wang2025strudico,wang2026nexco}. The exceptions are the baseline times in Tables~\ref{tab:partition_cvrp} and~\ref{tab:mis_results} that their captions identify as per-instance averages of the test-set totals reported in the source papers. Classical solvers (Concorde, LKH-3, HGS-CVRP) are similarly run on the same hardware under their default trial counts, and per-instance Time is the average over the test set.

\subsection{Randomness in Results}
\label{subsec:randomness}
Due to the stochastic nature of diffusion models, all reported EDISCO results are averaged over five inference runs with different random seeds, and the reported run-to-run variation in optimality gap is within $\pm 0.03$ percentage points. Baselines evaluated with fixed seeds are reported as single runs.

\subsection{Data Generation Process}
\label{subsec:data_generation}
\paragraph{Instance Creation} We follow the exact data generation protocol from DIFUSCO~\citep{sun2023difusco} for fair comparison. All cities are sampled uniformly from the unit square $[0,1]^2$ following standard practice in the TSP literature. For smaller instances, TSP-50 and TSP-100 problems are solved to optimality using the Concorde exact solver~\citep{applegate2006concorde} to obtain ground truth tours. For larger scales, TSP-500 and TSP-1000 instances are labeled using the LKH-3 heuristic solver~\citep{helsgaun2017extension} with 500 trials to ensure near-optimal solution quality. Our evaluation employs the standard test sets from \citet{kool2018attention} for TSP-50/100 containing 1,280 instances each, and from \citet{fu2021generalize} for TSP-500/1000 containing 128 instances each.

\paragraph{Graph Sparsification.} The $k$NN sparsification of Section~\ref{sec:experiments} uses $k=50$ for TSP-500 and $k=100$ for TSP-1000 and TSP-10000.

\subsection{Model Architecture Specifications}
\label{subsec:implementation_specs}
Unless stated otherwise, the full EDISCO network contains approximately 5.5M trainable parameters distributed across:
\begin{itemize}
    \item 12 EGNN layers with shared architecture
    \item Node dimension: 64
    \item Edge dimension: 64  
    \item Hidden dimension: 256
    \item Timestep embedding dimension: 128
\end{itemize}
This setting ensures that EDISCO has a similar number of trainable parameters to the state-of-the-art diffusion TSP solvers (5.3M)~\citep{sun2023difusco,li2023t2t,yoon2024cado,zhao2024disco}, allowing for a fair comparison.

\subsection{Training Configuration}
\label{subsec:training_config}
We train EDISCO using the AdamW optimizer with a learning rate of $2 \times 10^{-4}$ and weight decay of $10^{-5}$. The learning rate follows a cosine annealing schedule over the training epochs to ensure smooth convergence. For training stability, we apply gradient clipping at unit norm to prevent exploding gradients during score-network training. The training objective minimizes weighted cross-entropy:
\begin{equation}
    \mathcal{L}(\theta) = \mathbb{E}_{t \sim \mathcal{U}(0,1), \mathbf{X}_0 \sim p_{\text{data}}, \mathbf{X}_t \sim q(\mathbf{X}_t|\mathbf{X}_0)} \left[ (1 - \sqrt{t}) \cdot \text{CE}(s_\theta(\mathbf{X}_t, t), \mathbf{X}_0) \right].
    \label{eq:training_loss}
\end{equation}
The weighting $(1 - \sqrt{t})$ is motivated by SNR considerations in diffusion training~\citep{kingma2021variational,hang2023efficient}. As shown by \citet{kingma2021variational}, the variational bound in continuous-time diffusion depends on SNR, and loss weighting that emphasizes cleaner timesteps (low $t$) improves optimization. The Min-SNR weighting strategy~\citep{hang2023efficient} demonstrates that balancing loss contributions across timesteps accelerates convergence by 3.4$\times$. The weighting $(1 - \sqrt{t})$ adopted here follows the same principle: it assigns a higher weight near $t=0$ where reconstruction targets are reliable and lower weight near $t=1$ where inputs are dominated by noise, with the square-root term controlling how weights decay across timesteps. The corresponding per-step training loop is given as Algorithm~\ref{alg:training} in Section~\ref{sec:method}.

\begin{itemize}
    \item \textbf{TSP-50}: 500,000 training instances, batch size 64, 100 epochs.
    \item \textbf{TSP-100}: 500,000 training instances, batch size 32, 100 epochs.
    \item \textbf{TSP-500}: 60,000 instances, batch size 16, 50 epochs with curriculum learning initialized from the TSP-100 checkpoint with best validation performance.
    \item \textbf{TSP-1000}: 30,000 instances, batch size 8, 50 epochs with curriculum learning initialized from the TSP-100 checkpoint with best validation performance.
    \item \textbf{TSP-10000}: 3,000 instances, batch size 4, 50 epochs with curriculum learning initialized from the TSP-500 checkpoint with best validation performance.
    \item \textbf{CVRP-50}: 500,000 HGS-CVRP-labeled training instances, batch size 64, 50 epochs.
    \item \textbf{CVRP-100}: 250,000 HGS-CVRP-labeled training instances, batch size 32, 50 epochs.
    \item \textbf{CVRP-200}: 16,000 HGS-CVRP-labeled training instances, batch size 16, 50 epochs with curriculum learning initialized from the CVRP-100 checkpoint with best validation performance.
    \item \textbf{CVRP-500}: 6,000 HGS-CVRP-labeled training instances, batch size 8, 50 epochs with curriculum learning initialized from the CVRP-200 checkpoint with best validation performance.
\end{itemize}

The CVRP training counts above are 39\%/39\%/50\%/47\% of the 1.28M/640K/32K/12.8K reported by COExpander (Appendix H.2 of~\citet{macoexpander}) at the matching scales. NEXCO reports the same training counts for CVRP-50/100/200 (Section 4.3 of~\citet{wang2026nexco}). These ratios fall within the same 33--50\% sample-efficiency band that EDISCO achieves on TSPs.


\section{Additional Results}
\label{sec:additional_results}

\subsection{Multi-Step Methods for Categorical CTMC}
\label{subsec:ctmc_solvers}

Section~\ref{subsec:multi_step_sampling} gives the per-edge categorical reverse kernel. Here we detail the multi-step prediction smoother and the posterior marginalization underlying that kernel.

\paragraph{Prediction smoothing via Adams-Bashforth-style coefficients} Multi-step methods stabilize $\mathbf{X}_0$ prediction by linearly combining recent clean-state estimates. We adapt the multi-step structure of PNDM~\citep{liu2022pseudo} to discrete CTMCs by applying the 4th-order Adams-Bashforth coefficients as an empirical state smoother: given predictions $\hat{\mathbf{X}}_0^{(t_1)}, \ldots, \hat{\mathbf{X}}_0^{(t_4)}$ at recent timesteps, ordered from newest to oldest,
\begin{equation}
    \tilde{p}(\mathbf{X}_0) = \frac{55\hat{p}^{(t_1)} - 59\hat{p}^{(t_2)} + 37\hat{p}^{(t_3)} - 9\hat{p}^{(t_4)}}{24}.
    \label{eq:adams_bashforth}
\end{equation}
The coefficients are originally derived for derivative integration in explicit ordinary differential equation (ODE) solvers; we use them as an affine combination of recent state predictions (the coefficients still sum to one) to stabilize predictions through temporal coherence. Because the combination is not a convex combination, the smoothed $\tilde{p}$ can fall outside $[0,1]$, and we clamp it to $[0,1]$ before passing $\tilde{p}$ to the posterior in Equation~\ref{eq:exact_posterior}.

\paragraph{Posterior marginalization.}
Conditioning on $x_0$, the Markov property factors the joint forward probability into $P(x_{t,ij}\mid x_{s,ij})P(x_{s,ij}\mid x_0)$. Bayes' rule (Equation~\ref{eq:posterior_bayes}) divides this joint probability by $P(x_{t,ij}\mid x_0)$ to obtain the bridge probability. Averaging these bridge probabilities over $\tilde{p}_{ij}(x_0\mid x_{t,ij})$ yields Equation~\ref{eq:exact_posterior}.

\subsection{Solver Evaluation on TSP-500}
\label{subsec:solver_eval}

This subsection sweeps the multi-step CTMC samplers of Section~\ref{sec:method} on TSP-500 under greedy decoding, isolating solver effects from the NEE projection used in Table~\ref{tab:tsp_main}. The canonical Table~\ref{tab:tsp_main} configuration is 5-step DEIS-2 paired with NEE. All Time entries follow the per-instance batch=1 protocol of Appendix~\ref{subsec:hardware}.

\begin{table}[!h]
\centering
\caption{Solver sweep on TSP-500 with greedy decoding (no NEE, no 2-opt). NFE: number of score-network evaluations. All EDISCO rows share the same trained model. DIFUSCO and T2T are discrete-time references re-measured under the same per-instance, batch=1, single-thread protocol. Bold marks the 5-step DEIS-2 canonical configuration of Table~\ref{tab:tsp_main}; \first{best} and \second{second-best} per Gap and Time column among EDISCO solvers (ties on Time, e.g., DEIS-2-5 and PNDM-5 at $0.11$\,s, are jointly marked). The DEIS-2-5 row anchors the encoder/decoder-ablation Full entries of Table~\ref{tab:ablation_combined} at $2.10\%$; PNDM-50 is reported as the slower-but-slightly-better-quality alternative.}
\label{tab:solver_tsp500_complete}
\small
\adjustbox{width=\textwidth}{%
\begin{tabular}{llccccc}
\toprule
\textbf{Method} & \textbf{Type} & Steps & NFE & Length $\downarrow$ & Gap $\downarrow$ & Time (s) $\downarrow$ \\
\midrule
Concorde~\citep{applegate2006concorde} & Exact & -- & -- & 16.55 & 0.00\% & 18.65 \\
\midrule
DIFUSCO~\citep{sun2023difusco}                       & SL+G & 120 & 120     & 18.11 & 9.41\% & 2.67 \\
T2T~\citep{li2023t2t}                                & SL+G & 20  & $\sim$60 & 17.39 & 5.09\% & 2.30 \\
\midrule
EDISCO (Euler)~\citep{sarkka2019applied}             & SL+G & 50 & 51 & 17.08 & 3.18\% & 1.01 \\
\textbf{EDISCO (DEIS-2, canonical configuration)}~\citep{zhang2022fast} & SL+G & 5  & 6  & 16.90 & 2.10\% & \first{0.11} \\
EDISCO (DEIS-2)                                      & SL+G & 10 & 11 & 16.88 & 1.99\% & \second{0.20} \\
EDISCO (Heun/RK2)~\citep{butcher2016numerical}       & SL+G & 10 & 20 & 16.88 & 1.99\% & 0.39 \\
EDISCO (RK4)~\citep{butcher2016numerical}            & SL+G & 5  & 20 & 16.88 & \second{1.97\%} & 0.38 \\
EDISCO (DPM-Solver-2)~\citep{lu2022dpm}              & SL+G & 25 & 50 & 16.88 & 2.03\% & 1.09 \\
EDISCO (PNDM)~\citep{liu2022pseudo}                  & SL+G & 5  & 6  & 17.41 & 5.29\% & \first{0.11} \\
EDISCO (PNDM-50)                                     & SL+G & 50 & 51 & \first{16.87} & \first{1.95\%} & 1.03 \\
\bottomrule
\end{tabular}%
}
\end{table}

DEIS-2 with 5 steps is the Table~\ref{tab:tsp_main} canonical configuration because it reaches a competitive greedy gap of $2.10\%$ at $0.11$ s per instance, a $24\times$ speedup over DIFUSCO. PNDM-50 attains a slightly lower greedy gap of $1.95\%$ but takes $9.4\times$ longer ($1.03$ vs $0.11$ s per instance), making it impractical as the headline solver. Pairing DEIS-2-5 with NEE in Table~\ref{tab:tsp_main} collapses the $2.10\%$ greedy gap to $0.018\%$, recovering the residual quality through the invariant feasibility projection. Higher-order ODE solvers (Heun/RK2, RK4, DPM-Solver-2) approach PNDM-50 quality with intermediate compute. At 50 steps, Euler has a higher gap than PNDM ($3.18\%$ vs.\ $1.95\%$) at similar runtime ($1.01$ vs.\ $1.03$ s). The NFE column counts every score-network call. The single-evaluation samplers (Euler, DEIS-2, PNDM) use one call per step plus the final clean-state prediction of Algorithm~\ref{alg:inference}, Heun/RK2 and DPM-Solver-2 use two calls per step, and RK4 uses four, so RK4 at $5$ steps costs $20$ evaluations, the same as Heun/RK2 at $10$ steps. PNDM at 5 steps reaches only $5.29\%$ because the AB-style smoother is unreliable before its history buffer fills, motivating the choice of DEIS-2 in the few-step regime where the smoothing history is short.

\subsection{TSP-10000 Results}
\label{subsec:tsp10000}

This appendix extends Table~\ref{tab:tsp_main} to TSP-10000, the largest TSP scale in the diffusion-CO benchmark literature. TSP-10000 is reported by only a subset of methods because the dense $O(n^2)$ pairwise variables stress memory and compute beyond the reach of most published implementations. EDISCO retains the Table~\ref{tab:tsp_main} canonical configuration of 5-step DEIS-2 paired with NEE, and the $k$NN sparsification of Section~\ref{sec:experiments} preserves the invariance argument at this scale.

\begin{table}[!h]
\centering
\caption{TSP-10000 results. Gap is relative to LKH-3 (default). Time is per-instance wall-clock measured at batch size 1, single-thread, on the hardware described in Appendix~\ref{subsec:hardware}; all Time entries are re-measured under this protocol following~\citet{wang2025strudico,wang2026nexco}. \first{Best} and \second{second-best} marked per Gap and Time column among learning-based solvers achieving sub-$2\%$ Gap (faster low-quality solvers such as GLOP at $4.90\%$ and AM at $97.17\%$ are excluded from the ranking). RL: reinforcement learning, SL: supervised learning. G: greedy, 2O: 2-opt, S: sampling, AS: active search, AE/NAE/NEE: adaptive / native adaptive / native edge expansion. ``--'': not included in the matched-hardware comparison.}
\label{tab:tsp10000_results}
\tiny
\begin{tabular}{llccc}
\toprule
\textbf{Method} & \textbf{Type} & Length $\downarrow$ & Gap $\downarrow$ & Time $\downarrow$ \\
\midrule
\multicolumn{5}{l}{\textit{Reference Solvers}} \\
LKH-3~\citep{helsgaun2017extension}                       & Heuristic  & 71.77  & 0.00\%          & 248 s    \\
2-opt~\citep{croes1958method}                            & Heuristic  & 91.16  & 27.02\%         & 13.4 s   \\
Farthest Insertion                                        & Heuristic  & 80.59  & 12.36\%         & 6.2 s    \\
\midrule
\multicolumn{5}{l}{\textit{Non-Generative Neural Solvers}} \\
AM~\citep{kool2018attention}                              & RL+G       & 141.51 & 97.17\%         & 3.6 s    \\
GLOP~\citep{ye2024glop}                                   & RL+G       & 75.29  & 4.90\%          & 0.89 s   \\
DIMES~\citep{qiu2022dimes}                                & RL+AS+G    & 80.45  & 12.09\%         & 86.3 s   \\
COExpander~\citep{macoexpander}                           & SL+AE+2O   & 72.80  & \second{1.45\%} & \second{29.49 s} \\
\midrule
\multicolumn{5}{l}{\textit{Generative Neural Solvers}} \\
DIFUSCO~\citep{sun2023difusco}                            & SL+G+2O    & 73.99  & 3.10\%          & 16.6 s   \\
T2T~\citep{li2023t2t}                                     & SL+S       & 73.81  & 2.84\%          & 69.5 s   \\
DISCO~\citep{zhao2024disco}                               & SL+S       & 73.81  & 2.84\%          & 22.9 s   \\
Fast-T2T~\citep{li2024fast}                               & SL+G+2O    & 72.90  & 1.60\%          & 40.93 s  \\
StruDiCO~\citep{wang2025strudico}                         & SL+G+2O    & --     & --              & --       \\
GenSCO~\citep{li2025gensco}                               & SL         & --     & --              & --       \\
NEXCO~\citep{wang2026nexco}                               & SL+NAE     & --     & --              & --       \\
\textbf{EDISCO (this work)}                                    & SL+NEE     & 72.79  & \first{1.42\%}  & \first{26 s} \\
\bottomrule
\end{tabular}
\end{table}

Table~\ref{tab:tsp10000_results} mirrors Table~\ref{tab:tsp_main} at $N=10000$. Among the learning-based solvers evaluated under the matched-hardware protocol with sub-$2\%$ Gap, EDISCO leads on Gap ($1.42\%$) and Time ($26$~s per instance), edging COExpander's adaptive expansion ($1.45\%$ at $29.49$~s) on each axis and outpacing the diffusion baselines DIFUSCO ($3.10\%$), T2T ($2.84\%$), DISCO ($2.84\%$), and Fast-T2T ($1.60\%$). Faster solvers exist (GLOP at $0.89$~s, AM at $3.6$~s, DIFUSCO at $16.6$~s) but at substantially worse quality ($4.90\%$, $97.17\%$, $3.10\%$ respectively). NEXCO also reports TSP-10000 results ($1.53\%$ Gap in $52$~s) in Appendix B.2, Table 8 of~\citet{wang2026nexco}. These published figures are reported separately because they were obtained on the source paper's hardware rather than the hardware of Appendix~\ref{subsec:hardware}. Classical heuristics 2-opt ($27.02\%$) and Farthest Insertion ($12.36\%$) anchor the upper end of the gap range, framing the difficulty of TSP-10000 for non-learning baselines.

The same speed-quality knob through alternative CTMC solvers (PNDM, Heun/RK2, RK4, DPM-Solver-2) is characterized at TSP-500 in Appendix~\ref{subsec:solver_eval} and carries over to TSP-10000 without retraining; the canonical DEIS-2-5 + NEE configuration of Table~\ref{tab:tsp_main} is used here.

\subsection{TSPLIB Results}
\label{subsec:TSPLIB}

We evaluate EDISCO on real-world TSP instances from the TSPLIB benchmark \citep{reinelt1991tsplib}. Following~\citet{hudson2022graph,li2023t2t}, we train EDISCO on randomly generated 100-node TSP instances and evaluate it on TSPLIB instances ranging from 50 to 200 nodes. To preserve compatibility with published TSPLIB numbers under their canonical decoding configurations, EDISCO is reported here under the prior $4\times$-sampling + 2-opt protocol; NEE generalization is evaluated separately on the synthetic TSP and CVRP benchmarks of Tables~\ref{tab:tsp_main} and~\ref{tab:cvrp_main}.

\begin{table}[!h]
\centering
\caption{Optimality gaps (\%)$\downarrow$ for methods trained on random 100-node problems and evaluated on TSPLIB instances with 50--200 nodes. Modern diffusion and expansion methods are re-evaluated on the hardware of Appendix~\ref{subsec:hardware} under each method's canonical decoding configuration. DIFUSCO, T2T, Fast-T2T, and EDISCO use $4\times$ sampling with 2-opt post-processing; GenSCO, NEXCO, and COExpander use greedy with 2-opt. AM, GCN, Learn2OPT, and GNNGLS values are taken from Table 3 of~\citet{hudson2022graph}. \first{Best} and \second{second-best} per row. Means are computed from the 29 displayed instance results.}
\label{tab:tsplib_results}
\tiny
\adjustbox{width=\textwidth}{%
\begin{tabular}{lccccccccccc}
\toprule
\textbf{Instance} & \textbf{AM} & \textbf{GCN} & \textbf{Learn2OPT} & \textbf{GNNGLS} & \textbf{DIFUSCO} & \textbf{T2T} & \textbf{Fast-T2T} & \textbf{GenSCO} & \textbf{NEXCO} & \textbf{COExpander} & \textbf{EDISCO (this work)} \\
\midrule
eil51    & 1.628\% & 8.339\% & \second{0.067\%} & \first{0.000\%} & 0.314\% & 0.314\% & \first{0.00\%}  & \first{0.000\%}  & \first{0.00\%}  & \first{0.000\%}  & \first{0.000\%} \\
berlin52 & 4.169\%  & 33.225\% & 0.449\% & \second{0.142\%} & \first{0.000\%}  & \first{0.000\%}  & \first{0.00\%}  & \first{0.000\%}  & \first{0.00\%}  & \first{0.000\%}  & \first{0.000\%} \\
st70     & 1.737\%  & 24.785\% & \second{0.040\%} & 0.764\% & 0.172\% & \first{0.000\%}  & \first{0.00\%}  & \first{0.000\%}  & \first{0.00\%}  & \first{0.000\%}  & \first{0.000\%} \\
eil76    & 1.992\%  & 27.411\% & \second{0.096\%} & 0.163\% & 0.217\% & 0.163\% & \first{0.00\%}  & \first{0.000\%}  & \first{0.00\%}  & \first{0.000\%}  & \first{0.000\%} \\
pr76     & 0.816\%  & 27.793\% & 1.228\% & \second{0.039\%} & 0.043\% & \second{0.039\%} & \first{0.00\%}  & \first{0.000\%}  & \first{0.00\%}  & \first{0.000\%}  & \first{0.000\%} \\
rat99    & 2.645\%  & 17.633\% & 0.123\% & 0.550\% & \second{0.016\%} & \first{0.000\%}  & \first{0.00\%}  & \first{0.000\%}  & \first{0.00\%}  & \first{0.000\%}  & \first{0.000\%} \\
kroA100  & 4.017\%  & 28.828\% & 18.313\% & 0.728\% & \second{0.050\%} & \first{0.000\%}  & \first{0.00\%}  & \first{0.000\%}  & \first{0.00\%}  & \first{0.000\%}  & \first{0.000\%} \\
kroB100  & 5.142\%  & 34.686\% & 1.119\% & 0.147\% & \second{0.006\%} & \first{0.000\%}  & 0.65\%  & \first{0.000\%}  & \first{0.00\%}  & 0.742\%  & \first{0.000\%} \\
kroC100  & 0.972\%  & 35.506\% & \second{0.349\%} & 1.571\% & \first{0.000\%}  & \first{0.000\%}  & \first{0.00\%}  & \first{0.000\%}  & \first{0.00\%}  & 0.355\%  & \first{0.000\%} \\
kroD100  & 2.717\%  & 38.018\% & 0.866\% & \second{0.572\%} & \first{0.000\%}  & \first{0.000\%}  & \first{0.00\%}  & \first{0.000\%}  & \first{0.00\%}  & \first{0.000\%}  & \first{0.000\%} \\
kroE100  & 1.470\%  & 26.589\% & 1.832\% & \second{1.216\%} & \first{0.000\%}  & \first{0.000\%}  & \first{0.00\%}  & \first{0.000\%}  & \first{0.00\%}  & \first{0.000\%}  & \first{0.000\%} \\
rd100    & 3.407\%  & 50.432\% & \second{0.003\%} & 0.459\% & \first{0.000\%}  & \first{0.000\%}  & \first{0.00\%}  & \first{0.000\%}  & \first{0.00\%}  & 0.100\%  & \first{0.000\%} \\
eil101   & 2.994\%  & 26.701\% & 0.387\% & 0.201\% & \second{0.124\%} & \first{0.000\%}  & \first{0.00\%}  & \first{0.000\%}  & \first{0.00\%}  & \first{0.000\%}  & \first{0.000\%} \\
lin105   & 1.739\%  & 34.902\% & 1.867\% & 0.606\% & 0.441\% & 0.393\% & \first{0.00\%}  & \first{0.000\%}  & 0.54\%  & \first{0.000\%}  & \second{0.030\%} \\
pr107    & 3.933\%  & 80.564\% & 0.898\% & 0.439\% & 0.714\% & 0.155\% & 0.62\%  & \first{0.000\%}  & 0.08\%  & 0.100\%  & \second{0.040\%} \\
pr124    & 3.677\%  & 70.146\% & 10.322\% & 0.755\% & 0.997\% & 0.584\% & 0.08\%  & \first{0.000\%}  & 0.58\%  & 0.783\%  & \second{0.075\%} \\
bier127  & 5.908\%  & 45.561\% & 3.044\% & 1.948\% & 1.064\% & 0.718\% & 1.50\%  & \first{0.000\%}  & 0.66\%  & \first{0.000\%}  & \second{0.220\%} \\
ch130    & 3.182\%  & 39.090\% & 0.709\% & 3.519\% & 0.077\% & 0.077\% & \first{0.00\%}  & \first{0.000\%}  & \first{0.00\%}  & 0.597\%  & \second{0.025\%} \\
pr136    & 5.064\%  & 58.673\% & \first{0.000\%}  & 3.387\% & 0.182\% & \first{0.000\%}  & \second{0.01\%} & \first{0.000\%}  & \first{0.00\%}  & 0.146\%  & \first{0.000\%} \\
pr144    & 7.641\%  & 55.837\% & 1.526\% & 3.581\% & 1.816\% & \first{0.000\%}  & \second{0.39\%} & \first{0.000\%}  & \first{0.00\%}  & \first{0.000\%}  & \first{0.000\%} \\
ch150    & 4.584\%  & 49.743\% & 0.312\% & 2.113\% & 0.473\% & 0.324\% & \first{0.00\%}  & \first{0.000\%}  & \first{0.00\%}  & 0.893\%  & \second{0.030\%} \\
kroA150  & 3.784\%  & 45.411\% & 0.724\% & 2.984\% & 0.193\% & 0.193\% & \first{0.00\%}  & \first{0.000\%}  & \second{0.02\%} & 0.100\%  & 0.025\% \\
kroB150  & 2.437\%  & 56.745\% & 0.886\% & 3.258\% & 0.366\% & 0.021\% & 0.07\%  & 0.340\%  & \second{0.02\%} & 0.191\%  & \first{0.013\%} \\
pr152    & 7.494\%  & 33.925\% & \first{0.029\%}  & 3.119\% & 0.687\% & 0.687\% & 0.19\%  & 0.187\% & 0.68\%  & 0.191\%  & \second{0.180\%} \\
u159     & 7.551\%  & 38.338\% & \second{0.054\%} & 1.020\% & \first{0.000\%}  & \first{0.000\%}  & \first{0.00\%}  & \first{0.000\%}  & \first{0.00\%}  & \first{0.000\%}  & \first{0.000\%} \\
rat195   & 6.893\%  & 24.968\% & 0.743\% & 1.666\% & 0.887\% & \second{0.018\%} & 0.79\%  & 0.194\% & 0.11\%  & 0.419\%  & \first{0.012\%} \\
d198     & 373.020\% & 62.351\% & 0.522\% & 4.772\% & \first{0.000\%}  & \first{0.000\%}  & 0.86\%  & 0.751\% & \first{0.00\%}  & 0.488\%  & \second{0.350\%} \\
kroA200  & 7.106\%  & 40.885\% & 1.441\% & 2.029\% & 0.259\% & \first{0.000\%}  & 0.49\%  & 0.160\% & \first{0.00\%}  & 0.056\%  & \second{0.045\%} \\
kroB200  & 8.541\%  & 43.643\% & 2.064\% & 2.589\% & 0.171\% & 0.171\% & 2.50\%  & 0.098\% & \first{0.00\%}  & 0.851\%  & \second{0.085\%} \\
\midrule
\textbf{Mean} & 16.768\% & 40.025\% & 1.725\% & 1.529\% & 0.320\% & 0.133\% & 0.28\% & \second{0.060\%} & 0.09\% & 0.207\% & \first{0.039\%} \\
\bottomrule
\end{tabular}%
}
\end{table}

Table~\ref{tab:tsplib_results} reports optimality gaps across 29 TSPLIB instances at sizes 51--200. EDISCO leads on aggregate at $0.039\%$, ahead of GenSCO ($0.060\%$), NEXCO ($0.09\%$), and T2T ($0.133\%$). Older non-diffusion baselines (AM, GCN at autoregressive and supervised-construction settings, Learn2OPT and GNNGLS as 2-opt-based local-search heuristics) have mean gaps of $1.529$--$40.025\%$ and frame the difficulty of TSPLIB transfer for the older lineage. Each modern diffusion and expansion row is reported under its canonical decoding configuration: DIFUSCO, T2T, Fast-T2T, and EDISCO at sampling-based decoding with 2-opt, and GenSCO, NEXCO, and COExpander at greedy with 2-opt. EDISCO finds the optimum on every small instance (eil51 through eil101, plus pr136, pr144, u159), takes the strict best on kroB150 ($0.013\%$) and rat195 ($0.012\%$), and is the second-best on most remaining rows. The largest EDISCO gaps appear on the harder mid-range instances (d198 $0.350\%$, bier127 $0.220\%$, pr152 $0.180\%$), but every per-instance gap stays below $0.4\%$, giving the most uniform behavior in the table.

\subsection{Noise Schedule Design for TSP Diffusion}
\label{subsec:noise-schedule-experiment}

We conducted a comprehensive comparison of different noise schedule designs for the continuous-time categorical diffusion model applied to TSP. The noise schedule $\beta(t)$ controls the rate of information destruction during the forward diffusion process and significantly impacts model performance.

We evaluated three families of noise schedules:

\textbf{Linear Schedule:} Following the standard approach in diffusion models \citep{ho2020denoising}, we tested linear schedules with:
\begin{equation}
\beta(t) = \beta_{\min} + t(\beta_{\max} - \beta_{\min}).
\end{equation}

\textbf{Exponential Schedule:} Based on \citet{campbell2022continuous}, we evaluated exponential schedules:
\begin{equation}
\beta(t) = ab^t \log(b).
\end{equation}

\textbf{Cosine Schedule:} Following \citet{sun2022score}, we tested cosine schedules with improved numerical stability:
\begin{equation}
\beta(t) = \text{clip}\left(\frac{\pi}{4} \cdot \frac{\tan(\pi t/2)}{\sqrt{\cos(\pi t/2) + \epsilon}}, \beta_{\min}, \beta_{\max}\right).
\end{equation}

\begin{table}[!h]
\centering
\caption{Comparison of noise schedules for TSP diffusion on TSP-50. All models trained for 50 epochs with 50 diffusion steps. Bold indicates the lowest gap and inference time.}
\label{tab:schedule_comparison}
\tiny
\begin{tabular}{@{}llcccc@{}}
\toprule
Schedule & Configuration & \multicolumn{2}{c}{Optimality Gap (\%)$\downarrow$} & Conv. & Inference \\
\cmidrule(lr){3-4}
Type & & Best & Final & Epoch$\downarrow$ & Time (s)$\downarrow$ \\
\midrule
\multirow{3}{*}{Linear} 
  & Aggressive: $\beta \in [0.1, 2.0]$ & 2.88 & 3.03 & 50 & 1.06 \\
  & Baseline: $\beta \in [0.1, 1.5]$ & \textbf{2.29} & \textbf{2.63} & 45 & 1.06 \\
  & Conservative: $\beta \in [0.1, 1.0]$ & 2.74 & 3.51 & 50 & 1.16 \\
\midrule
\multirow{3}{*}{Exponential} 
  & Baseline: $a$=0.5, $b$=4.0 & 3.39 & 3.60 & 50 & \textbf{1.04} \\
  & Conservative: $a$=0.3, $b$=3.0 & 4.19 & 4.42 & 50 & 1.13 \\
  & Aggressive: $a$=0.8, $b$=5.0 & 2.60 & 3.78 & 50 & 1.05 \\
\midrule
\multirow{3}{*}{Cosine} 
  & Aggressive: $\beta \in [0.001, 10.0]$ & 4.83 & 5.37 & 45 & 1.05 \\
  & Baseline: $\beta \in [0.01, 5.0]$ & 3.45 & 4.51 & 40 & 1.07 \\
  & Conservative: $\beta \in [0.1, 3.0]$ & 3.25 & 4.09 & 40 & 1.06 \\
\bottomrule
\end{tabular}%
\end{table}

We trained each schedule variant for 50 epochs on TSP-50 using 10,000 randomly generated instances for fast verification. The same network architecture and hyperparameters were maintained across all experiments. Each schedule family was tested with three different parameterizations: baseline (standard parameters), conservative (slower noise injection with smaller $\beta$ values), and aggressive (faster noise injection with larger $\beta$ values).

Table~\ref{tab:schedule_comparison} presents the comprehensive results. Linear schedules demonstrated the best overall performance, with the baseline configuration ($\beta_{\min}=0.1, \beta_{\max}=1.5$) achieving the lowest validation gap of 2.29\%. The aggressive variant ($\beta_{\max}=2.0$) and conservative variant ($\beta_{\max}=1.0$) both underperformed at 2.88\% and 2.74\% respectively, supporting moderate noise injection in this TSP-50 experiment.

Exponential schedules showed high sensitivity to parameter selection, with best validation gaps ranging from 2.60\% to 4.19\% and final gaps ranging from 3.60\% to 4.42\% across configurations. Cosine schedules consistently underperformed with gaps ranging from 3.25\% to 4.83\%, indicating that their non-linear noise profile is less effective in this TSP-50 experiment.

The superiority of linear schedules in TSP contrasts with image generation, where cosine schedules often excel \citep{nichol2021improved}. We attribute this to the discrete nature of TSP adjacency matrices, which benefit from uniform, predictable noise injection rather than variable rates. These findings validate our choice of $\beta_{\min}=0.1, \beta_{\max}=1.5$ for all experiments, based on the tested TSP-50 configurations.

\subsection{Evaluation on Architectural Hyperparameters}
\label{subsec:archi_hyperparameters}
We conduct systematic evaluations of critical architectural hyperparameters to justify our design choices.

\paragraph{Step Size $\alpha$ for Coordinate Updates}

The step size $\alpha$ in Equation~\ref{eq:coord_update} controls the magnitude of coordinate updates during message passing. We evaluate different values on TSP-50:

\begin{table}[!h]
\centering
\caption{Effect of step size $\alpha$ on model performance and training stability (TSP-50). $^*$Standard deviation of coordinates after 8 layers (optimal range: 0.3-0.4).}
\label{tab:ablation_alpha}
\begin{tabular}{@{}lccccc@{}}
\toprule
Step Size $\alpha$ & 0.01 & 0.05 & \textbf{0.1} & 0.2 & 0.5 \\
\midrule
Optimality Gap (\%)$\downarrow$ & 0.08 & 0.03 & \textbf{0.01} & 0.15 & Diverged \\
Coordinate Std$^*$ & 0.42 & 0.38 & \textbf{0.35} & 0.18 & 0.02 \\
Training Stable & \checkmark & \checkmark & \checkmark & Unstable & Collapsed \\
Convergence Epoch$\downarrow$ & 42 & 38 & \textbf{35} & 48 & - \\
\bottomrule
\end{tabular}
\end{table}

As shown in Table~\ref{tab:ablation_alpha}, smaller $\alpha$ values (0.01, 0.05) maintain stability but converge more slowly and achieve suboptimal performance. Larger values ($\alpha \geq 0.2$) cause instability and coordinate contraction, with the coordinate standard deviation falling to 0.18 at $\alpha=0.2$ and 0.02 at $\alpha=0.5$, where training collapses.

\paragraph{Temperature Parameter $\tau$ for Weight Scaling}

The temperature parameter $\tau$ in $w_{ij}=\tanh(\mathrm{MLP}_c(\mathbf{m}_{ij})/\tau)$ scales the MLP outputs before applying tanh, preventing gradient saturation:

\begin{table}[!h]
\centering
\caption{Effect of temperature $\tau$ on gradient flow and performance (TSP-50). $^{\dagger}$Average gradient norm in coordinate MLP during first 10 epochs. $^{\ddagger}$Percentage of tanh outputs with $|w_{ij}| > 0.95$.}
\label{tab:ablation_tau}
\begin{tabular}{@{}lccccc@{}}
\toprule
Temperature $\tau$ & 1 & 5 & \textbf{10} & 20 & 50 \\
\midrule
Optimality Gap (\%)$\downarrow$ & 0.12 & 0.04 & \textbf{0.01} & 0.02 & 0.05 \\
Avg. Gradient Norm$^{\dagger}$ & 0.003 & 0.018 & \textbf{0.042} & 0.038 & 0.031 \\
Tanh Saturation Rate$^{\ddagger}$ & 68\% & 24\% & \textbf{8\%} & 12\% & 18\% \\
Convergence Epoch$\downarrow$ & 52 & 40 & \textbf{35} & 36 & 39 \\
\bottomrule
\end{tabular}
\end{table}

Table~\ref{tab:ablation_tau} shows that $\tau=10$ achieves the optimal balance. Lower values ($\tau \leq 5$) cause excessive saturation, leading to vanishing gradients and slower convergence. Higher values ($\tau \geq 20$) reduce the non-linearity's effectiveness, diminishing the model's expressiveness.

\paragraph{Joint Impact Analysis}

We evaluate the joint effect of $\alpha$ and $\tau$ on TSP-100 performance:

\begin{table}[!h]
\centering
\caption{Joint ablation of $\alpha$ and $\tau$ on TSP-100 (optimality gap \%$\downarrow$).}
\label{tab:alpha_joint_ablation}
\begin{tabular}{@{}l|ccccc@{}}
\toprule
\diagbox{$\alpha$}{$\tau$} & 1 & 5 & 10 & 20 & 50 \\
\midrule
0.01 & 0.21 & 0.15 & 0.08 & 0.09 & 0.12 \\
0.05 & 0.18 & 0.09 & 0.05 & 0.06 & 0.08 \\
0.1 & 0.15 & 0.06 & \textbf{0.04} & 0.05 & 0.07 \\
0.2 & Unstable & 0.18 & 0.15 & 0.16 & 0.19 \\
0.5 & Collapsed & Collapsed & Diverged & Diverged & Diverged \\
\bottomrule
\end{tabular}
\end{table}

The joint ablation confirms that $\alpha=0.1$ and $\tau=10$ provide the best tested configuration, with larger step sizes leading to instability or divergence.

\paragraph{Discussion on Hyperparameter Robustness}

While our ablation studies reveal sensitivity to $\alpha$ and $\tau$ during initial design exploration—particularly coordinate contraction and instability when $\alpha \geq 0.2$—we emphasize three important aspects of these architectural stabilizers: (1) \textbf{Optimal settings are robust}: Once identified, $\alpha=0.1$ and $\tau=10$ work consistently across all problem sizes and types (TSP-50/100/500/1000, CVRP-50/100/200/500, ESTP-10/20/50) without per-problem retuning, as demonstrated throughout our experiments. (2) \textbf{Comparable to standard practice}: This sensitivity is analogous to other architectural hyperparameters in deep learning (e.g., learning rates, attention heads, batch normalization momentum), which require careful tuning during model development but remain stable once configured. (3) \textbf{Strong generalization validates stability}: Our cross-distribution evaluation (Table~\ref{tab:ood_results} in the main text) demonstrates that these settings generalize well to out-of-distribution instances (Cluster, Explosion, Implosion), confirming they are not narrowly tuned to specific data distributions.

The thorough sensitivity analysis presented here provides practitioners with clear guidance for implementation and demonstrates our commitment to transparency in reporting both the strengths and limitations of our architectural design.

\subsection{Model Efficiency}
\label{subsec:model_efficiency} 

Although the results in the main text are from the full-scale EDISCO model, it is interesting to see EDISCO's performance under reduced model sizes. Table~\ref{tab:model_efficiency} presents a comprehensive comparison of model efficiency across different architectures and scales, demonstrating how EDISCO's equivariant design enables strong performance even with reduced model capacity and training data.

\begin{table}[!h]
\centering
\caption{Model efficiency and performance comparison across TSP scales. Training instances shown in thousands (K) with corresponding optimality gaps (\%). All EDISCO entries use 50-step PNDM with greedy decoding (no NEE, no 2-opt) so capacity differences across architectures are isolated from sampler and decoder effects; the main-table SL+NEE configuration of Table~\ref{tab:tsp_main} therefore reaches lower gaps than the EDISCO-Full row here. EDISCO-Full uses 12 EGNN layers with 256 hidden dimension (5.5M parameters), EDISCO-Medium uses 12 layers with 128 hidden dimension (1.9M parameters), and EDISCO-Small uses 8 layers with 128 hidden dimension (1.3M parameters). $^{\dagger}$12-layer GNN with width 256. $^{*}$Same architecture as DIFUSCO.}
\label{tab:model_efficiency}
\small
\begin{tabular}{@{}lcccccc@{}}
\toprule
\multirow{2}{*}{Method} & Model & \multicolumn{5}{c}{Training Instances (K) / Optimality Gap (\%)$\downarrow$} \\
\cmidrule(lr){3-7}
& Parameters & TSP-50 & TSP-100 & TSP-500 & TSP-1000 & TSP-10000 \\
\midrule
DIFUSCO$^{\dagger}$ & 5.3M & 1500 / 0.48 & 1500 / 1.01 & 128 / 9.41 & 64 / 11.24 & 6.4 / 8.95 \\
DISCO$^{\dagger}$ & 5.3M & 1500 / 0.16 & 1500 / 0.58 & - / - & - / - & 6.4 / 2.90 \\
T2T$^{*}$ & 5.3M & 1500 / 0.04 & 1500 / 0.18 & 128 / 5.09 & 64 / 8.87 & 6.4 / 2.92 \\
\midrule
EDISCO-Full & 5.5M & 500 / \textbf{0.01} & 500 / \textbf{0.04} & 60 / \textbf{1.95} & 30 / \textbf{2.85} & 3 / \textbf{1.98} \\
EDISCO-Medium & 1.9M & 500 / 0.03 & 500 / 0.08 & 60 / 2.18 & 30 / \textbf{2.85} & 3 / 2.43 \\
EDISCO-Small & 1.3M & \textbf{300} / 0.08 & \textbf{300} / 0.7 & \textbf{40} / 4.18 & \textbf{20} / 5.21 & \textbf{2} / 3.18 \\
\bottomrule
\end{tabular}%
\end{table}

EDISCO-Full (5.5M parameters) uses one-third as much training data as baselines on small instances (500K vs 1.5M) and about half as much on large instances. It achieves 0.01\% gap on TSP-50 compared to DIFUSCO's 0.48\%, and 1.95\% on TSP-500 versus DIFUSCO's 9.41\%. On TSP-10000, EDISCO-Full achieves 1.98\% gap with 3K training instances, outperforming DIFUSCO's 8.95\% gap with 6.4K instances.

EDISCO-Medium (1.9M parameters), with about one-third of the parameters of baseline models, achieves 0.03\% gap on TSP-50 and 2.18\% on TSP-500. It matches EDISCO-Full's performance on TSP-1000 at 2.85\% gap. Compared to T2T, which achieves 0.04\% on TSP-50 with 5.3M parameters, EDISCO-Medium achieves comparable performance (0.03\%) with 1.9M parameters and the same 500K training instances used by EDISCO-Full.

EDISCO-Small (1.3M parameters) uses about 24\% as many parameters as baselines and requires the least training data: 300K for TSP-50/100, 40K for TSP-500, and 2K for TSP-10000. It achieves 0.08\% gap on TSP-50 and 0.7\% on TSP-100. On TSP-500, with 40K training instances, it achieves 4.18\% gap, compared to DIFUSCO's 9.41\% with 128K instances. On TSP-10000, EDISCO-Small achieves 3.18\% gap, which outperforms DIFUSCO (8.95\%) and is comparable to DISCO (2.90\%).

The results for the amount of training data represent the minimum numbers required to reach the reported EDISCO gaps. After this, even with increased training data, there is no noticeable improvement in the optimality gaps.

\subsection{Training Resource Requirements}
\label{subsec:training_resources}

Table~\ref{tab:training_resources} presents comprehensive training resource requirements for EDISCO-Full, demonstrating its efficiency on commodity hardware. EDISCO trains on a single NVIDIA RTX A6000 GPU, using graph sparsification and fewer training instances. The combination of E(2)-equivariance and continuous-time diffusion enables 3× better sample efficiency (500K vs 1.5M training instances for TSP-50/100), reducing the required number of training instances. Measured training time and peak memory are reported below.

\begin{table}[!h]
\centering
\caption{Training resource requirements for EDISCO-Full on a single NVIDIA RTX A6000 GPU (48\,GB). Peak memory usage includes model parameters, optimizer states, and batch processing. EDISCO achieves state-of-the-art performance using 33--50\% of the training data of baseline methods while training on a single GPU.}
\label{tab:training_resources}
\adjustbox{max width=\textwidth}{%
\begin{tabular}{@{}lccccc@{}}
\toprule
\textbf{Problem Scale} & \textbf{Dataset Size} & \textbf{Batch Size} & \textbf{Training Epochs} & \textbf{Training Time}$\downarrow$ & \textbf{Peak Memory}$\downarrow$ \\
\midrule
TSP-50  & 500K & 64 & 100 & 18h 45m  & 19.2 GB \\
TSP-100 & 500K & 32 & 100 & 68h 30m  & 26.8 GB \\
TSP-500 & 60K  & 16 & 50  & 31h 15m  & 23.5 GB \\
TSP-1000 & 30K & 8  & 50  & 61h 20m  & 39.2 GB \\
TSP-10000 & 3K & 4  & 50  & 17h 40m  & 44.8 GB \\
\bottomrule
\end{tabular}%
}
\end{table}

Graph sparsification ($k$NN with $k=50$ for TSP-500 and $k=100$ for TSP-1000/10000) reduces memory complexity from $O(n^2)$ to $O(kn)$, enabling practical training on a single GPU even for TSP-10000. TSP-500/1000/10000 use curriculum learning initialized from smaller problem checkpoints, further improving sample efficiency. Peak memory remains within the A6000's 48\,GB capacity across all reported TSP scales.

\subsection{Visualization of EDISCO's E(2)-Equivariance}
\label{subsec:visualization}
Figure~\ref{fig:edisco_original} illustrates the denoising process of EDISCO on a standard TSP-100 instance. To illustrate the E(2)-equivariant behavior of the network qualitatively, we present the same denoising process on rotated coordinate sets in Figures~\ref{fig:edisco_45} and~\ref{fig:edisco_90}. Only the greedy decoder is used without any other post-processing techniques. The visualization shows progression from pure noise ($t=1.0$) to clean tours ($t=0.0$) across five independent sampling rounds. The rotated coordinate sets are renormalized to the unit square for display, so the per-figure tour lengths reflect each panel's display scale rather than absolute coordinate length; the structural correspondence of the denoising trajectories is what illustrates the rotation-invariant representations established by Corollary~\ref{cor:orbit_sharing}, with the formal length-invariance argument in Appendix~\ref{subsec:equivarince}.

\begin{figure}[!h]
\centering
\includegraphics[width=\textwidth]{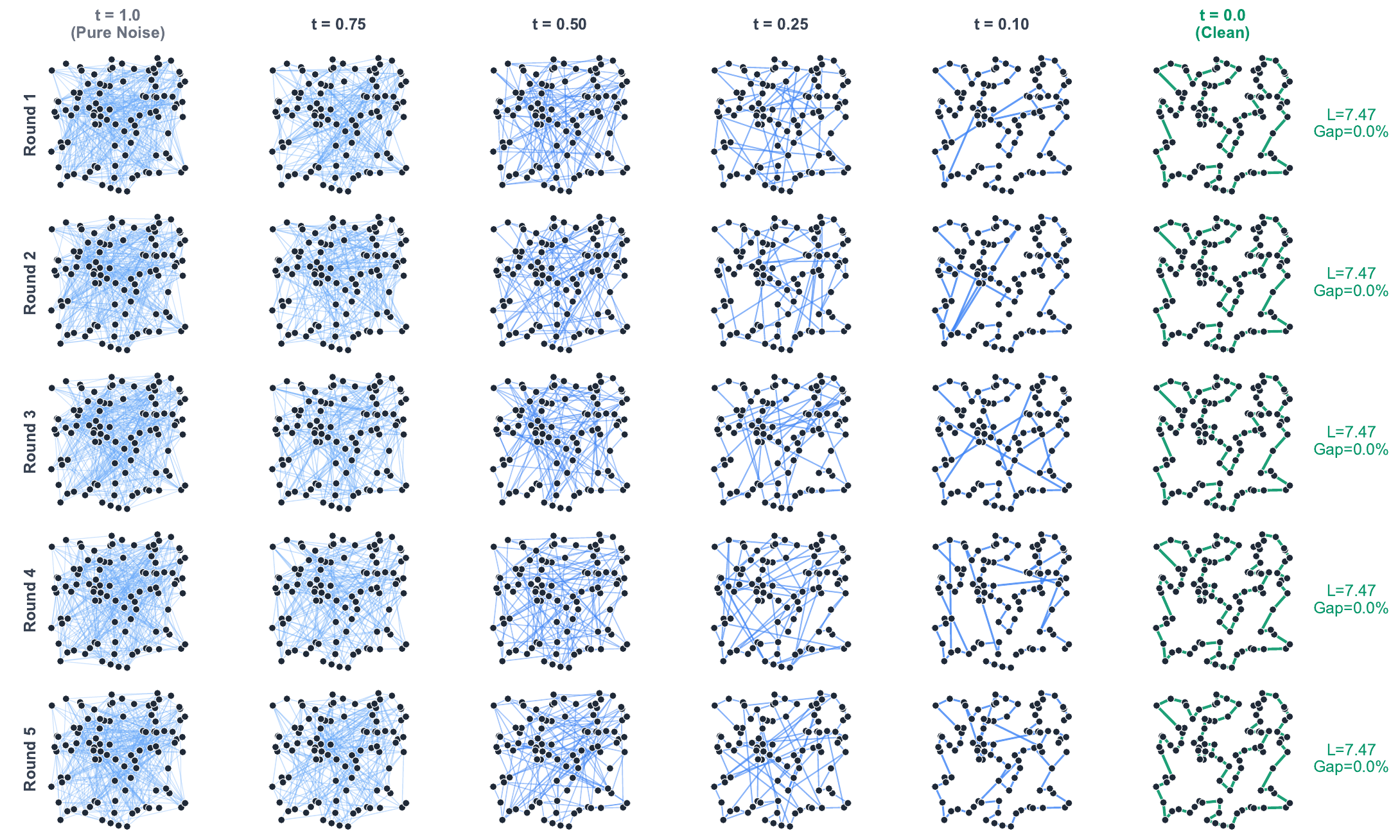}
\caption{Five rounds of the denoising process of EDISCO on the original TSP instance.}
\label{fig:edisco_original}
\end{figure}

\begin{figure}[!h]
\centering
\includegraphics[width=\textwidth]{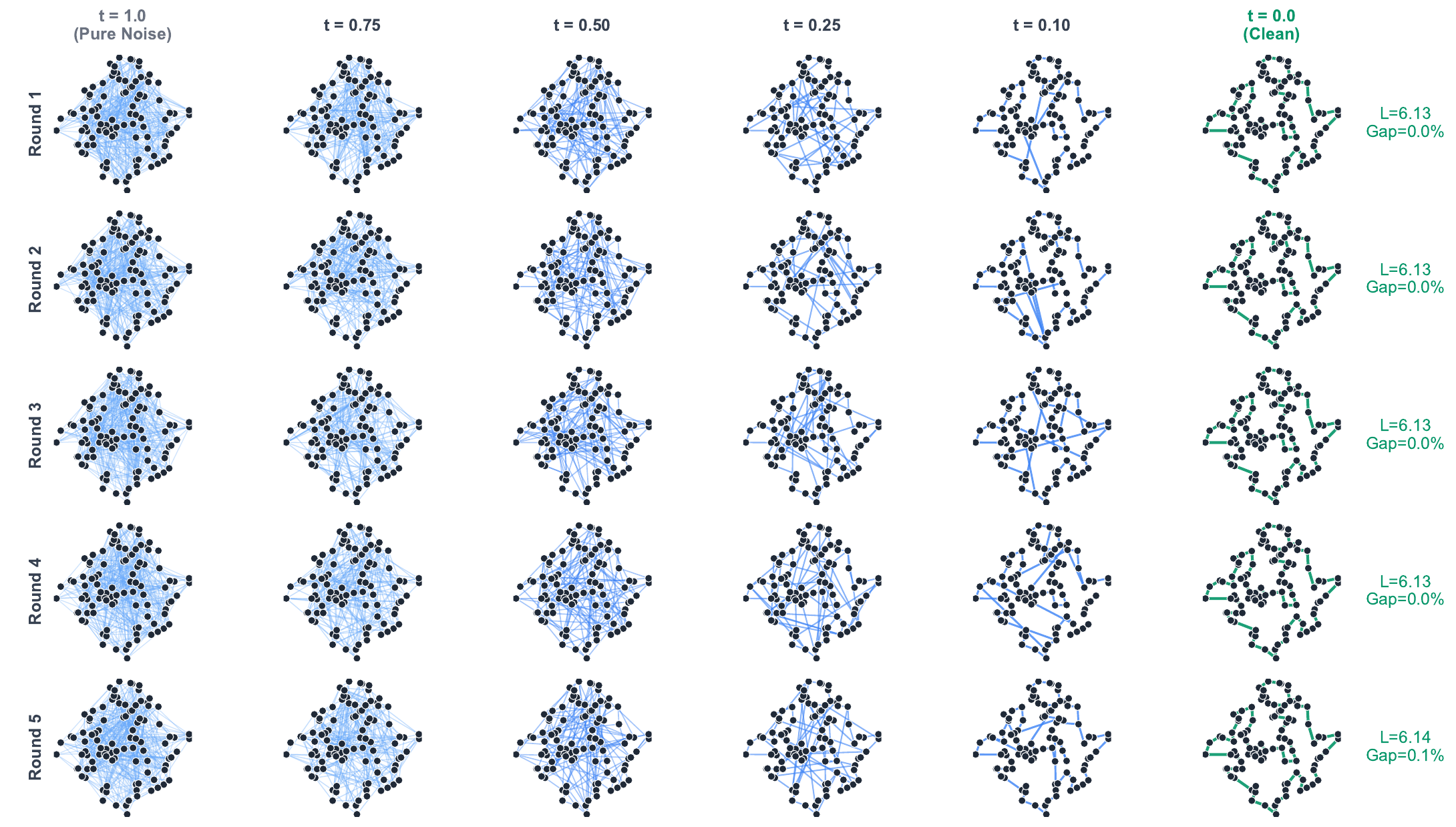}
\caption{Five rounds of the denoising process of EDISCO on the $45^\circ$ rotated instance.}
\label{fig:edisco_45}
\end{figure}

\begin{figure}[!h]
\centering
\includegraphics[width=\textwidth]{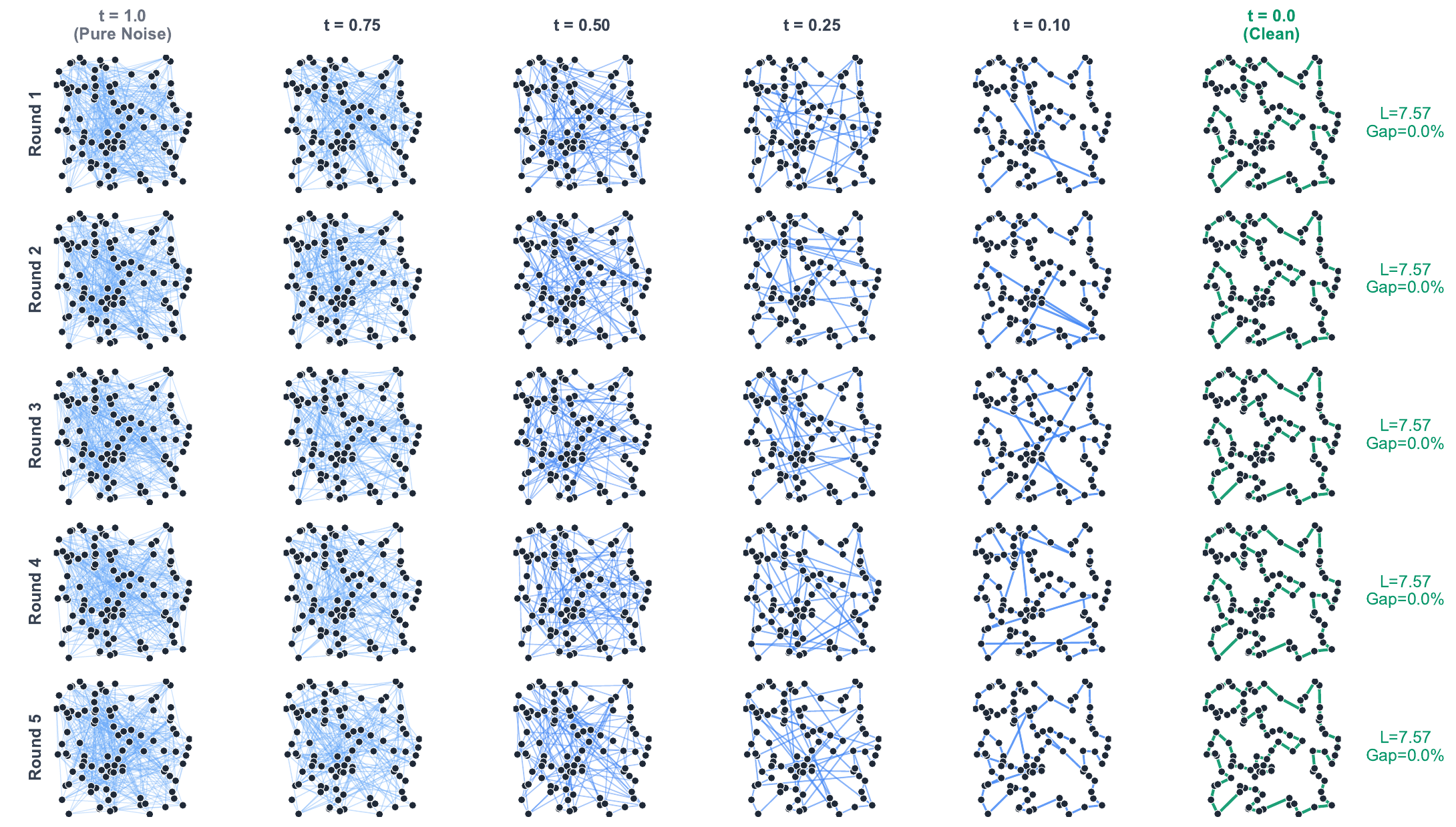}
\caption{Five rounds of the denoising process of EDISCO on the $90^\circ$ rotated instance.}
\label{fig:edisco_90}
\end{figure}

\subsection{Compute and Runtime Conventions for Recent Competitors}
\label{subsec:recent_competitors_pareto}

This appendix documents the compute and runtime conventions used for the recent-competitor entries (StruDiCO, GenSCO, NEXCO, COExpander) in Table~\ref{tab:tsp_main} of Section~\ref{subsec:tsp_main}. The compute axis differs by method: number of denoising steps for diffusion methods, number of search-driven generation cycles for GenSCO~\citep{li2025gensco}, and iteration depth of the adaptive expansion for COExpander~\citep{macoexpander}. EDISCO is reported at both PNDM-50 and DEIS-2-5 configurations. The Table~\ref{tab:tsp_main} snapshot uses the minimum budget at which each method achieves its reported gap.

\paragraph{Compute budget conventions.} All wall-clock times are per-instance, measured at batch size 1 in single-thread mode on the hardware described in Appendix~\ref{subsec:hardware}, following the protocol of~\citet{wang2025strudico,wang2026nexco}. Each method is run at its own canonical decoding configuration (denoising step counts for diffusion methods, search cycles for GenSCO, expansion iterations for COExpander), with timings following the stated measurement conventions and computational budgets remaining method-specific.

\end{document}